\documentclass[10pt,a4paper]{article}

\usepackage[utf8]{inputenc}
\usepackage[T1]{fontenc}
\usepackage{lmodern}
\input{glyphtounicode}
\usepackage[margin=2.4cm]{geometry}
\usepackage{graphicx}
\usepackage{amsmath,amssymb}
\usepackage{booktabs}
\usepackage{caption}
\usepackage{enumitem}
\usepackage{xcolor}
\definecolor{clsgrey}{HTML}{7F8C8D}   
\definecolor{clsorange}{HTML}{E08E0B}
\definecolor{clsblue}{HTML}{2B6CB0}
\definecolor{clspurple}{HTML}{8E44AD}
\usepackage[hidelinks]{hyperref}
\usepackage{microtype}
\usepackage{amsthm}

\theoremstyle{plain}
\newtheorem{theorem}{Theorem}
\newtheorem{proposition}{Proposition}
\newtheorem{corollary}{Corollary}
\theoremstyle{definition}

\graphicspath{{figures/}}
\setlist{itemsep=1pt,topsep=2pt}

\newcommand{\Sig}{\Sigma}
\newcommand{\Del}{\Delta}
\newcommand{\R}{\mathbb{R}}
\DeclareMathOperator{\range}{range}

\newcommand{\sstar}{s^{\ast}}
\newcommand{\IGFA}{igfa}
\newcommand{\OGD}{\textsc{ogd}}

\title{\textbf{Interference and Retention in Continual Learning}}

\author{Julius St\"ork$^{1,\ast}$\\[2pt]
\small $^1$VARTA Microbattery GmbH, Ellwangen, Germany\\
\small $^\ast$Corresponding author: \texttt{julius.stoerk@varta-ag.com}\\[2pt]
\small \textit{10.07.2026}}
\date{}

\begin{document}
\maketitle

\begin{abstract}
\noindent
Continual learning commonly relies on post-hoc mechanisms such as replay, elastic
regularization, or distillation. This work argues that forgetting should instead be modeled directly as interference between tasks.
In the frozen-feature regime,
forgetting from learning a new task is exactly the interference energy induced on the old task.
In deep networks, the same quantity is recovered through path-averaged curvature with minimal additional forward passes.

When task supports are disjoint, forgetting can be eliminated structurally and when task supports overlap in conflicting directions, a non-zero distortion floor is unavoidable.
The same geometry optimally merges models through task-aware orthogonalization.
From this analysis we derive Interference-Gated Functional
Allocation (\IGFA), a replay-free, Fisher-free method that shares
directions when tasks align and protects them when they conflict.
Across benchmarks, igfa achieves lossless retention when tasks are structurally
separable and moves unavoidable cost from irreversible forgetting into deferred but recoverable plasticity when they are not.
It matches the strongest replay-free structural baselines on dissimilar-task streams and
improves on unconditional projection when similarity makes transfer worth preserving.
\end{abstract}

\vspace{1ex}
\noindent\textbf{Keywords:} continual learning; catastrophic forgetting; model
merging; neural tangent kernel; gradient projection; rate--distortion; representation geometry

\begin{figure}[t]\centering
\includegraphics[width=0.9\textwidth]{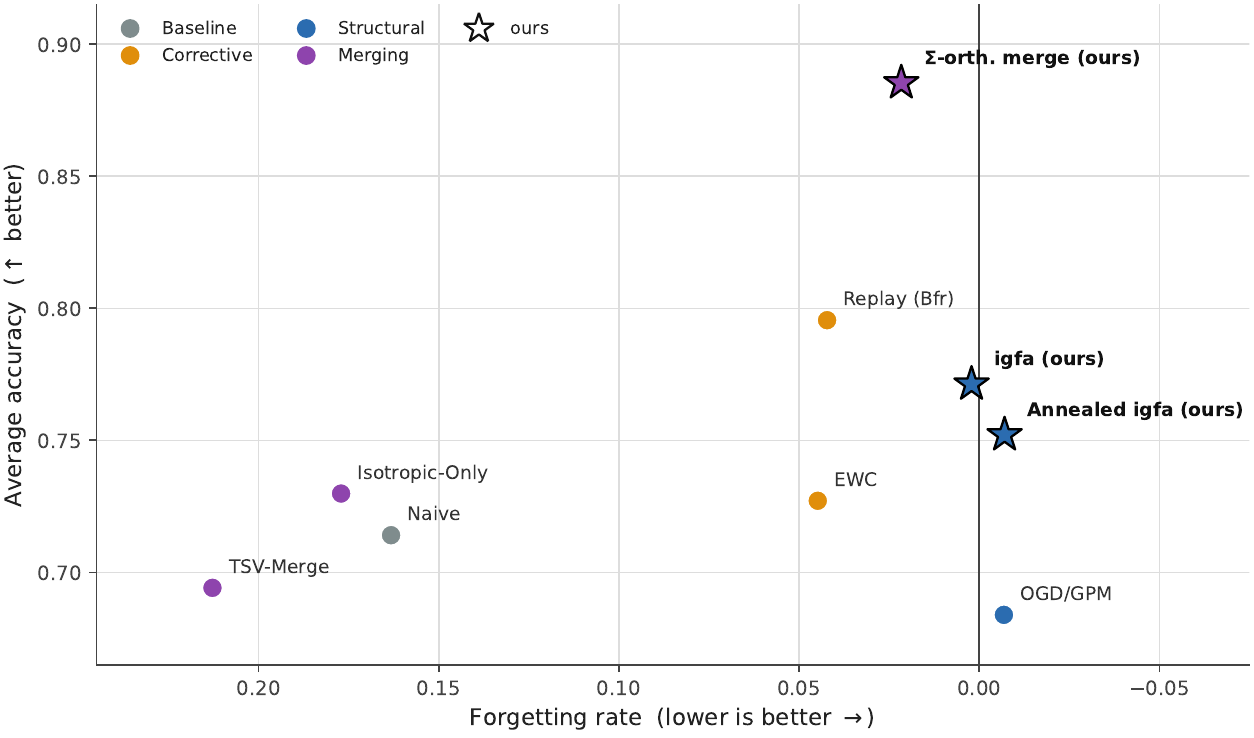}
\caption{\textbf{Accuracy versus forgetting.}  Nine methods are tested on
the exact-A1 Rotated-Digits benchmark (five rotations, frozen random features; mean over
$5$ seeds) and differentiated by method class: baseline
(\textcolor{clsgrey}{$\bullet$}), corrective (\textcolor{clsorange}{$\bullet$}),
structural (\textcolor{clsblue}{$\bullet$}), and merging
(\textcolor{clspurple}{$\bullet$}); our methods are marked ``(ours)'' and drawn as
stars in their class colour (\textcolor{clsblue}{$\bigstar$},
\textcolor{clspurple}{$\bigstar$}); higher accuracy (up) and lower forgetting (right)
are better. Corrective
(EWC, replay) and structural (\OGD{}/\textsc{gpm}, \IGFA) methods learn the
stream sequentially, whereas merging methods solve each rotation independently and
combine offline.
\emph{(i)} The $\Sig$-orthogonal merge of Theorem~\ref{thm:merge} attains near-oracle accuracy
($0.885$ against a $0.903$ per-task ceiling) with the smallest
post-merge residual floor ($D=0.032$, versus $0.20$ and $0.38$ for the isotropic and
Euclidean merges), confirming that the task-induced, non-Euclidean $\Sig$-metric
is the correct inner product for merging. \emph{(ii)} Among online, buffer- and
Fisher-free methods, our signed gate (\IGFA) attains the highest accuracy; annealing the
protection boundary (Annealed \IGFA) trades slightly lower accuracy for lower, even negative,
forgetting. Full numbers
with confidence intervals and the residual-$D$ ablation are in Sec.~\ref{si:compare}.}
\label{fig:teaser}
\end{figure}

\section{Introduction}
\label{sec:intro}

A model trained sequentially on task $B$ after task $A$ often degrades on task $A$,
a phenomenon known as catastrophic forgetting. Networks trained by stochastic gradient
descent (SGD) or its variants update from the current minibatch alone (or a smoothed
average of a short window of minibatches); the update is therefore \emph{oblivious to past
knowledge}. This obliviousness is desirable when the training data are
i.i.d., but harmful once the training distribution shifts over time in the continual
setting. Standard approaches such as replay
\cite{lopezpaz2017,chaudhry2019,buzzega2020}, elastic weight consolidation
\cite{kirkpatrick2017,zenke2017}, and distillation \cite{li2017lwf} mostly repair forgetting after interference has already occurred.
What is still missing is a predictive account of when interference is avoidable,
when it is inevitable, and what structure an update must satisfy to retain
old behavior without sacrificing useful transfer.

This work focuses on the linear-on-features regime in which
a frozen feature extractor $\phi(x)$ is paired with a trainable linear head,
covering the common frozen-backbone and parameter-efficient fine-tuning
(PEFT) settings and the first-order neural tangent \cite{jacot2018} approximation of
a fully trained network.
The forgetting of an earlier task $A$ after an
update $\Del$ caused by learning task $B$
is exactly the interference energy
$\tfrac12\,\Del^\top\Sig_A\Del$, where $\Sig_A\ = \mathbb{E}_{x \sim D_A}\left[\phi(x)\phi(x)^\top\right]$ denotes the feature second moment of task
$A$, measuring which feature directions are active for that task.
The same geometry predicts per-task forgetting accurately and
extends approximately to deeper networks with feature drift.

The paper makes three contributions.
\begin{enumerate}[leftmargin=1.6em,itemsep=3pt]
\item \textbf{Interference functional and removability.}
We derive an exact interference functional for forgetting.
In the frozen-feature regime, forgetting is exactly the old task’s interference energy under the new update,
which yields a clean criterion for when retention is lossless and when a distortion floor is unavoidable.
The geometry of $\Sig_A$ penalizes only
components in its active subspace, while updates in $\ker \Sig_A$ leave task
$A$'s loss unchanged. This yields a structural criterion for lossless retention:
interference is removable for disjoint task supports (reducing to ordinary
orthogonality in isotropic geometry), whereas overlapping supports imply a
non-zero distortion floor. The same quantity also admits an
information-theoretic interpretation through the task-inference problem.

\item \textbf{Optimal merging as $\Sig$-orthogonalization.}
We show that optimal merging is task-aware orthogonalization.
This unifies offline model merging and online continual learning as batch and incremental solutions for minimizing the same interference objective.
Section~\ref{sec:theory:merge} shows that post-merge loss decomposes into
task-wise fit and cross-task interference. Minimizing this objective requires
mutual $\Sig$-orthogonality of the task vectors, yielding the optimality
condition for model merging.

\item \textbf{Interference-Gated Functional Allocation.}
We introduce igfa, a replay-free structural controller. The method shares capacity when tasks align, protects it when they conflict, and retains only a low-rank subspace summary.
Section~\ref{sec:algo} introduces \IGFA, an online method motivated by the same functional (Fig.~\ref{fig:overview}).
\end{enumerate}

\begin{figure}[tb]\centering
\includegraphics[width=\textwidth]{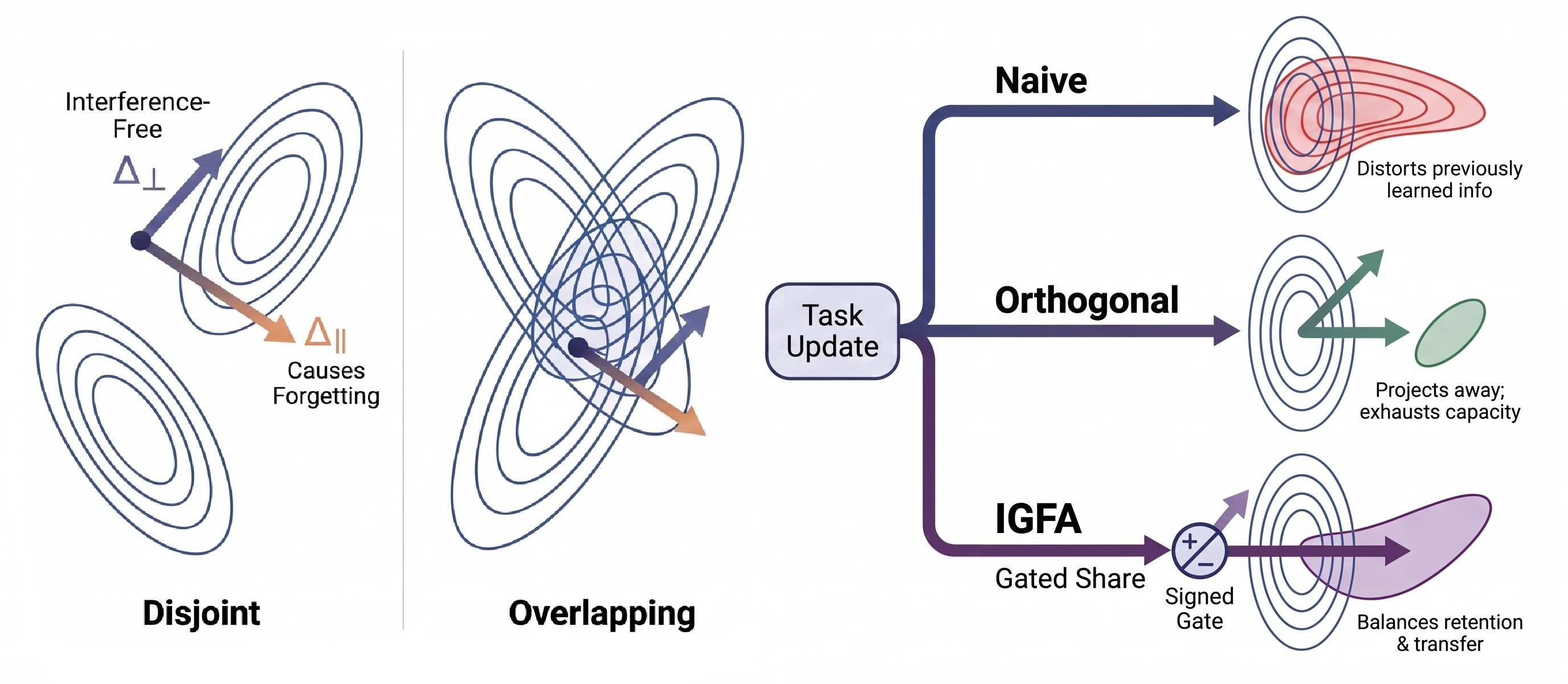}
\caption{The task-induced geometry in feature space motivates either full sharing, full separation, or selective parameter sharing.
\emph{Left (Disjoint).} When two tasks excite orthogonal feature subspaces,
the update $\Del$ is composed of a component $\Del_\parallel\in\range\Sig_A$ between the iso-loss contours that incurs forgetting
and $\Del_\perp\in\ker\Sig_A$ along the iso-loss contours where allocation is interference-free.
\emph{Centre (Overlapping).} When the supports share an active direction,
no update for $B$ can avoid $\range\Sig_A$ and a distortion floor is unavoidable.
\emph{Right.} The same functional yields three update rules. Naive descent
incurs the full interference (distorts old knowledge); orthogonal
projection (\OGD{}/\textsc{gpm} \cite{farajtabar2020,saha2021}) removes all overlap
(no forgetting, but no transfer and capacity exhaustion); \IGFA{} applies a
\emph{signed} gate---sharing aligned directions and orthogonalizing conflicting
ones---balancing retention and transfer while carrying only the occupied subspace as
state.}
\label{fig:overview}
\end{figure}

The experiments
in Section~\ref{sec:exp} are designed to test the theory directly, including lossless
allocation on disjoint supports, the relocation of unavoidable cost from forgetting
into deferred plasticity, the similarity threshold at $\sstar\approx0.26$, and the
onset of a capacity floor once the cumulative occupied rank reaches the feature
dimension, $T\,r\approx d$ (with $T$ the number of tasks in the stream, $r$ the
rank of the feature subspace each task occupies, and $d$ the feature dimension).
In the frozen-feature Split-Digits
setting, \IGFA{} matches the strongest replay-free and Fisher-free baseline at about
$0.98$ accuracy with near-zero forgetting while using no replay buffer.
Extensions to
full-network training, large-scale overlays, and direct comparison with replay at
scale are discussed in Section~\ref{sec:discussion}, with the supporting experiments
detailed in the Supporting Information.

\section{Related work}
\label{sec:related}

\paragraph{Corrective continual learning.} Replay \cite{lopezpaz2017,chaudhry2019,buzzega2020}, regularization \cite{kirkpatrick2017,zenke2017}, and distillation \cite{li2017lwf}
remain the main corrective approaches to continual learning.
They do not explicitly model interference through
task geometry however, while the herein derived interference functional predicts the
collision of the task updates and identifies when it can be removed before occuring. Replay can be
viewed as a stochastic Monte-Carlo estimator of the same task second-moment $\Sig_t$
that we estimate sample-free via a low-rank subspace.

\paragraph{Geometry- and gradient-based methods.} A related line of work controls the
update direction itself. Orthogonal Gradient Descent \cite{farajtabar2020}, Gradient
Projection Memory \cite{saha2021}, and Orthogonal Weight Modification
\cite{zeng2019owm} project updates away from subspaces associated with previous
tasks. Under a fixed-feature map, this corresponds to enforcing $\Del\in\ker\Sig_A$.
PCGrad-style methods \cite{yu2020pcgrad} resolve conflicts through pairwise gradient
comparisons rather than explicit task-subspace protection. Recent work also studies
closely related regimes: Geodesic-Aligned Gradient Projection \cite{geodesic2025}
addresses feature drift during sequential learning, and MINGLE \cite{mingle2025} uses
a null-space criterion to gate low-rank experts during test-time merging. Relative to
these approaches, the present framework derives the projection condition from an exact
forgetting functional, replaces unconditional projection with a similarity-gated
share-or-orthogonalize rule, and uses the same functional to characterize both
merging and the distortion floor. NTK-based analyses \cite{doan2021,bennani2020}
likewise relate forgetting to overlap in representation space, but do not develop the
$\Sig$-orthogonality criterion or the associated transfer gate.

\paragraph{The averaged Jacobian as a shared object with interpretability.} A
convergent notion of geometry appears in mechanistic interpretability. The Jacobian
lens of Gurnee, Lindsey, et al.~\cite{gurnee2026gws} characterizes a residual-stream activation by its
\emph{context-averaged} forward Jacobian $J_\ell=\mathbb{E}[\partial h_{\mathrm{final}}/
\partial h_\ell]$ composed with the unembedding, and identifies the low-rank span of its
active readout directions---the ``J-space''---as the subspace of \emph{verbalizable}
representations, limited in capacity and competitively occupied, a language-model global
workspace. Our interference geometry is the learning-dynamics counterpart of the same
move: where the J-lens averages the forward activation$\to$output Jacobian, we average
the Gauss--Newton curvature $\Sig_A=\mathbb{E}[J_wJ_w^\top]$ (Sec.~\ref{sec:theory:general}),
and both replace a single-point Jacobian with a distribution-averaged one for the same
stated reason---to isolate an architecture-level invariant from context-specific use.
The two induce the same partition of representation space: their causally load-bearing
J-space plays the role of our $\range\Sig_A$ (perturbations there change outputs and
incur forgetting), and the inert complement that their coordinate-patching ``leaves
unchanged'' is our $\ker\Sig_A$ (free directions). Read this way, our interference
energy and distortion floor supply a quantitative model of workspace \emph{competition}
and our occupied-rank capacity knee a rate--distortion account of its \emph{limited
capacity}; conversely, the released lens is a ready estimator of the averaged-Jacobian
subspace our method tracks at scale. We make the connection testable in
Sec.~\ref{sec:discussion}.

\paragraph{Model merging and task arithmetic.} In a parallel line of literature,
independently fine-tuned are
combined into a single model that retains useful knowledge from each task
through arithmetic on weight-space task vectors
\cite{ilharco2023,yadav2023}. These methods usually operate in Euclidean parameter
space, whereas the present analysis identifies the task-dependent metric induced by
$\Sig_t$ as the relevant geometry. Under this metric, optimal merging becomes mutual
$\Sig$-orthogonalization, with ordinary weight-space orthogonality recovered only when
$\Sig_t\propto I$. Task Singular Vectors \cite{gargiulo2025tsv} promotes orthogonality
by whitening singular directions of task vectors; by contrast, isotropic merging
\cite{marczak2025iso} shows that shared-subspace isotropy can outperform strict
orthogonalization. WUDI-Merging \cite{cheng2025wudi} is particularly close in spirit
because it links task vectors to input subspaces and minimizes interference without
data; the key difference in our work is that the metric and residual are \emph{derived} from
an exact forgetting functional rather than an approximate subspace argument.
Tangent-space continual merging \cite{holistic2026} also operates in a first-order
regime, but with similar trade-offs between scalability and functional specificity.
The common perspective adopted here is that the same interference functional governs
both offline merging and online continual allocation.

\paragraph{Scenarios, scale, and plasticity.} The standard task-, domain-, and
class-incremental taxonomy \cite{vandeven2022} distinguishes whether task identity is
given or must be inferred. In this context, the distortion floor developed later
provides a quantitative link between support overlap and task inference. Related
empirical and analytical work has examined the effects of scale and pretraining
\cite{ramasesh2022}, frozen-backbone prompt and adapter methods
\cite{wang2022l2p,wang2022dualprompt,smith2023coda}, which learn prompt pools with
input-conditioned selection under a class-incremental protocol and provide strong empirical
accuracy without structural retention guarantees (differentiated in
Sec.~\ref{sec:exp:cifar}), forgetting in linear models
\cite{evron2022}, plasticity loss \cite{dohare2024}, and convergence or task ordering
in continual linear classification \cite{jung2025,lihiratani2025}.

\section{Preliminaries: the interference functional}
\label{sec:theory}
This section specifies the setting and assumptions (Sec.~\ref{sec:theory:setup}), derives the
interference functional and the removability dichotomy (Sec.~\ref{sec:theory:functional}),
generalizes the identity beyond frozen features (Sec.~\ref{sec:theory:general}), and derives
the optimal merge together with the distortion floor (Sec.~\ref{sec:theory:merge}).

\begin{figure}[tb]\centering
\includegraphics[width=\textwidth]{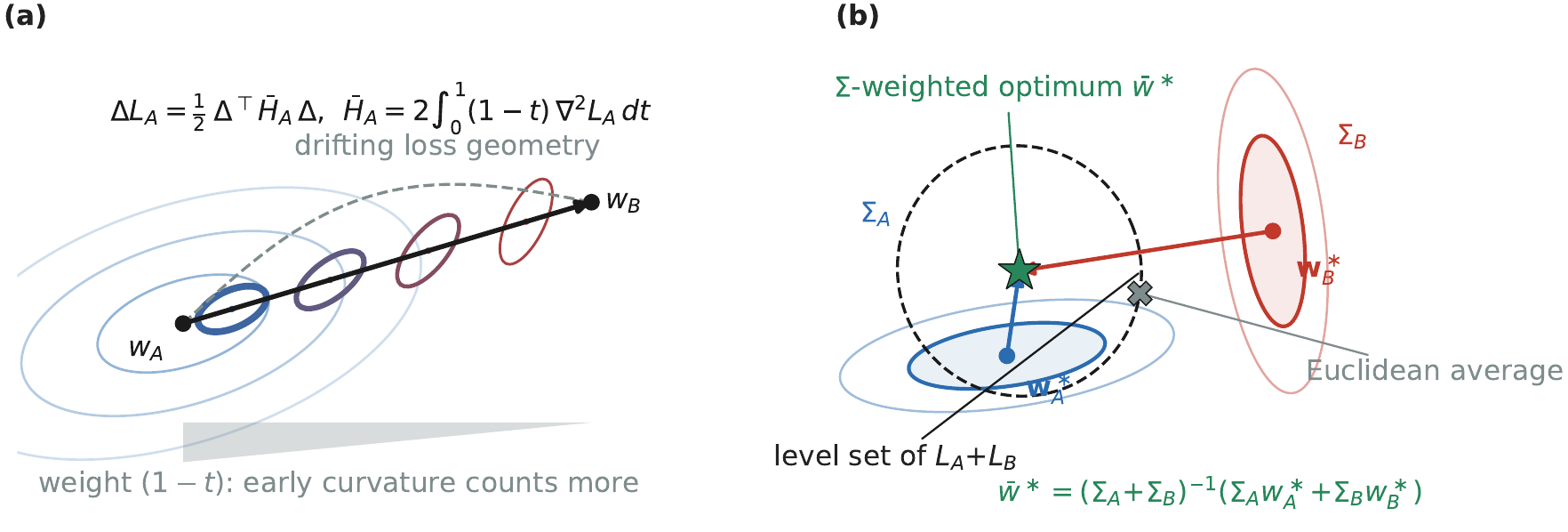}
\caption{Geometry of the general identities. \emph{(a)} Beyond frozen features the
quadratic form survives with $\Sig_A$ replaced by the path-averaged curvature
$\bar H_A$: the local curvature of $L_A$ rotates and rescales along the segment
$w_A\!\to\!w_B$ (ellipses, blue$\to$red), earlier points weighted by $(1-t)$ (wedge),
and the straight chord stands in for the drifted geodesic (dashed)
(Theorem~\ref{thm:general}). \emph{(b)} Optimal merging is $\Sig$-weighted, not
Euclidean: drawn from the exact $2{\times}2$ geometry, the joint optimum $\bar w^\ast$
(star) minimizes $L_A{+}L_B$, while the Euclidean average (cross) sits strictly outside
the optimum's level set and pays avoidable interference (Theorem~\ref{thm:merge}).}
\label{fig:thmgeom}
\end{figure}

\subsection{Setting and assumptions}
\label{sec:theory:setup}
Let $\phi:\mathcal X\to\R^d$ be a fixed feature map and $w\in\R^d$ a trainable linear
head with prediction $\hat y=w^\top\phi(x)$. For task $t$, let
$\Sig_t=\mathbb{E}_{x\sim\mathcal D_t}[\phi(x)\phi(x)^\top]\succeq0$ denote the feature
second moment, and let $w_t^\ast$ be a realizable optimum. The per-task excess loss is
\begin{equation}
L_t(w)=\tfrac12\,(w-w_t^\ast)^\top\Sig_t\,(w-w_t^\ast)\;\ge 0.
\label{eq:excess}
\end{equation}
Under squared loss, this coincides with the excess population risk.
We write the target as $y=\phi(x)^\top w_t^\ast+\varepsilon$ to separate the predictable linear component
from the irreducible residual variation. With $y=\phi(x)^\top w_t^\ast+\varepsilon$, $\mathbb{E}[\varepsilon]=0$, and
$\varepsilon\!\perp\!\phi(x)$, the squared risk satisfies $\mathbb{E}[(\hat y-y)^2]/2=L_t(w)+\tfrac12
\mathbb{E}[\varepsilon^2]$, so all subsequent loss differences depend only on $L_t(w)$.

Three
assumptions define this regime: \textbf{(A1) frozen features}---$\phi$ is fixed during training,
exact for frozen-backbone and PEFT settings and the first-order NTK \cite{jacot2018} model of
end-to-end training, so full-network claims are first-order and are tested in
Sec.~\ref{sec:exp:breakdown}; \textbf{(A2) realizability}---each $w_t^\ast$ attains zero
excess loss, so mis-specification adds only a task-constant and leaves the interference terms
untouched; and \textbf{(A3) the quadratic regime}---training minimizes Eq.~\eqref{eq:excess}
by (projected) gradient descent, so motion is confined to the excited subspace while
directions in $\ker\Sig_t$ leave $L_t$ unchanged.

\subsection{The functional and the removability dichotomy}
\label{sec:theory:functional}
Train task $A$ from a base point $w_0$ to convergence. Under A3, gradient descent
moves only in $\range\Sig_A$, so the converged solution $w_A$ satisfies $L_A(w_A)=0$
and equals $w_A^\ast$ up to an arbitrary component in $\ker\Sig_A$. For loss
differences we therefore take $w_A=w_A^\ast$. Now train task $B$, reaching $w_B$, and
define $\Del=w_B-w_A^\ast$.

\begin{theorem}[Forgetting is interference energy]
\label{thm:functional}
Under A1--A3, the forgetting of task $A$ caused by learning $B$ is
\begin{equation}
\Del L_A=L_A(w_B)-L_A(w_A^\ast)=\tfrac12\,\Del^\top\Sig_A\,\Del,
\qquad \Del=w_B-w_A^\ast.
\label{eq:functional}
\end{equation}
\end{theorem}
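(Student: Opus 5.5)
The identity follows by direct substitution into the quadratic excess loss of Eq.~\eqref{eq:excess}; the only real work is checking that the reduction $w_A=w_A^\ast$ is legitimate. Under A1 the feature map, and hence $\Sig_A$, is the same matrix before and after training on $B$. Under A2 the loss $L_A$ has the exact form $\tfrac12(w-w_A^\ast)^\top\Sig_A(w-w_A^\ast)$, with no misspecification cross-term. The noise decomposition stated after Eq.~\eqref{eq:excess} shows that the squared risk equals $L_A(w)$ plus the constant $\tfrac12\mathbb{E}[\varepsilon^2]$. That constant cancels in any difference of risks, so the risk difference and the excess-loss difference coincide.

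The steps, in order, are as follows.
\begin{enumerate}[leftmargin=1.6em]
\item \emph{The baseline term vanishes.} By definition $L_A(w_A^\ast)=0$.
\item \emph{Substitute the endpoint.} Writing $w_B=w_A^\ast+\Del$ gives $L_A(w_B)=\tfrac12\Del^\top\Sig_A\Del$ immediately.
\item \emph{Handle the actual converged point.} Under A3, gradient descent on $L_A$ from $w_0$ has iterates $w_{k+1}=w_k-\eta\Sig_A(w_k-w_A^\ast)$. Each increment therefore lies in $\range\Sig_A$, so the $\ker\Sig_A$ component of $w_0$ is preserved. The limit $w_A$ thus satisfies $w_A-w_A^\ast\in\ker\Sig_A$, after replacing $w_A^\ast$ by the particular minimizer that shares $w_0$'s kernel component (minimizers of $L_A$ form the affine set $w_A^\ast+\ker\Sig_A$).
\item \emph{Check independence of the representative.} For any $k\in\ker\Sig_A$, $\Sig_A k=0$, so $(w-w_A^\ast-k)^\top\Sig_A(w-w_A^\ast-k)=(w-w_A^\ast)^\top\Sig_A(w-w_A^\ast)$. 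Hence $L_A$ is invariant under the choice of minimizer, and defining $\Del$ relative to $w_A^\ast$ or to the actual $w_A$ yields the same value of $\tfrac12\Del^\top\Sig_A\Del$.
\end{enumerate}

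The main obstacle is conceptual rather than computational. One must confirm that no first-order term $\nabla L_A(w_A^\ast)^\top\Del$ appears, which would break exactness. That term is $(\Sig_A(w_A^\ast-w_A^\ast))^\top\Del=0$. Because $L_A$ is exactly quadratic, there are also no higher-order terms, so the second-order Taylor expansion about $w_A^\ast$ is exact.

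I would also note explicitly that nothing is assumed about how $w_B$ was obtained. Whether by plain or projected descent on $L_B$, the identity holds for an arbitrary $\Del$. This is what makes it the foundation for the later removability dichotomy: $\Del L_A=0$ if and only if $\Sig_A^{1/2}\Del=0$, that is, $\Del\in\ker\Sig_A$.
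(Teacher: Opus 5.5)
Your proposal is correct and follows the paper's proof: substitute $w_B=w_A^\ast+\Del$ into the quadratic excess loss, with $L_A(w_A^\ast)=0$ and $\Sig_A$ held fixed by A1. Your extra steps on the $\ker\Sig_A$ component of the converged $w_A$ make explicit the paper's pre-theorem remark that $w_A$ equals $w_A^\ast$ only up to a kernel component, which does not change the energy.
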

\begin{proof}
Since $L_A(w_A^\ast)=0$ by A2, substituting $w_B=w_A^\ast+\Del$ into
Eq.~\eqref{eq:excess} yields $L_A(w_B)=\tfrac12\Del^\top\Sig_A\Del$. No approximation
is involved; the only essential assumption is that $\phi$ remains fixed, so $\Sig_A$
is unchanged before and after learning $B$.
\end{proof}

\noindent Decompose the update as $\Del=\Del_\parallel+\Del_\perp$, with
$\Del_\perp\in\ker\Sig_A$ and $\Del_\parallel\in\range\Sig_A$. Then $\Del L_A=
\tfrac12\Del_\parallel^\top\Sig_A\Del_\parallel$, so only the component of the update
lying in task $A$'s active subspace contributes to forgetting
(Fig.~\ref{fig:overview}, left).

\begin{corollary}[Removability dichotomy]
\label{cor:removability}
An update is lossless for $A$ if and only if $\Del\in\ker\Sig_A$. Consequently there
exists a way to learn $B$ with zero forgetting of $A$ if $w_B^\ast$ is reachable
through $\ker\Sig_A$, i.e. if $w_B^\ast-w_A^\ast\in\ker\Sig_A$. Even when this fails,
$L_B$ can still be strictly reduced without forgetting whenever $w_B^\ast-w_A^\ast$ has
a non-zero component in $\ker\Sig_A\cap\range\Sig_B$. If $\range\Sig_A$ and
$\range\Sig_B$ share an active direction on which $w_A^\ast$ and $w_B^\ast$ disagree,
then no lossless update reaches $w_B^\ast$, and a positive floor remains
(Theorem~\ref{thm:floor}).
\end{corollary}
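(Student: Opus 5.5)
The plan is to derive all four clauses of the corollary from Theorem~\ref{thm:functional}, using one linear-algebra fact about $\Sig_A$. Because $\Sig_A\succeq0$ is symmetric, we have $\Del^\top\Sig_A\Del=\|\Sig_A^{1/2}\Del\|^2$. This vanishes iff $\Sig_A^{1/2}\Del=0$, and that holds iff $\Sig_A\Del=0$, since $\ker\Sig_A^{1/2}=\ker\Sig_A$ by the spectral decomposition. The same fact gives the orthogonal decomposition $\R^d=\range\Sig_A\oplus\ker\Sig_A$, which justifies the split $\Del=\Del_\parallel+\Del_\perp$ used just before the corollary.

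\textbf{Step 1 (first clause).} By Eq.~\eqref{eq:functional}, $\Del L_A=0$ iff $\tfrac12\Del^\top\Sig_A\Del=0$, and by the fact above this holds iff $\Del\in\ker\Sig_A$. This gives the ``if and only if''.

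\textbf{Step 2 (reachability).} Set $\Del=w_B^\ast-w_A^\ast$, so that $w_B=w_B^\ast$ and $L_B(w_B)=0$. If this $\Del$ lies in $\ker\Sig_A$, Step~1 gives zero forgetting while $B$ is learned exactly. I will also note the weaker sufficient condition that $\Sig_B$-equivalent targets suffice. Any $w$ with $w-w_B^\ast\in\ker\Sig_B$ also attains $L_B=0$. So the condition can be relaxed to $(w_B^\ast-w_A^\ast)\in\ker\Sig_A+\ker\Sig_B$, but I will state only what the corollary claims.

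\textbf{Step 3 (partial progress).} Let $g=w_B^\ast-w_A^\ast$ and let $P$ be the orthogonal projector onto $\ker\Sig_A$. Take the update $\Del=\eta P\Sig_B g$, which is projected gradient descent on $L_B$ started at $w_A^\ast$, since $\nabla L_B(w_A^\ast)=-\Sig_B g$. This $\Del$ lies in $\ker\Sig_A$, so it is lossless for $A$ by Step~1. Expanding $L_B$ gives
\[
L_B(w_A^\ast+\Del)=L_B(w_A^\ast)-\eta\, g^\top\Sig_B P\Sig_B g+O(\eta^2).
\]
A strict decrease therefore needs $\|P\Sig_B g\|^2>0$, i.e. $P\Sig_B g\neq0$.

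This is the step I expect to be the main obstacle: the stated hypothesis (a nonzero component of $g$ in $\ker\Sig_A\cap\range\Sig_B$) has to be turned into $P\Sig_B g\neq0$. I would try a direct argument first. Take $u\in\ker\Sig_A\cap\range\Sig_B$ with $u^\top g\neq0$, and use $u=\Sig_B v$. If the projected gradient vanished, then $u^\top\Sig_B g=0$ for every such $u$ restricted to $\ker\Sig_A$. The difficulty is that this does not immediately contradict $u^\top g\neq0$, because $\Sig_B$ mixes components. A cleaner route is to try $\Del=\eta u$ directly. Then
\[
L_B=L_B(w_A^\ast)-\eta\,u^\top\Sig_B g+\tfrac{\eta^2}{2}u^\top\Sig_B u,
\]
so strict decrease needs $u^\top\Sig_B g\neq0$, with the sign of $\eta$ chosen accordingly. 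If this cannot be derived from the literal hypothesis, I will interpret ``component'' in the $\Sig_B$-inner product, i.e. $\langle u,g\rangle_{\Sig_B}\neq0$. That is the natural metric for $L_B$, and under it the claim is exactly the first-order computation above.

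\textbf{Step 4 (floor).} Suppose $v\in\range\Sig_A\cap\range\Sig_B$ with $v^\top\Sig_A g\neq0$ or $v^\top\Sig_B g\neq0$; this is my reading of the hypothesis that the two tasks share an active direction $v$ on which $w_A^\ast$ and $w_B^\ast$ disagree. Then $g\notin\ker\Sig_A$, so the lossless update of Step~2 (the one reaching $w_B^\ast$) is unavailable, and by Step~1 every update that reaches $w_B^\ast$ incurs positive forgetting. The quantitative floor itself I will simply defer to Theorem~\ref{thm:floor}, as the statement does.
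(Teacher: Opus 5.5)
Your Steps 1--3 follow the paper's own argument: $\Del^\top\Sig_A\Del=\|\Sig_A^{1/2}\Del\|^2$ characterizes losslessness, and projected gradient descent from $w_A^\ast$ on $w_A^\ast+\ker\Sig_A$ strictly lowers $L_B$ exactly when $P\Sig_B g\neq0$. The obstacle you flag in Step~3 is real. No argument can get past it, because the third clause is false as literally stated.

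Take $d=2$, $\Sig_A=e_2e_2^\top$ (so $\ker\Sig_A=\mathrm{span}(e_1)$), $\Sig_B=\begin{pmatrix}1&1\\1&2\end{pmatrix}\succ0$ (so $\ker\Sig_A\cap\range\Sig_B=\mathrm{span}(e_1)$), and $g=(1,-1)^\top$. Then $g$ has component $1$ along $e_1$. But $\Sig_B g=-e_2$, so $P\Sig_B g=0$. Every lossless update $\Del=te_1$ gives $L_B(w_A^\ast+te_1)=\tfrac12(1+t^2)\ge L_B(w_A^\ast)$, so no lossless update reduces $L_B$.

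The paper's proof makes the substitution you contemplate without comment. It proves strict reduction ``whenever $\Sig_B(w_B^\ast-w_A^\ast)$ has a non-zero component in $\ker\Sig_A$'', i.e.\ $P\Sig_B g\neq0$, and never links this to the stated hypothesis. So commit to the repair rather than leaving it conditional, and use the paper's form. $P\Sig_B g\neq0$ is both sufficient and necessary: for $\Del\in\ker\Sig_A$ one has $\Del^\top\Sig_B g=\Del^\top P\Sig_B g$, so if $P\Sig_B g=0$ the change in $L_B$ is $\tfrac12\Del^\top\Sig_B\Del\ge0$. Your $\Sig_B$-inner-product version, with $u$ restricted to $\ker\Sig_A\cap\range\Sig_B$, is sufficient but strictly stronger. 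For example, with $\ker\Sig_A=\mathrm{span}\{(1,1)^\top\}$, $\Sig_B=e_1e_1^\top$, $g=e_1$, the intersection is $\{0\}$ yet $P\Sig_B g\neq0$.

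Step~4 has a genuine slip. Your first disjunct works, but the disjunct $v^\top\Sig_B g\neq0$ does not give $g\notin\ker\Sig_A$. Take $\Sig_A=e_1e_1^\top$, $\Sig_B=\begin{pmatrix}2&1\\1&2\end{pmatrix}$, $v=e_1$, $g=e_2$. Then $v^\top\Sig_B g=1$, yet $\Del=g\in\ker\Sig_A$ reaches $w_B^\ast$ losslessly.

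Read ``disagree on $u$'' as $\delta=u^\top g\neq0$, matching Theorem~\ref{thm:floor}. The clause then closes cleanly using your own Step~2 observation that the sharp criterion is $g\in\ker\Sig_A+\ker\Sig_B$. Since $(\ker\Sig_A+\ker\Sig_B)^\perp=\range\Sig_A\cap\range\Sig_B$, a shared active $u$ with $u^\top g\neq0$ forces $g\notin\ker\Sig_A+\ker\Sig_B$. Hence no lossless update attains even $L_B=0$, not merely the point $w_B^\ast$. The same fact gives $\min_\theta (L_A+L_B)>0$ (the minimum is attained at $\theta_J$), which is the qualitative floor. Theorem~\ref{thm:floor} is then needed only for the quantitative bound. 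This is also the one place where $u\in\range\Sig_B$, and not just $u\in\range\Sig_A$, is actually used.
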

\begin{proof}
Since $\Sig_A\succeq0$, $\Del^\top\Sig_A\Del=0\Leftrightarrow\Sig_A^{1/2}\Del=0
\Leftrightarrow\Del\in\ker\Sig_A$. Projected gradient descent on $L_B$ restricted to
$w_A^\ast+\ker\Sig_A$ converges to the minimizer on that affine set. It reaches
$w_B^\ast$ if $w_B^\ast-w_A^\ast\in\ker\Sig_A$, and it strictly reduces $L_B$ whenever
the projection of $-\nabla L_B(w_A^\ast)$ onto $\ker\Sig_A$ is non-zero, equivalently
whenever $\Sig_B(w_B^\ast-w_A^\ast)$ has a non-zero component in $\ker\Sig_A$. In the
disjoint-support case, $\range\Sig_B\subseteq\ker\Sig_A$, so the entire update is
lossless.
\end{proof}

Here, ``task support'' refers to the column space $\range\Sig_t$, that is, the
subspace of feature directions activated by task $t$. Disjoint supports therefore mean
orthogonal feature subspaces, not disjoint label sets.

\paragraph{Relation to standard continual-learning metrics.} Under squared
loss the per-task term $\Del L_A=\tfrac12\Del^\top\Sig_A\Del$ is exactly the per-task
summand of the Forgetting Measure and the negative of Backward Transfer; positive
backward transfer is simply $\Del L_A<0$, which the signed gate permits and unconditional
projection forbids. The distortion floor (Sec.~\ref{sec:theory:merge}) is then a
\emph{lower bound} on the average Forgetting Measure achievable by any single-head
method, set by task confusability (Sec.~\ref{si:mi}). The geometry thus explains the
benchmark metrics and bounds them.

\subsection{The architecture-general identity}
\label{sec:theory:general}
Theorem~\ref{thm:functional} assumes frozen features. In end-to-end training, feature
drift degrades the frozen-curvature prediction, especially in deeper networks and at
larger step sizes. The natural generalization is to replace $\Sig_A$ by the curvature
of $L_A$ averaged along the displacement from $w_A$ to $w_B$.

\begin{theorem}[Architecture-general forgetting identity]
\label{thm:general}
Let $L_A$ be any twice-differentiable task-$A$ loss, and let training on $B$ move the
parameters from $w_A$ to $w_B$. Writing $\Del=w_B-w_A$,
\begin{equation}
\Del L_A=\nabla L_A(w_A)^\top\Del+\tfrac12\,\Del^\top \bar H_A\,\Del,
\qquad
\bar H_A=2\!\int_0^1(1-t)\,\nabla^2 L_A(w_A+t\Del)\,dt,
\label{eq:general}
\end{equation}
where $\bar H_A$ is the Hessian of $L_A$ averaged along the segment $w_A\!\to\!w_B$. If
$w_A$ is a stationary point, then $\nabla L_A(w_A)=0$ and
$\Del L_A=\tfrac12\Del^\top\bar H_A\Del$.
\end{theorem}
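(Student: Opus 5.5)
The identity is Taylor's theorem with integral remainder, applied along the segment from $w_A$ to $w_B$. Define the scalar function $g(t)=L_A(w_A+t\Del)$ for $t\in[0,1]$. Since $L_A$ is twice differentiable, $g$ is twice differentiable. By the chain rule, $g'(t)=\nabla L_A(w_A+t\Del)^\top\Del$ and $g''(t)=\Del^\top\nabla^2L_A(w_A+t\Del)\Del$. The forgetting is $\Del L_A=g(1)-g(0)$, so the statement reduces to a one-variable expansion of $g$ about $t=0$.

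I will use the integral form of the remainder, $g(1)=g(0)+g'(0)+\int_0^1(1-t)\,g''(t)\,dt$. To derive it, write $g(1)-g(0)=\int_0^1 g'(t)\,dt$ and integrate by parts with $u=g'(t)$ and $dv=dt$, choosing the antiderivative $v=-(1-t)$. This gives
\[
\int_0^1 g'(t)\,dt=\bigl[-(1-t)g'(t)\bigr]_0^1+\int_0^1(1-t)g''(t)\,dt=g'(0)+\int_0^1(1-t)g''(t)\,dt .
\]
Substituting the chain-rule expressions, the first-order term is $\nabla L_A(w_A)^\top\Del$. In the remainder, $\Del$ does not depend on $t$, so it can be pulled outside the integral:
\[
\int_0^1(1-t)\,\Del^\top\nabla^2L_A(w_A+t\Del)\Del\,dt=\Del^\top\Bigl(\int_0^1(1-t)\nabla^2L_A(w_A+t\Del)\,dt\Bigr)\Del=\tfrac12\,\Del^\top\bar H_A\Del .
\]
The last equality is just the definition of $\bar H_A$, whose factor $2$ compensates the $\tfrac12$. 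Since $\int_0^1 2(1-t)\,dt=1$, the weights $2(1-t)$ form a probability density on $[0,1]$, so $\bar H_A$ is a genuine weighted average of the Hessians along the segment. It reduces to $\nabla^2L_A$ when the Hessian is constant, and in the frozen-feature setting it recovers $\Sig_A$, consistent with Theorem~\ref{thm:functional}.

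The main technical point is regularity. Integration by parts and the fundamental theorem of calculus for $g'$ need $g''$ to be integrable, and mere twice-differentiability does not guarantee this. I will first give the argument assuming $L_A\in C^2$ on a neighbourhood of the segment, where everything is standard. For the weaker hypothesis I would note that the identity holds whenever $g'$ is absolutely continuous on $[0,1]$, and state that as the precise requirement. Under this convention, $\bar H_A$ is well defined as a matrix because each entry is an integral of a continuous (or integrable) function of $t$.

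Finally, if $w_A$ is a stationary point then $\nabla L_A(w_A)=0$, the linear term drops out, and $\Del L_A=\tfrac12\Del^\top\bar H_A\Del$, which is the quadratic form of Theorem~\ref{thm:functional} with $\Sig_A$ replaced by $\bar H_A$.
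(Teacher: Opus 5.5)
Your proof is correct and is essentially the paper's argument. You restrict $L_A$ to the segment via $g(t)=L_A(w_A+t\Del)$, apply Taylor's theorem with integral remainder, compute $g'(0)$ and $g''(t)$ by the chain rule, and pull $\Del$ out of the integral to identify $\tfrac12\Del^\top\bar H_A\Del$. Your regularity caveat is a legitimate refinement that the paper's ``twice-differentiable'' hypothesis glosses over: integrability of $\nabla^2L_A$ along the segment, or $C^2$ near it, suffices both for the scalar identity and for $\bar H_A$ to be well defined as a matrix.
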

\noindent\emph{Intuitively}, a second-order Taylor expansion of $L_A$ along the segment
$w_A\!\to\!w_B$, with exact integral remainder, produces the gradient term and the
path-averaged Hessian (full derivation in Sec.~\ref{si:proofs}).

\noindent In the constant-curvate case $\nabla^2 L_A\equiv\Sig_A$, the path-averaged Hessian $\bar H_A$
collapses to the fixed-curvature matrix $\Sig_A$ by Theorem~\ref{thm:functional}.
For squared loss,
$\nabla^2 L_A(w)=\mathbb{E}_{\mathcal D_A}[J_w J_w^\top]+R(w)$, so $\bar H_A$ is a
path-averaged Gauss--Newton geometry, where $J_w=\partial f(x;w)/\partial w$ is the
parameter Jacobian and $R(w)$ is a residual-curvature term (Fig.~\ref{fig:thmgeom}a).
The removability, merging, and floor results therefore extend by replacing $\Sig_A$
with $\bar H_A$. In practice, $\bar H_A$ can be estimated from a small number of
evaluations of $L_A$ along the segment, without forming the Hessian explicitly. This principle motivates
 \IGFA{}  to track the running Gauss--Newton
subspace $\mathbb{E}[J_w J_w^\top]$.

The generalization in fact reaches its cleanest form not in parameter space but in
\emph{function} space, where the geometry is exact for arbitrary architectures.

\begin{theorem}[Function-space interference identity]
\label{thm:funcspace}
Let $f_A,f_B$ be the functions realized before and after learning $B$, by any
architecture and training procedure, and let $L_A(f)=\tfrac12\mathbb{E}_{\mathcal
D_A}(f(x)-y)^2$. Writing $\Del\!f=f_B-f_A$ and $r_A=f_A-y$,
\begin{equation}
\Del L_A=\mathbb{E}_{\mathcal D_A}\!\big[\Del\!f\;r_A\big]
+\tfrac12\,\mathbb{E}_{\mathcal D_A}\!\big[\Del\!f^{\,2}\big].
\label{eq:funcspace-main}
\end{equation}
If $A$ is realizable and trained to convergence ($r_A=0$), forgetting equals the
function-space interference energy $\tfrac12\|\Del\!f\|_{L^2(\mathcal D_A)}^2$. Learning
$B$ is lossless iff $\Del\!f=0$ on $\mathrm{supp}\,\mathcal D_A$; and if the training
velocity stays in the kernel of the current Gauss--Newton metric, $L_A$ is conserved
exactly in continuous time.
\end{theorem}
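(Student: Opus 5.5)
The plan is to prove the identity by expanding the square pointwise. For each $x$, write $f_B(x)-y=(f_A(x)-y)+\Del\!f(x)=r_A+\Del\!f$. Then
\begin{equation*}
\tfrac12(f_B-y)^2-\tfrac12(f_A-y)^2=\Del\!f\,r_A+\tfrac12\Del\!f^{\,2}.
\end{equation*}
Taking $\mathbb{E}_{\mathcal D_A}$ gives Eq.~\eqref{eq:funcspace-main} with no approximation. This requires only that $f_A,f_B$ be square-integrable under $\mathcal D_A$, so that all expectations are finite. Nothing about the architecture or the training procedure enters, since only the two realized functions appear.

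For the realizable, converged case, I substitute $r_A=0$ ($\mathcal D_A$-a.s.). The cross term vanishes, leaving $\Del L_A=\tfrac12\|\Del\!f\|^2_{L^2(\mathcal D_A)}$. The lossless criterion then follows exactly as in Corollary~\ref{cor:removability}: the squared $L^2$ norm is zero iff $\Del\!f=0$ $\mathcal D_A$-almost everywhere. That is the precise meaning of ``on $\mathrm{supp}\,\mathcal D_A$'', with the statement read up to null sets, or pointwise if $f_A,f_B$ are continuous.

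For the continuous-time conservation claim, I parametrize training as $w(\tau)$ with velocity $\dot w$ and differentiate $L_A(f_{w(\tau)})$ by the chain rule:
\begin{equation*}
\tfrac{d}{d\tau}L_A=\mathbb{E}_{\mathcal D_A}\big[(f_w-y)\,J_w^\top\dot w\big].
\end{equation*}
If $\dot w\in\ker G_A(w)$ with $G_A(w)=\mathbb{E}_{\mathcal D_A}[J_wJ_w^\top]$, then $\dot w^\top G_A\dot w=\mathbb{E}[(J_w^\top\dot w)^2]=0$. Hence $J_w^\top\dot w=0$ $\mathcal D_A$-a.s. The integrand vanishes, so $\tfrac{d}{d\tau}L_A=0$ and $L_A$ is constant along the trajectory. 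This holds even with nonzero residual $r_A$, because the whole function velocity on $\mathcal D_A$ vanishes.

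The main obstacle is this last step, specifically the regularity needed. I need $f(x;w)$ differentiable in $w$ with an integrable dominating bound, so that differentiation and expectation can be interchanged. I also need $w(\tau)$ absolutely continuous, so that a zero derivative integrates to exact conservation. I will state these as mild standing assumptions, namely a locally Lipschitz parametrization and square-integrable Jacobians. I will also emphasize that the conservation is exact only in continuous time: discrete steps incur a second-order term, which Theorem~\ref{thm:general} quantifies.
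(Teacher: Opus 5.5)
Your proposal is correct and follows the paper's own argument in Sec.~\ref{si:beyond}. Both expand the square pointwise, then set $r_A=0$ to obtain the energy and the $L^2$-zero lossless criterion. Both then use $\dot\theta^\top G_A\dot\theta=\mathbb{E}_{\mathcal D_A}[\langle\nabla_\theta f,\dot\theta\rangle^2]=0$ to force the function velocity, and hence $dL_A/dt=\langle\nabla L_A,\dot\theta\rangle$, to vanish along an absolutely continuous path. Your explicit regularity assumptions, and your reading of the lossless criterion as conditional on $r_A=0$ and holding up to $\mathcal D_A$-null sets, are more careful than the paper's statement.
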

\noindent Theorem~\ref{thm:functional} is the linearized version of
Eq.~\eqref{eq:funcspace-main}: under A1, $\Del\!f=\Del^\top\phi$ recovers
$\tfrac12\Del^\top\Sig_A\Del$. The dichotomy, the distortion floor
(Sec.~\ref{sec:theory:merge}), and the retention guarantee thus hold with A1 removed
entirely, with feature support replaced by input support; A1 is the regime in which the
geometry becomes computable in closed form and finite steps stay exact. Nor is the
geometry a quadratic-loss artifact: for the whole canonical (Bregman) loss
family---softmax cross-entropy included---the identity holds exactly with the energy
replaced by a Bregman divergence. On a converged task, cross-entropy forgetting
\emph{equals} the expected Kullback--Leibler divergence between the old and new
predictive distributions on that task's data, and the distortion floor becomes a
density-weighted Jensen--Shannon divergence between the tasks' conditionals, in closed
form (Sec.~\ref{si:beyond}). The full
statement, proofs, and validations on deep networks---machine-precision agreement at
depths where the frozen-curvature predictor degrades to $r\approx0.4$---are in
Sec.~\ref{si:beyond}, along with two routes to finite-step parameter-space exactness
(activation-region certificates for ReLU networks and predictor--corrector retraction).

\begin{proposition}[Estimation-error budget]
\label{prop:esterror}
Protection with an \emph{estimated} basis $\hat U$ whose largest principal angle to
$\range\Sig_A$ is $\epsilon$ is not exactly lossless: under gradient noise of variance
$\sigma^2$ and step size $\eta$ over $t$ steps, the induced forgetting obeys the
diffusion law $\mathbb{E}[\Del L_A]=\tfrac12\eta^2\sigma^2 t\,r\sin^2\epsilon+o(\sin^2\epsilon)$,
vanishing as $\epsilon\to0$.
\end{proposition}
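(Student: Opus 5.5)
We model protected training on $B$ as projected SGD starting at $w_A^\ast$. Each step is $\Del_{k+1}=\Del_k-\eta P(g_k+\xi_k)$, where $P=I-\hat U\hat U^\top$ is the estimated protection projector and $g_k=\nabla L_B$ is the mean gradient. The noise $\xi_k$ is zero mean, independent across steps, with isotropic covariance $\sigma^2 I$ (or covariance $\sigma^2$ per direction in the relevant subspace). By Theorem~\ref{thm:functional}, the forgetting after $t$ steps is exactly $\tfrac12\Del_t^\top\Sig_A\Del_t$. Taking expectations gives a bias term $\tfrac12\bar\Del_t^\top\Sig_A\bar\Del_t$ from the mean path plus a variance term $\tfrac12\operatorname{tr}(\Sig_A\operatorname{Cov}\Del_t)$. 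The plan is to show that the variance term produces the stated law and that everything else is $o(\sin^2\epsilon)$.

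The first step is the geometric lemma. Let $U$ be an orthonormal basis of $\range\Sig_A$ with $\operatorname{rank}\Sig_A=r$, and write $\hat U=UC+U_\perp S$ with principal cosines and sines $\cos\theta_i,\sin\theta_i$, where $\theta_i\le\epsilon$. Then $U^\top P U=I-C C^\top$ has eigenvalues $\sin^2\theta_i$. Hence $\|P U\|_F^2=\sum_i\sin^2\theta_i$, which equals $r\sin^2\epsilon$ when all angles equal $\epsilon$ and is at most $r\sin^2\epsilon$ in general. The statement should therefore be read in the equal-angle (worst-case) form, or with $r\sin^2\epsilon$ replaced by $\sum_i\sin^2\theta_i$. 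To get a factor of exactly $\tfrac12$ and no eigenvalues of $\Sig_A$, I will normalize $\Sig_A$ to act as the identity on its range, i.e.\ take whitened features on $\range\Sig_A$ as in the low-rank summary. Otherwise the constant becomes $\operatorname{tr}(\Sig_A P\,\Sigma_\xi P)$, and I will note that generalization.

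The second step is the variance computation. The noise contributions are independent, so $\operatorname{Cov}\Del_t=\eta^2\sum_{k<t}P\Sigma_\xi P$, which equals $\eta^2\sigma^2 t\,P$ when $\Sigma_\xi=\sigma^2 I$. Then $\tfrac12\operatorname{tr}(\Sig_A\operatorname{Cov}\Del_t)=\tfrac12\eta^2\sigma^2 t\operatorname{tr}(U^\top P U)=\tfrac12\eta^2\sigma^2 t\,r\sin^2\epsilon$, using the whitening. This is the diffusion law: the error grows linearly in $t$ like a random walk restricted to the leaked directions.

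The third step, which I expect to be the main obstacle, is controlling the mean drift and the feedback of $\Del$ through $g_k$. The mean leak $\eta\sum_k U^\top P g_k$ is $O(\sin\epsilon)$, so its energy is $O(\sin^2\epsilon)$. That is the same order as the claimed term, not $o$. I would dispose of it in one of two ways. The first is to assume that the signal gradient contributes no component in the leaked directions, i.e.\ $g_k\in\range\hat U^{\perp}\cap(\range\Sig_A)^{\perp}$ up to higher order; this holds at stationarity of $L_B$ on the protected set or for disjoint supports. The second is to interpret the proposition as the noise-driven part of the forgetting and state the bias separately, arguing it is quadratic in $\sin\epsilon$ with a coefficient that is bounded rather than growing linearly in $t$. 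The dependence of $g_k$ on earlier noise through $\Sig_B$ couples the steps, but it only changes the covariance at relative order $O(\eta\|\Sig_B\|)$. I would absorb this into the stated approximation or assume $\Sig_B$ vanishes on the leaked directions. Finally, letting $\epsilon\to0$ makes $P U\to0$, so both terms vanish, which gives the last clause of the statement.
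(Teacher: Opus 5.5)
The paper gives no derivation of this proposition. It is asserted in the main text, and in Sec.~\ref{si:memory} it is called the \emph{noise-driven} forgetting and checked only numerically (measured-to-law ratio $1.05$--$1.07$). Your Steps~1--2 correctly derive that noise term under the normalizations you state: whitened $\Sig_A$, isotropic noise, equal angles. You are also right that the mean leak has energy of order $\sin^2\epsilon$, so the literal $+o(\sin^2\epsilon)$ remainder needs an extra hypothesis.

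The gap is in Step~3: both claims you use to neutralize task $B$ are false. Take $\Sig_A=e_1e_1^\top$, $\Sig_B=\lambda e_2e_2^\top$ and $\hat U=(\cos\epsilon,\sin\epsilon)^\top$. Then $P$ projects onto $v=(-\sin\epsilon,\cos\epsilon)^\top$ and $\Del_t=a_tv$.
\begin{itemize}
\item \emph{Disjoint supports do not remove the mean leak.} The supports here are disjoint, yet with $w_B^\ast-w_A^\ast=\beta e_2$ the mean iterate converges to $\bar a=\beta/\cos\epsilon$. The mean forgetting is $\tfrac12\beta^2\tan^2\epsilon$. Stationarity does not help either, because this leak is deposited during the transient. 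What matters is not $\range\Sig_B\perp\range\Sig_A$ but whether $B$ is active along the tilt directions $PU$.
\item \emph{The feedback is not a relative $O(\eta\|\Sig_B\|)$ effect.} The coordinate $a_t$ is an Ornstein--Uhlenbeck chain with contraction $\rho=1-\eta\lambda\cos^2\epsilon$. Its noise part of the forgetting is $\tfrac12\sin^2\epsilon\,\eta^2\sigma^2(1-\rho^{2t})/(1-\rho^2)$. The relative correction to the law is $O(\eta\lambda t)$, and for $t\gtrsim1/(\eta\lambda)$ the forgetting saturates near $\tfrac{\eta\sigma^2}{4\lambda}\tan^2\epsilon$ instead of growing linearly. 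That discrepancy is $\Theta(\sin^2\epsilon)$ with an order-one coefficient, so it cannot be absorbed into $o(\sin^2\epsilon)$.
\end{itemize}

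Both defects disappear under one hypothesis, which is the precise form of your alternative ``$\Sig_B$ vanishes on the leaked directions'': $\Sig_BPU=0$. Because $P$ and $\Sig_B$ are symmetric, this is equivalent to $U^\top P\Sig_B=0$. Write $\Del_B^\ast=w_B^\ast-w_A^\ast$ and project the recursion $\Del_{k+1}=\Del_k-\eta P\Sig_B(\Del_k-\Del_B^\ast)-\eta P\xi_k$ onto $U$. This gives exactly
\[
U^\top\Del_{k+1}=U^\top\Del_k-\eta\,U^\top P\xi_k .
\]
So the $A$-coordinates perform a pure random walk from $0$, and the mean leak vanishes identically. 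Hence $\mathbb{E}[\Del L_A]=\tfrac12\eta^2t\operatorname{tr}(\Sig_AP\Sigma_\xi P)$. Under your normalizations this equals $\tfrac12\eta^2\sigma^2t\sum_i\sin^2\theta_i\le\tfrac12\eta^2\sigma^2t\,r\sin^2\epsilon$, with equality for equal angles and zero remainder. This short argument replaces your bias--variance split and covariance sum, and it includes the pure-noise case $\Sig_B=0$ suggested by the paper's ``noise-driven'' wording. Without such a hypothesis, what you can actually prove is the noise-driven part alone, for horizons $\eta t\|P\Sig_BP\|\ll1$, not total forgetting up to $o(\sin^2\epsilon)$.
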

\noindent The method is therefore a \emph{controlled} approximation of the exact
identity with the retention loss governed by a single measurable
quantity (basis error $\epsilon$), which the recursive tracker drives down and the
drift velocity monitors (validated in \texttt{evaluate\_upgrades.py}. This law inverts to
a retention-aware step-size bound, Sec.~\ref{si:memory}) and isolates the two residual
gaps for end-to-end deployment as quantified basis error and finite step size.

\subsection{Optimal Merging and the Distortion Floor}
\label{sec:theory:merge}
Represent $K$ task solutions as $\Del_t=w_t^\ast-w_0$ from a shared base $w_0$, and
merge them as $\theta^\ast=w_0+\sum_t\Del_t$. Substituting into
Eq.~\eqref{eq:excess}, the loss incurred on task $t$ is
\begin{equation}
L_t(\theta^\ast)=\tfrac12\Big(\textstyle\sum_{t'\neq t}\Del_{t'}\Big)^{\!\top}
\Sig_t\Big(\textstyle\sum_{t'\neq t}\Del_{t'}\Big),
\label{eq:merge-energy}
\end{equation}
that is, the interference energy of the other task vectors measured in task $t$'s
geometry.

\begin{theorem}[Optimal merging is $\Sig$-orthogonalization]
\label{thm:merge}
The total merge loss $\sum_t L_t(\theta^\ast)$ vanishes whenever the task vectors are
pairwise $\Sig$-orthogonal, i.e. $\Del_{t'}\in\ker\Sig_t$ for all $t'\neq t$; this
condition is also generically necessary. When it fails, the minimum total loss over
shared heads is attained at
$\theta_J=\big(\sum_t\Sig_t\big)^{+}\big(\sum_t\Sig_t w_t^\ast\big)$, and for two
tasks
\begin{equation}
w_J=(\Sig_A+\Sig_B)^{+}(\Sig_A w_A^\ast+\Sig_B w_B^\ast).
\label{eq:merge-two}
\end{equation}
The residual $\mathcal{D}:=\sum_t L_t(\theta_J)$ is the \emph{distortion floor}.
\end{theorem}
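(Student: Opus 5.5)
The plan has three parts: sufficiency of pairwise $\Sig$-orthogonality, generic necessity, and the joint minimizer. All three rest on the merge-energy expression Eq.~\eqref{eq:merge-energy} and on the positive semidefiniteness of each $\Sig_t$.

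\emph{Sufficiency.} Suppose $\Del_{t'}\in\ker\Sig_t$ for all $t'\neq t$. Then $\Sig_t\sum_{t'\neq t}\Del_{t'}=0$. Hence each summand $L_t(\theta^\ast)$ in Eq.~\eqref{eq:merge-energy} vanishes, and so does the total.

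\emph{Generic necessity.} Each $L_t(\theta^\ast)\ge0$, so a vanishing total forces every term to vanish. By the same argument as in Corollary~\ref{cor:removability}, $L_t(\theta^\ast)=0$ is equivalent to $\sum_{t'\neq t}\Del_{t'}\in\ker\Sig_t$.

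\begin{itemize}
\item This only constrains the sums. The individual vectors $\Del_{t'}$ could cancel inside $\range\Sig_t$ without each lying in $\ker\Sig_t$.
\item I would argue that such cancellation is a non-generic event. For fixed $\Sig_t$, the set of tuples $(\Del_{t'})_{t'\neq t}$ whose $\range\Sig_t$-components cancel while not all being zero is a proper linear subspace of the space of admissible tuples. Equivalently, it is cut out by the nontrivial linear conditions $P_t\sum_{t'\neq t}\Del_{t'}=0$, where $P_t$ is the projection onto $\range\Sig_t$.
\item It therefore has measure zero under any absolutely continuous perturbation of the task vectors.
\end{itemize}

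Making ``generically'' precise is the step I expect to be the main obstacle. The cleanest route I would try first is to perturb each $\Del_{t'}$ independently by a small random component in $\range\Sig_t$ and show that exact cancellation survives with probability zero.

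\emph{Joint optimum.} Define $F(\theta)=\sum_t\tfrac12(\theta-w_t^\ast)^\top\Sig_t(\theta-w_t^\ast)$.

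\begin{itemize}
\item $F$ is a convex quadratic with Hessian $S=\sum_t\Sig_t\succeq0$ and gradient $S\theta-\sum_t\Sig_t w_t^\ast$.
\item The vector $b=\sum_t\Sig_t w_t^\ast$ lies in $\sum_t\range\Sig_t=\range S$. The latter equality holds because $\ker S=\bigcap_t\ker\Sig_t$ for PSD summands.
\item The normal equations $S\theta=b$ are therefore consistent, and $\theta_J=S^{+}b$ is a minimizer: the minimum-norm one, with all other minimizers differing by elements of $\ker S$, which leave $F$ unchanged.
\item Convexity makes stationarity sufficient for global optimality.
\item Setting $K=2$ gives Eq.~\eqref{eq:merge-two}.
\item The residual $\mathcal D=F(\theta_J)$ is then by definition the minimum over shared heads.
\end{itemize}

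Finally, I would note that $\mathcal D=0$ exactly when a common $\theta$ satisfies $\Sig_t(\theta-w_t^\ast)=0$ for all $t$. This links the floor to the overlap condition of Corollary~\ref{cor:removability}.
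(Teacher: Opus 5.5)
Your sufficiency argument and your treatment of the joint optimum follow the paper's proof. The paper argues that each merge-energy term is a nonnegative quadratic form and that the minimizer solves $\sum_t\Sig_t(\theta-w_t^\ast)=0$. Your observation that $b=\sum_t\Sig_t w_t^\ast\in\range S$, because $\ker S=\bigcap_t\ker\Sig_t$, supplies the consistency of the normal equations that the paper leaves implicit.

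The generic-necessity step, however, fails as proposed. The linear conditions $P_t\sum_{t'\neq t}\Del_{t'}=0$ cut out \emph{all} configurations with $L_t(\theta^\ast)=0$, the pairwise $\Sig$-orthogonal ones included. So ``measure zero under absolutely continuous perturbation'' applies just as much to the configurations you want to keep. It shows that zero merge loss is itself non-generic, not that zero loss generically forces orthogonality. The ``cancel but not all zero'' set is also not a linear subspace; it is a subspace minus a subspace. Your planned random perturbation inside $\range\Sig_t$ has the same defect, since it destroys orthogonality and zero loss simultaneously.

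A meaningful version has to condition on zero loss, i.e.\ work inside the linear space of zero-loss configurations, and there the claim is false for $K\ge3$. Take $d=2$, $u=(e_1+e_2)/\sqrt2$, $v=(e_1-e_2)/\sqrt2$, $\Sig_1=e_1e_1^\top$, $\Sig_2=uu^\top$, $\Sig_3=vv^\top$, and $(\Del_1,\Del_2,\Del_3)=(0,u,-v)$; this even respects $\Del_t\in\range\Sig_t$. Then $\Sig_1(\Del_2+\Del_3)=\Sig_2(\Del_1+\Del_3)=\Sig_3(\Del_1+\Del_2)=0$, so the total merge loss vanishes, yet $\Sig_1\Del_2=e_1/\sqrt2\neq0$. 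In fact the zero-loss configurations in $(\R^2)^3$ form a three-dimensional subspace whose only pairwise-orthogonal element is $0$, so the generic zero-loss configuration is \emph{not} orthogonal.

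What can be proved is the following. For $K=2$ the condition is exactly necessary, with no genericity, since the two terms are $\tfrac12\Del_B^\top\Sig_A\Del_B$ and $\tfrac12\Del_A^\top\Sig_B\Del_A$. For $K\ge3$, necessity holds under an explicit non-degeneracy hypothesis: for example, that for each $t$ the nonzero vectors among $\{\Sig_t^{1/2}\Del_{t'}\}_{t'\neq t}$ are linearly independent, whereupon a vanishing sum forces each to vanish. The paper's proof merely asserts ``generically necessary'' without argument. The right repair is to replace that phrase by one of these precise statements rather than to look for a measure-theoretic proof.
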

\begin{proof}
Since each $L_t(\theta^\ast)\ge0$, the sum vanishes iff every term vanishes, i.e.
$\Sig_t^{1/2}\sum_{t'\neq t}\Del_{t'}=0$. A sufficient, and generically necessary,
condition is $\Del_{t'}\in\ker\Sig_t$ for all $t'\neq t$. The minimizer of $\sum_t
L_t(\theta)=\tfrac12\sum_t(\theta-w_t^\ast)^\top\Sig_t(\theta-w_t^\ast)$ satisfies
$\sum_t\Sig_t(\theta-w_t^\ast)=0$, which gives $\theta_J$; Eq.~\eqref{eq:merge-two} is
the two-task case.
\end{proof}

\noindent The relevant inner product for merging is therefore the task-dependent
$\Sig_t$-metric
(Fig.~\ref{fig:thmgeom}b); Euclidean orthogonality is the white-feature special case
$\Sig_t\propto I$. The same residual $\mathcal{D}$ also governs the online
continual-learning setting, so merging and continual learning optimize the same
functional under batch and streaming schedules.

\paragraph{Residual distortion under non-orthogonal task geometry.}\label{sec:theory:ceiling}
Optimal merging needs $\Sig$-orthogonality; when the supports force it to fail, the
residual $\mathcal{D}$ of Theorem~\ref{thm:merge} quantifies the unavoidable error of
task ambiguity. Consider two tasks as a single problem in which each input carries a
hidden label $T\in\{A,B\}$ and a shared head must predict the target of the hidden task.

\begin{figure}[tb]\centering
\includegraphics[width=\textwidth]{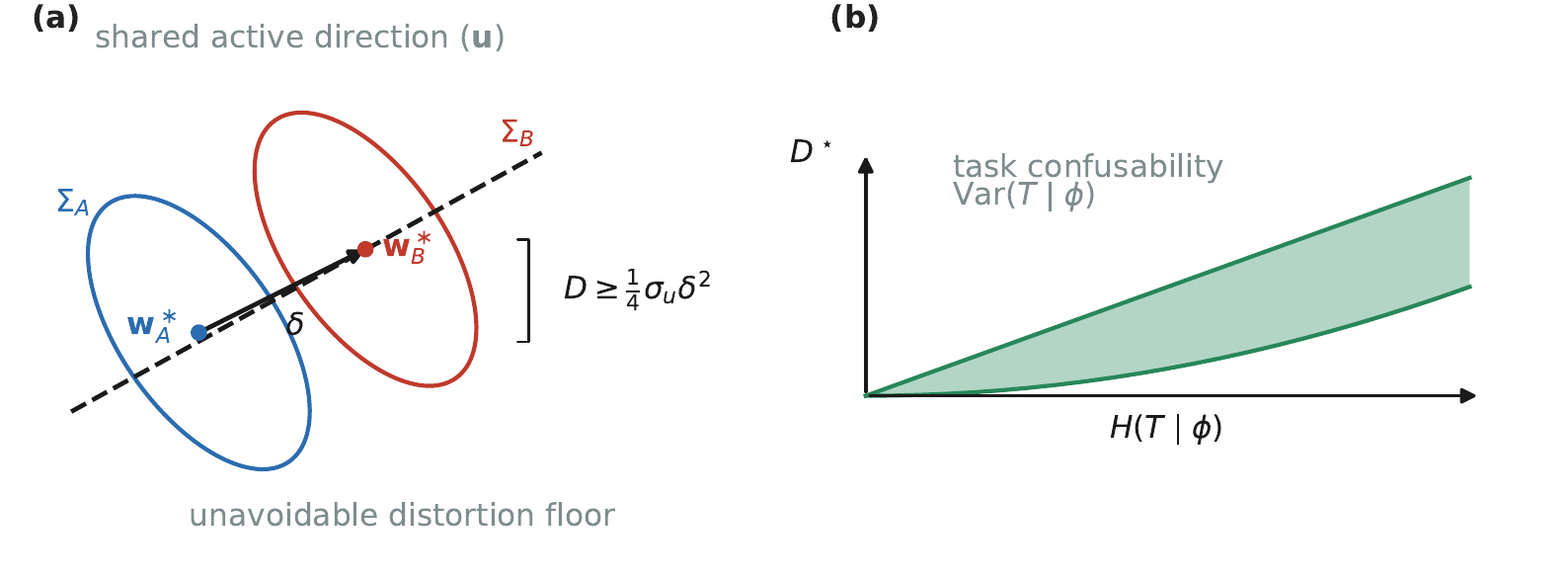}
\caption{The distortion floor and its information-theoretic reading.
\emph{(a)} A floor appears only on a shared active direction $u\in\range\Sig_A\cap
\range\Sig_B$ on which the targets disagree by $\delta$, with
$\mathcal{D}\ge\tfrac14\sigma_u\delta^2$ (Theorem~\ref{thm:floor}).
\emph{(b)} The Bayes single-head excess risk $\mathcal{D}^\star$ factors into the
geometric target gap $(\phi^\top\delta)^2$---the displacement $\delta$ in (a)---and the
task-confusability factor $\operatorname{Var}(T\!\mid\!\phi)$, which is two-sidedly
controlled by the conditional entropy $H(T\!\mid\!\phi)$ (Theorem~\ref{thm:mi}; band).}
\label{fig:thmfloor}
\end{figure}

\begin{theorem}[Distortion floor and inference limit]
\label{thm:floor}
Suppose the tasks share an active direction $u$, with $\|u\|=1$ and
$u\in\range\Sig_A\cap\range\Sig_B$, and let $\delta=u^\top(w_B^\ast-w_A^\ast)$. Then
the distortion floor obeys the lower bound
\begin{equation}
\mathcal{D}\ \ge\ \tfrac14\,\sigma_u\,\delta^2,\qquad
\sigma_u=\big(u^\top(\Sig_A^{-1}+\Sig_B^{-1})u\big)^{-1},
\label{eq:floor-bound}
\end{equation}
where $\Sig_A^{-1}$ and $\Sig_B^{-1}$ denote the inverses of $\Sig_A$ and $\Sig_B$
restricted to the shared active subspace $S=\range\Sig_A\cap\range\Sig_B$. On the
shared support, the excess Bayes risk of the Bayes-optimal single head over the task
oracle equals the merge residual.
\end{theorem}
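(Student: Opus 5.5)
The plan is to reduce $\mathcal{D}$ to a two-variable scalar problem along the shared direction $u$, using a Cauchy--Schwarz inequality in each task's metric. By Theorem~\ref{thm:merge}, $\mathcal{D}=\min_\theta \tfrac12(\theta-w_A^\ast)^\top\Sig_A(\theta-w_A^\ast)+\tfrac12(\theta-w_B^\ast)^\top\Sig_B(\theta-w_B^\ast)$. So it suffices to lower-bound $L_A(\theta)+L_B(\theta)$ uniformly in $\theta$.

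\textbf{Step 1: a pointwise bound in each metric.} For $u\in\range\Sig_A$ and any $x$, Cauchy--Schwarz in the $\Sig_A$-inner product gives
$$(u^\top x)^2=\big((\Sig_A^{+}u)^\top\Sig_A x\big)^2\le (u^\top\Sig_A^{+}u)\,(x^\top\Sig_A x).$$
Hence $x^\top\Sig_A x\ge (u^\top x)^2/p$ with $p=u^\top\Sig_A^{-1}u$, and likewise $x^\top\Sig_B x\ge (u^\top x)^2/q$ with $q=u^\top\Sig_B^{-1}u$.

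\textbf{Step 2: the scalar minimization.} Set $\alpha=u^\top(\theta-w_A^\ast)$ and $\beta=u^\top(\theta-w_B^\ast)$. Then $\alpha-\beta=\delta$ for every $\theta$. Minimizing $\tfrac12(\alpha^2/p+\beta^2/q)$ under this constraint is a one-line Lagrange computation. It gives $\tfrac12\delta^2/(p+q)=\tfrac12\sigma_u\delta^2$. This is stronger than the stated $\tfrac14\sigma_u\delta^2$ by a factor $2$, and that factor is the slack I plan to use in Step 3.

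\textbf{Step 3 (main obstacle): the pseudo-inverse versus the restricted inverse.} Step 1 naturally produces $u^\top\Sig_A^{+}u$, the full pseudo-inverse. The statement instead uses the inverse of $\Sig_A$ compressed to $S$. For $u\in S$ these generally differ: the inverse of a compression is $\preceq$ the compression of the inverse. Consequently $\sigma_u$ computed from the restricted inverses is at least the one from Step 1, and the inequality points the wrong way for a direct substitution.

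First, I would check whether $S$ is invariant under $\Sig_A$ and $\Sig_B$, as in the block-structured settings the paper uses. In that case the two inverses coincide and the bound follows immediately. Otherwise, I would try to control the cross-block Schur-complement correction so that the leftover factor-$2$ slack absorbs it. If that fails, I would state the bound with the pseudo-inverse, which is the honest form of the claim.

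\textbf{Step 4: the Bayes-risk clause.} Model the hidden label $T$ as in the setup. The Bayes single head on feature $\phi$ predicts $\mathbb{E}[y\mid\phi]=\phi^\top\big(P(A\mid\phi)\,w_A^\ast+P(B\mid\phi)\,w_B^\ast\big)$. Its excess risk over the task oracle is
$$\mathbb{E}\big[\operatorname{Var}(T\mid\phi)\,(\phi^\top(w_B^\ast-w_A^\ast))^2\big].$$
Restricting to linear heads turns the minimization of this risk into exactly the weighted least-squares problem of Theorem~\ref{thm:merge}. On the shared support $S$ it therefore equals the merge residual $\mathcal{D}$, which I would confirm by comparing normal equations. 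The care needed here is to make the ``on the shared support'' qualifier precise, i.e.\ that the posterior weights are constant there so that the Bayes head is itself linear.
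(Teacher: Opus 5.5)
Your Steps~1--2 are correct, and they take a different and sounder route than the paper. The paper restricts the merge objective to the line $\theta\in\R u$ with curvatures $a=u^\top\Sig_A u$ and $b=u^\top\Sig_B u$. It obtains $\tfrac12\frac{ab}{a+b}\delta^2$ and attributes the $\tfrac14$ to summing over shared directions. That reduction is not a lower-bound argument:
\begin{itemize}
\item minimizing over a line can only upper-bound $\mathcal{D}$;
\item the inequality $x^\top\Sig_A x\ge a\,(u^\top x)^2$ fails unless $u$ is an eigenvector of $\Sig_A$.
\end{itemize}
Your Cauchy--Schwarz step in the $\Sig_A$ semi-inner product is what makes the scalar reduction valid. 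It shows the effective curvature along $u$ is $(u^\top\Sig_A^{+}u)^{-1}\le u^\top\Sig_A u$. It also yields $\mathcal{D}\ge\tfrac12\delta^2/\big(u^\top(\Sig_A^{+}+\Sig_B^{+})u\big)$, which is twice the stated bound and is attained in the example below.

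In Step~3, drop the middle option. No amount of slack can absorb the Schur-complement correction, because under the compression reading the inequality is simply false. That reading is exactly what the paper's $a,b$ amount to when $S$ is one-dimensional. For a counterexample take:
\begin{itemize}
\item $d=2$ and $\Sig_A=I+\rho(e_1e_2^\top+e_2e_1^\top)$ with $|\rho|<1$;
\item $\Sig_B=e_1e_1^\top$ and $u=e_1$, so $S=\mathrm{span}(e_1)$;
\item $w_A^\ast=0$ and $w_B^\ast=\delta e_1$.
\end{itemize}
Eliminating $\theta_2=-\rho\theta_1$ gives $\mathcal{D}=\tfrac12\frac{1-\rho^2}{2-\rho^2}\delta^2$, which tends to $0$ as $|\rho|\to1$. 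Both compressed inverses equal $1$, so the claimed bound is $\tfrac18\delta^2$, and it is violated once $\rho^2>2/3$. With pseudo-inverses instead, $u^\top\Sig_A^{+}u=(1-\rho^2)^{-1}$ and $u^\top\Sig_B^{+}u=1$, and your Step~2 value equals $\mathcal{D}$ exactly. So commit to one of two fixes: read the restricted inverse as the compression of $\Sig_A^{+}$ to $S$, or add the hypothesis that $S$ is invariant under $\Sig_A$ and $\Sig_B$. Either way, your proof of the inequality is complete.

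In Step~4, the ``therefore equals'' does not follow. The Bayes head minimizes over all heads, so its excess is in general only at most that of the best linear head; Theorem~\ref{thm:mi} itself records this as an inequality. Equality requires the posterior mean to be linear, which is your constant-posterior condition, and the hypotheses do not supply it.

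There is also a normalization mismatch. Under the uniform task prior, the mixture excess of a linear head $\theta$ is $\tfrac12\big(L_A(\theta)+L_B(\theta)\big)$. So even in the best case you get $\tfrac12\mathcal{D}$, unless the merge objective is weighted by the priors. The paper's one-line argument for this clause addresses neither point. State the clause under these explicit assumptions rather than claiming it follows from the given hypotheses.
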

\noindent  Restricting the merge to the shared direction $u$ reduces it to a
scalar two-point problem whose $\Sig$-weighted optimum leaves residual
$\tfrac14\sigma_u\delta^2$; summing over shared directions and identifying the Bayes single
head with the posterior mean gives the bound and the inference claim (Sec.~\ref{si:proofs}).

\noindent Support overlap is therefore the common limit to both
$\Sig$-orthogonal merging and task inference; a positive floor appears only on shared
active directions where the tasks also disagree. This yields the geometric counterpart
of the task- versus class-incremental distinction \cite{vandeven2022}: disjoint
supports make the task identity effectively free, whereas overlapping supports can
induce a non-zero floor when the tasks disagree on shared active directions.

\begin{theorem}[Information--estimation identity for the floor]
\label{thm:mi}
Let $T\sim\mathrm{Unif}\{A,B\}$, $\phi\mid T\sim\mathcal D_T$,
$y=\phi^\top w_T^\ast+\varepsilon$, with $\varepsilon\perp(T,\phi)$,
$\mathbb{E}\varepsilon=0$, and $\mathbb{E}\varepsilon^2=\sigma^2$. Writing
$\eta(\phi)=\Pr(T{=}A\mid\phi)$ and $\delta=w_A^\ast-w_B^\ast$, the excess Bayes risk
of a single shared head over the task oracle is
\begin{equation}
\mathcal{D}^\star
=\tfrac12\,\mathbb{E}_\phi\!\big[\operatorname{Var}(T\!\mid\!\phi)\,(\phi^\top\delta)^2\big]
=\tfrac12\,\mathbb{E}_\phi\!\big[\eta(1-\eta)\,(\phi^\top\delta)^2\big].
\label{eq:mi-identity}
\end{equation}
With $h_b(p)=-p\ln p-(1-p)\ln(1-p)$ the binary entropy in nats, this implies the
two-sided bounds
\begin{equation}
\tfrac18\,\mathbb{E}_\phi\!\big[(\phi^\top\delta)^2\,h_b(\eta)^2\big]\ \le\
\mathcal{D}^\star\ \le\
\tfrac14\,\mathbb{E}_\phi\!\big[(\phi^\top\delta)^2\,h_b(\eta)\big].
\label{eq:mi-sandwich}
\end{equation}
At a fixed non-zero target-gap profile $(\phi^\top\delta)^2$, the floor increases with
$H(T\!\mid\!\phi)$ and decreases with $I(T;\phi)$; in particular it vanishes when the
label is fully identifiable, $I(T;\phi)=H(T)$. The merge floor of
Theorem~\ref{thm:merge} is the linear-head restriction, so
$\mathcal{D}^\star\le\mathcal{D}_{\mathrm{merge}}$, with equality at both extremes.
\end{theorem}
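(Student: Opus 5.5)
The plan has three steps: compute the Bayes single-head risk as a conditional-variance decomposition, convert the pointwise factor $\eta(1-\eta)$ into entropy bounds, and compare with the linear merge through a restriction argument.

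\emph{Step 1: the identity \eqref{eq:mi-identity}.} The task oracle predicts $\phi^\top w_T^\ast$ and incurs risk $\tfrac12\sigma^2$. A single head may use only $\phi$, so under squared loss its Bayes predictor is the posterior mean
\[
\mathbb{E}[y\mid\phi]=\eta\,\phi^\top w_A^\ast+(1-\eta)\,\phi^\top w_B^\ast,
\]
using $\varepsilon\perp(T,\phi)$ and $\mathbb{E}\varepsilon=0$. Its risk is $\tfrac12\mathbb{E}[\operatorname{Var}(y\mid\phi)]$. Conditionally on $\phi$, $y$ equals the two-point variable $\phi^\top w_T^\ast$ plus independent noise, so
\[
\operatorname{Var}(y\mid\phi)=\sigma^2+(\phi^\top\delta)^2\,\operatorname{Var}(\mathbf 1\{T{=}A\}\mid\phi).
\]
The conditional variance of the indicator is $\eta(1-\eta)$. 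Subtracting the oracle risk gives both forms of \eqref{eq:mi-identity} at once; this mirrors the earlier decomposition $\mathbb{E}[(\hat y-y)^2]/2=L_t+\tfrac12\mathbb{E}\varepsilon^2$.

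\emph{Step 2: the sandwich \eqref{eq:mi-sandwich}.} After Step 1 it suffices to prove, pointwise for $p\in[0,1]$,
\[
\tfrac14 h_b(p)^2\le p(1-p)\le\tfrac12 h_b(p),
\]
then multiply by $\tfrac12(\phi^\top\delta)^2$ and take expectations.

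The upper bound is easy. Since $-\ln p\ge 1-p$, we get $h_b(p)\ge p(1-p)+(1-p)p=2p(1-p)$.

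The lower bound, equivalently $h_b(p)\le 2\sqrt{p(1-p)}$, is the step I expect to be the main obstacle, mainly to get a clean argument and the right constant. I would first try concavity: $h_b(p)=p\ln(1/p)+(1-p)\ln(1/(1-p))$. Bounding $\ln(1/p)\le\sqrt{(1-p)/p}$ would give $h_b\le 2\sqrt{p(1-p)}$ directly. That needs $\ln x\le \sqrt{x-1}$ for $x=1/p\ge1$, which holds because the derivatives compare as $1/x\le 1/(2\sqrt{x-1})$, i.e.\ $(x-2)^2\ge 0$, with equality at $x=1$. By symmetry the same inequality holds for $1-p$. Squaring then gives the lower bound with constant $\tfrac14\cdot\tfrac12=\tfrac18$.

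\emph{Step 3: the monotonicity and comparison claims.} At a fixed gap profile, both bounds in \eqref{eq:mi-sandwich} are monotone in $h_b(\eta)$. Since $\mathbb{E}_\phi h_b(\eta)=H(T\mid\phi)=\ln 2-I(T;\phi)$, the floor grows with $H(T\mid\phi)$ and falls with $I(T;\phi)$. I would phrase this as qualitative control via the two-sided bounds, since the identity itself is exact. When $I(T;\phi)=H(T)$ we have $\eta\in\{0,1\}$ almost surely, so the floor vanishes.

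For $\mathcal{D}^\star\le\mathcal{D}_{\mathrm{merge}}$, the merge head $\theta_J$ of Theorem~\ref{thm:merge} is one particular measurable predictor $\phi\mapsto\phi^\top\theta_J$. Its mixture risk minus the oracle risk equals $\tfrac12\sum_t L_t(\theta_J)$ under the uniform prior, up to the normalization convention, which I would match. The Bayes predictor minimizes over all predictors, so it does at least as well.

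Equality holds at two extremes. If the supports are identifiable, both quantities are zero by Theorem~\ref{thm:merge} and Step 1. If $\eta$ is constant, the posterior mean is linear in $\phi$, so the linear restriction loses nothing.
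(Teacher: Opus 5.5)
Your Steps 1 and 2 follow the paper's proof. You apply the law of total variance to the two-point variable $\phi^\top w_T^\ast$ given $\phi$, then integrate the pointwise bounds $\tfrac14h_b(p)^2\le p(1-p)\le\tfrac12h_b(p)$ against $(\phi^\top\delta)^2$. The paper only calls these bounds elementary. Your proofs of them, via $-\ln p\ge1-p$ and via $\ln x\le\sqrt{x-1}$ from $(x-2)^2\ge0$, are correct; the latter is a derivative comparison rather than concavity, which is harmless. The paper's proof stops there, so your Step 3 covers claims it never argues, and one of them does not go through.

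The gap is equality at the identifiable extreme.
\begin{itemize}
\item Theorem~\ref{thm:merge} does not give a zero linear floor from identifiability. $\sum_tL_t(\theta)=0$ is attainable iff $\delta\in\ker\Sig_A+\ker\Sig_B$, i.e.\ $\delta\perp\range\Sig_A\cap\range\Sig_B$. By contrast, $I(T;\phi)=H(T)$ only says that $\mathcal D_A$ and $\mathcal D_B$ are mutually singular.
\item A counterexample: take $d=1$, $\phi\mid A$ uniform on $[1,2]$, $\phi\mid B$ uniform on $[3,4]$, $w_A^\ast=1$, $w_B^\ast=-1$. Then $\eta\in\{0,1\}$ and $\mathcal{D}^\star=0$, with the Bayes head piecewise $\phi\mapsto\pm\phi$, which is nonlinear. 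But $\Sig_A,\Sig_B>0$ make $\tfrac12\Sig_A(\theta-1)^2+\tfrac12\Sig_B(\theta+1)^2>0$ for every $\theta$.
\item So this equality is false as stated. It needs an added hypothesis such as $\range\Sig_A\cap\range\Sig_B=\{0\}$, under which your argument works.
\end{itemize}

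Two smaller points.
\begin{itemize}
\item The normalization you defer matters. Under the uniform prior the mixture excess of a linear head is $\tfrac12\sum_tL_t(\theta)$, so you actually get $\mathcal{D}^\star\le\tfrac12\sum_tL_t(\theta_J)$. This implies the stated inequality. At the $\eta$-constant extreme, which forces $\mathcal D_A=\mathcal D_B$ and $\eta\equiv\tfrac12$, your linearity argument is right, but equality holds only for this prior-weighted floor. Theorem~\ref{thm:merge}'s $\mathcal{D}=\sum_tL_t(\theta_J)$ is exactly $2\mathcal{D}^\star$ there.
\item Your monotonicity step passes from $\mathbb{E}_\phi[(\phi^\top\delta)^2h_b(\eta)]$ to $H(T\mid\phi)=\mathbb{E}_\phi h_b(\eta)$ only if the gap is constant, or bounded above and below with Jensen for the $h_b^2$ side. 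The ratio $h_b(p)/(p(1-p))$ equals $4\ln2$ at $p=\tfrac12$ but diverges as $p\to0$. Hence $\mathcal{D}^\star$ is not even a function of $H(T\mid\phi)$ at constant gap, so the qualitative reading through the sandwich that you propose is the only defensible one.
\end{itemize}
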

\noindent The law of total variance splits the Bayes single-head risk into
the oracle noise plus the posterior-variance-weighted target gap
$\eta(1-\eta)(\phi^\top\delta)^2$, giving Eq.~\eqref{eq:mi-identity}; the entropy sandwich
follows from the pointwise bound $\tfrac14 h_b(p)^2\le p(1-p)\le\tfrac12 h_b(p)$
(full proof in Sec.~\ref{si:proofs}).

\noindent Equation~\eqref{eq:mi-identity} is exact for arbitrary feature distributions
and noise models. More generally, the floor depends jointly on task confusability and
target disagreement (Fig.~\ref{fig:thmfloor}b), so no exact proportionality
$\mathcal{D}^\star\propto I(T;\phi)$ holds: the floor vanishes as the target gap
$\delta\to0$ even at fixed $I(T;\phi)>0$, so no tight weight-free constant can relate
the floor to mutual information. The same identity extends to $K$ tasks with arbitrary
priors, with corresponding multiclass entropy bounds in terms of the Gini impurity
(Sec.~\ref{si:mi}); both the binary identity and the multiclass bounds are verified
numerically there. Under cross-entropy the sandwich closes to an \emph{identity}: for
any number of tasks, the floor of a single predictor equals $T\cdot I(T;Y\mid X)$---the
conditional mutual information between task identity and label given the input---and
removability becomes the statement $Y\perp T\mid X$
(Sec.~\ref{si:beyond}). Under squared loss,
Theorem~\ref{thm:functional} also recovers the standard per-task forgetting term used
in the Forgetting Measure and Backward Transfer; Theorem~\ref{thm:mi} therefore gives a
lower bound on the average forgetting achievable by any single-head method in terms of
task confusability.

\section{Method: interference-gated allocation}
\label{sec:algo}

The functional above implies a family of interference-gated functional allocation (\IGFA)
methods that control \emph{where} an update is allowed to act, parameterized by
schedule (offline or online) and by the share-versus-orthogonalize gate
(Fig.~\ref{fig:overview}, right).

\paragraph{Offline: optimal merge.} Given $K$ trained task vectors $\{\Del_t\}$ and
per-task second moments $\{\Sig_t\}$, estimated from each task's activations or
probes, solve for the $\Sig$-orthogonalized vectors and coefficients that minimize the
total interference in Eq.~\eqref{eq:merge-energy}. This is a generalized eigenvalue or
Procrustes problem, whose two-task case is Eq.~\eqref{eq:merge-two}, and provides a
principled replacement for weight-space task arithmetic.

\paragraph{Online: \IGFA.} As tasks stream, maintain a low-rank orthonormal summary
$U$ of the occupied function subspace, given by the top directions of the accumulated
$\Sig_t$ under streaming PCA. $U$ is the only persistent state: there is no exemplar
buffer and no per-weight Fisher matrix. For each new-task gradient $g$:
\begin{enumerate}[leftmargin=1.5em,itemsep=0pt]
\item Split $g=g_\parallel+g_\perp$, with $g_\parallel=UU^\top g$ the component in the
occupied subspace.
\item For each occupied direction $u$ touched by $g_\parallel$, estimate the signed
similarity $s$ between the new task's target direction and $u$.
\item Gate the update: if $s>\sstar$ (aligned, transfer), keep $g_\parallel$ and share
the direction; if $s<\sstar$ (conflict), remove $g_\parallel$ and step along $g_\perp$,
protecting the old task's function by construction.
\item Update $w$ with the gated gradient, then grow $U$ with the novel directions in
$g_\perp$ by streaming PCA, rank-capping and overwriting the least important direction
when full---rate--distortion-graceful forgetting.
\end{enumerate}
Because the old-task function is first-order invariant along orthogonalized directions,
retention is structural rather than penalty-based. Because aligned directions are
shared rather than suppressed, transfer is preserved, which unconditional projection
removes (see Algorithm~1).

\begin{figure}[tb]\centering
\fbox{\begin{minipage}{0.95\textwidth}\small
\textbf{Algorithm~1: the interference-ledger loop (measure $\to$ attribute $\to$ act),
with \IGFA{} as its default control instance.}\\[2pt]
\textbf{State (one object):} tracked second moment $\widehat\Sig$ (or its
Frequent-Directions sketch); per-task bases $\{B_t\}$ with rank cap $k_{\max}$;
held-out micro-cache gradients $\{\nabla L_u\}$. Every quantity below is a read-out
of this state.\\[2pt]
\textbf{For each task} $n$ (gradient stream $g$, feature second-moment estimate
$\widehat\Sig_n$; $B_n\leftarrow$ top-$r$ eigenvectors, streaming PCA):
\begin{enumerate}[leftmargin=1.5em,itemsep=0pt]
\item \textbf{Measure:} overlaps $\mathrm{sim}(B_t,B_n)=\|B_t^\top B_n\|_F^2/r$, share
density, predicted floor, occupied rank, drift velocity, OOD ratio, estimator
confidence.
\item \textbf{Attribute} the predicted loss: floor (irreducible) $/$ capacity (rank cap
binding) $/$ control (the rest)---each term is measurable and has exactly one remedy
(Sec.~\ref{sec:discussion}).
\item \textbf{Act:} \emph{skip} if OOD; \emph{expand or drop} the smallest-eigenvalue
direction if the cap binds (rate--distortion-graceful); \emph{replay} where recoverable
excess per exemplar is largest; otherwise the \textbf{gated update}: per minibatch
solve the monotone-retention QP of Corollary~\ref{cor:qpgate},
$v=\operatorname{argmin}\|v+g\|^2$ subject to the chosen constraint rows, and step
$w\leftarrow w+\eta\,v$. \IGFA's default rows: equality constraints from every past
basis with $\mathrm{sim}(B_t,B_n)<\sstar$ (conflicting), aligned tasks left shared---%
equivalently $g\leftarrow(I-QQ^\top)g$ with $Q=\mathrm{orth}$ of the conflicting bases.
\item \textbf{Update state:} fold the batch into $\widehat\Sig$; grow the basis store;
refresh micro-caches; \emph{defer} (protect-all) while confidence is low.
\end{enumerate}
\textbf{Inference:} single forward pass through $w$ (plus a task router only in the
class-incremental case; Sec.~\ref{sec:theory:ceiling}).
\end{minipage}}
\end{figure}

\paragraph{One gate, many methods.} Step 3's QP is not one design choice among many:
every protection rule in this literature is the \emph{same} program under a different
constraint set, and the reference implementation is a single function
(\texttt{evaluate\_unified\_gate.py}, all recoveries machine-checked):
\begin{center}\small
\begin{tabular}{ll}
\toprule
constraint rows & recovered method \\
\midrule
none & naive sharing \\
one sampled $\nabla L_u$ (inequality) & A-GEM / the threshold-free sign gate \\
all stored $\nabla L_u$ (inequalities) & the active-set gate (Sec.~\ref{si:beyond}) \\
all protected bases (equalities) & \OGD{}/\textsc{gpm} projection \\
bases with $\mathrm{sim}<\sstar$ only (equalities) & \IGFA's threshold gate \\
\bottomrule
\end{tabular}
\end{center}
The recoveries are exact rather than analogies: the closed-form sign gate and \OGD{}
agree with the QP to $10^{-15}$, and \IGFA's full training trajectory coincides with
the overlap-filtered-equality instance step for step ($10^{-15}$ over five tasks). The
methods differ only in which rows they enforce and how well those rows are estimated,
which is precisely the axis the scale experiments measure (Sec.~\ref{si:llm}).

\paragraph{Complexity, and one state object.} The only persistent state is the triple
of Algorithm~1---the tracked second moment (or its sketch), the per-task bases, and the
held-out micro-caches---giving $\mathcal{O}(dk)$ memory with $k\le k_{\max}$,
independent of the number of stored examples and of dataset size. This contrasts with
$\mathcal{O}(|\mathcal B|d)$
memory for a replay buffer $\mathcal B$, and with $\mathcal{O}(d)$ to $\mathcal{O}(d^2)$
memory for diagonal-to-full Fisher methods. Per update, the projection costs
$\mathcal{O}(dk)$, essentially a $k$-dimensional inner product and subtraction, and is
the dominant overhead. The per-task basis update is a thin eigendecomposition of
$\widehat\Sig_n$, costing $\mathcal{O}(dr^2)$ amortized once per task. In a
frozen-backbone overlay, $d$ is the adapter width rather than the full model width, so
$U$ remains small in absolute terms. The rank cap $k_{\max}$ is the capacity budget of
Sec.~\ref{sec:exp:simcap}: when it binds, the discarded direction is the least excited
one, which bounds the induced forgetting by its eigenvalue. At scale, the covariance
tracking itself can be replaced by an $\mathcal{O}(rd)$ Frequent-Directions sketch with
no $d\times d$ eigendecomposition (Sec.~\ref{si:cheapgn}). Because every quantity in
this paper---overlaps and $\sstar$, the QP constraint rows, the quadratic and KL/JSD
floors, the OOD ratio, drift velocity, and capacity occupancy---is a read-out of this
one state object, its failure modes are unified rather than method-specific: a stale
estimate announces itself through the drift velocity, small-sample floor estimates are
bias-corrected and \emph{abstain} when a null test cannot rule out zero conflict, and
constraint anchors must be held-out and adequately sized, since a train-anchored gate
provably protects memorized noise as if it were knowledge (Sec.~\ref{si:beyond}).

\paragraph{Relation to \OGD{}/\textsc{gpm}.} Setting $\sstar=\infty$, so that all past
subspaces are protected, recovers orthogonal projection \cite{farajtabar2020,saha2021}
exactly. \IGFA{} is therefore a strict generalization: it preserves the protective
behavior of \OGD{}/\textsc{gpm} in the conflict regime, but restores sharing when tasks
are aligned. Offline, applying the same primitive to a batch of pre-trained task
vectors yields the $\Sig$-orthogonal merge of Theorem~\ref{thm:merge}. Online, the
threshold $\sstar$ is recovered by a validation-greedy rule (Sec.~\ref{si:sstar}).

\paragraph{Why a shared head with a gate, not one head per task.} The simplest way to
avoid interference is to assign each task its own head and never share. Strict
isolation, however, defeats the purpose of continual learning: a network that isolates task
$B$ in a separate head cannot carry what it learned there back to task $A$, so forward and
backward transfer become impossible. \IGFA{} instead keeps a single shared head and
resolves interference inside it with the similarity gate. The gate is the rule that
permits sharing when similarity exceeds $\sstar$ and orthogonalizes only when sharing
would create conflict, rather than isolating everything by default.

\paragraph{Online operation under feature drift.} In a stream, one cannot look ahead to
the segment $w_A\!\to\!w_B$ that defines $\bar H_A$ in Theorem~\ref{thm:general}. The
online surrogate is therefore to track the geometry as it drifts, maintaining a running
estimate of the Gauss--Newton second moment by the decayed update
\begin{equation}
C\leftarrow\gamma C+\mathbb{E}_{\text{batch}}[J_w J_w^\top],
\label{eq:gn-update}
\end{equation}
and taking $U$ to be its significant eigenvectors via incremental or streaming PCA.
This is the self-correcting counterpart of a stale protected subspace fixed once at
task onset, as in \OGD{}/\textsc{gpm}. The distinction matters under drift: a stale
subspace treats drifted copies of the same direction as new ones, so its rank keeps
growing until it fills the parameter space, whereas the recursive estimate recognizes
them as one low-rank subspace and stays bounded (Sec.~\ref{sec:exp:drift}). This is the same
failure mode reported for subspace-projection merging in large language models
\cite{merging2025llm}, and the proposed solution is the recursive Gauss--Newton update.

\paragraph{Regimes.} \IGFA{} instantiates naturally in three deployment settings:
(i) \emph{from scratch}, acting on the full parameter matrix; (ii) \emph{frozen-backbone
$+$ PEFT}, modulating only adapter, LoRA, or head parameters, where Assumption A1 and
hence the functional are exact; and (iii) \emph{overlay-native} operation, embedding
subspace tracking inside a read-write decomposition over a frozen transformer, which is
the natural regime for replay-free continual pretraining.

\paragraph{When to use \IGFA{} rather than projection, replay, or a Fisher penalty.} The
choice is itself a measurable pre-flight decision, from pairwise subspace overlap and
target agreement (Sec.~\ref{si:feasibility}). \emph{Disjoint supports} $\Rightarrow$
structural allocation is lossless, so \IGFA{} reduces to \OGD{} and neither replay nor a
Fisher penalty is needed. \emph{Overlapping supports, aligned targets} $\Rightarrow$
share, which the signed gate grants and unconditional projection forfeits.
\emph{Overlapping supports, conflicting targets} $\Rightarrow$ an irreducible floor, and
the only decision is where its cost lands: forgetting (naive), deferred fit (projection),
or stored data (replay). \emph{Cumulative rank near capacity} $\Rightarrow$ add capacity,
replay, or graceful overwriting. The gate's marginal value over projection is organized by
the stream's share density and is significant where overlaps straddle the
threshold (Sec.~\ref{sec:exp:cifar}). Two operating assumptions bound this framing and are
made explicit here rather than deferred: task boundaries are treated as known (relaxed by
the recursive tracker and the velocity detector of Sec.~\ref{si:extensions}), and for
language the retention mechanism transfers but the transfer-recovering similarity signal
is the threshold-free functional gate of Sec.~\ref{si:beyond} rather than input geometry.

\section{Experiments}
\label{sec:exp}

We organize the experiments around the paper's central claims.
Sections~\ref{sec:exp:functional}--\ref{sec:exp:simcap} validate the theory in the
linear-on-features regime ($d=24$, random-subspace tasks, unit-norm targets),
Section~\ref{sec:exp:breakdown} tests the architecture-general correction under feature
drift, Section~\ref{sec:exp:real} evaluates \IGFA{} on real benchmarks in both the
dissimilar-task and similar-task regimes, Section~\ref{sec:exp:cifar} scales the comparison
to a frozen ViT backbone, and Section~\ref{sec:exp:drift} contrasts
recursive subspace tracking with a stale protected subspace under drift. Experiments on
additional benchmarks, scale,
and the algorithmic refinements are reported in full in the Supporting Information. All
figures regenerate from \texttt{experiments.py} and \texttt{experiments\_rev.py} with
fixed seeds.

\subsection{The functional is exactly the forgetting}
\label{sec:exp:functional}
We draw random task pairs, learn $A$ then $B$, and compare the measured forgetting of
$A$ against the predicted interference energy $\tfrac12\Del^\top\Sig_A\Del$
(Fig.~\ref{fig:functional}). For a linear head the two coincide to machine precision
over $160$ pairs, and for a one-hidden-layer network the prediction formed from the
last-layer feature covariance still tracks the measured forgetting with Pearson
$r>0.9999$. In the frozen-feature regime,
forgetting is the same quantity as interference energy.

\begin{figure}[tb]\centering
\includegraphics[width=\textwidth]{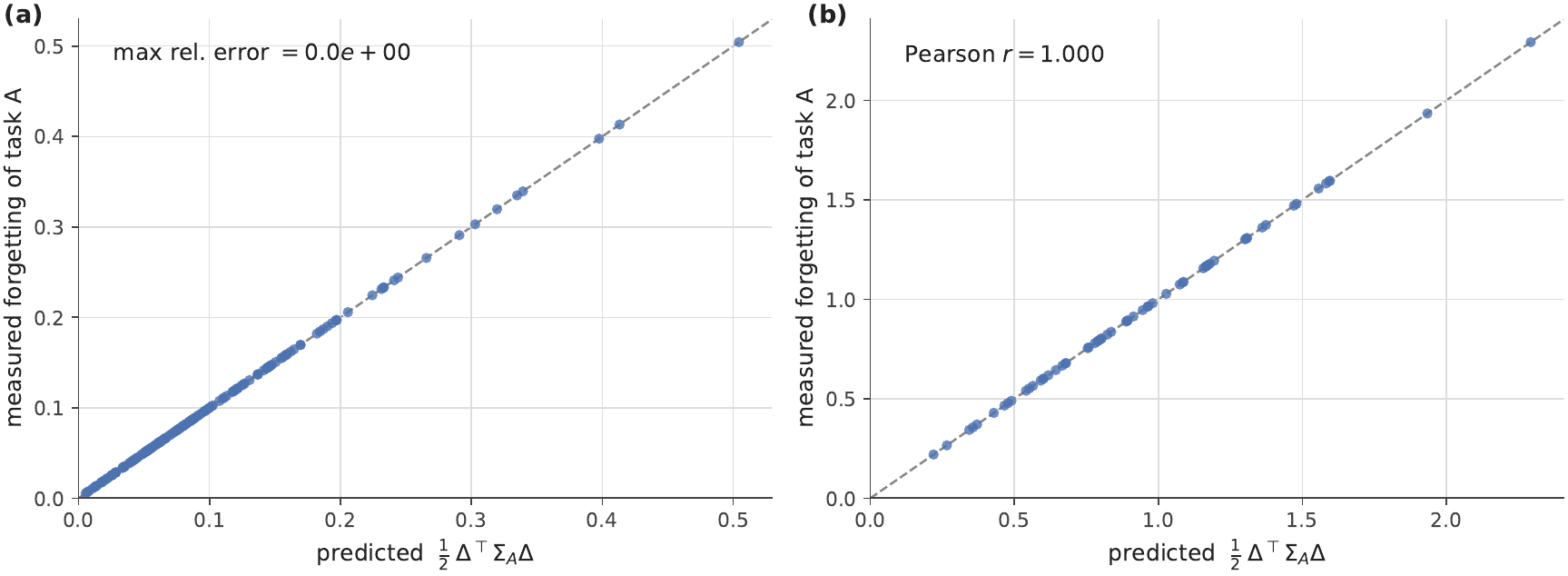}
\caption{The interference functional \emph{is} the forgetting. Measured
backward-forgetting of task $A$ after learning task $B$ against the predicted energy
$\tfrac12\Del^\top\Sig_A\Del$. \emph{(a)} A linear head on frozen features: the
identity is exact to machine precision. \emph{(b)} A nonlinear network, with the
prediction formed from the last-layer feature covariance: first-order agreement,
Pearson $r>0.9999$. The dashed line is $y=x$.}
\label{fig:functional}
\end{figure}

\subsection{Lossless allocation, relocation of cost, and the floor}
\label{sec:exp:alloc}
On disjoint supports, orthogonal allocation is lossless: learning $B$ in $\ker\Sig_A$
leaves $L_A<10^{-17}$ while still driving $L_B\to0$, exactly as
Corollary~\ref{cor:removability} predicts. On overlapping supports, the cost cannot be
removed, only relocated: in the conflicting case, naive descent forgets $A$ by
$\Del L_A=13.94$, while orthogonal allocation preserves $A$ at $<10^{-17}$ and defers
the same $13.94$ as residual on $B$ (Fig.~\ref{fig:alloc}, left). Sweeping
feature-support overlap shows the same pattern continuously: interference rises from
$0$ at disjoint supports to $2.0$ at full overlap, while the optimal-merge floor sits
at exactly half (Fig.~\ref{fig:alloc}, right), confirming that overlap creates an
irreducible distortion floor and allocation changes where the cost lands rather than
its total amount. This relocation is fundamental because the distortion floor $\mathcal{D}$ is a lower
bound (Theorem~\ref{thm:merge}) that no single shared model can improve on. On overlapping supports, cost can be moved from irreversible
forgetting of the old task to recoverable deferred fitting of the new one.

\begin{figure}[tb]\centering
\includegraphics[width=\textwidth]{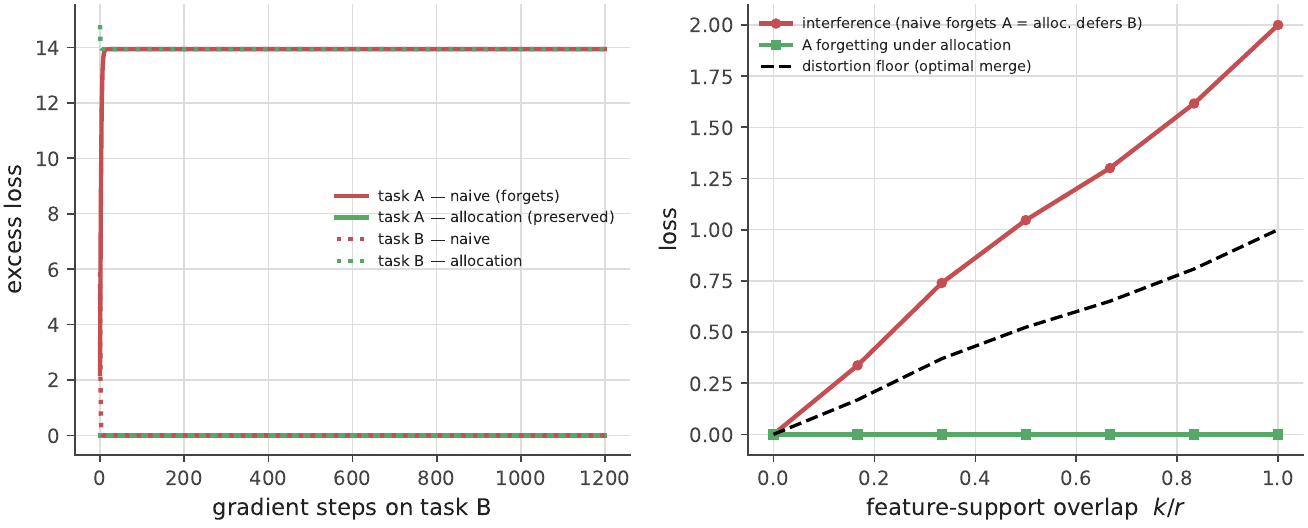}
\caption{Allocation and the distortion floor. \emph{Left:} on partially shared features
with a conflict on the shared directions, naive descent destroys task $A$ (red rises to
$13.94$) while allocation preserves it ($<10^{-17}$, green flat) and converts that same
$13.94$ into deferred under-fitting of $B$---cost is \emph{relocated}, not removed
(the task-$A$ naive and task-$B$ allocation curves coincide at $13.94$).
\emph{Right:} versus feature-support overlap $k/r$, the relocated interference (naive
$A$-forgetting $=$ allocation $B$-residual) grows linearly to $2.0$ while the
optimal-merge floor sits at exactly half ($1.0$); allocation holds $A$-forgetting at
$0$.}
\label{fig:alloc}
\end{figure}

\subsection{Similarity and capacity}
\label{sec:exp:simcap}
To test the signed gate, we compare sharing and orthogonalizing on a shared block as a
function of target similarity $s$ (Fig.~\ref{fig:simcap}, left). Sharing wins by
variance reduction when tasks are similar and loses by bias when they are not, with the
net benefit crossing zero at $\sstar\approx0.26$; above this threshold the gate should
share, below it orthogonalize. With exact, full data the two strategies are an exact
relocation of the same cost, so this sign-change is a finite-data effect
that unconditional projection methods forgo.
Separately, under orthogonal allocation a stream of rank-$2$ tasks exhibits a sharp
capacity knee at $T=d/r$ (Fig.~\ref{fig:simcap}, right) at which earlier tasks remain intact,
but once the occupied rank exceeds capacity, residual distortion turns on where
predicted. Capacity is thus a rate--distortion budget (the rank cap
$k_{\max}$ of Algorithm~1 is its operational form). Trading $d$ against the number of
tasks is an effective-capacity budget, and dropping the least-excited direction bounds
the induced forgetting by its eigenvalue.

\begin{figure}[tb]\centering
\includegraphics[width=0.49\textwidth]{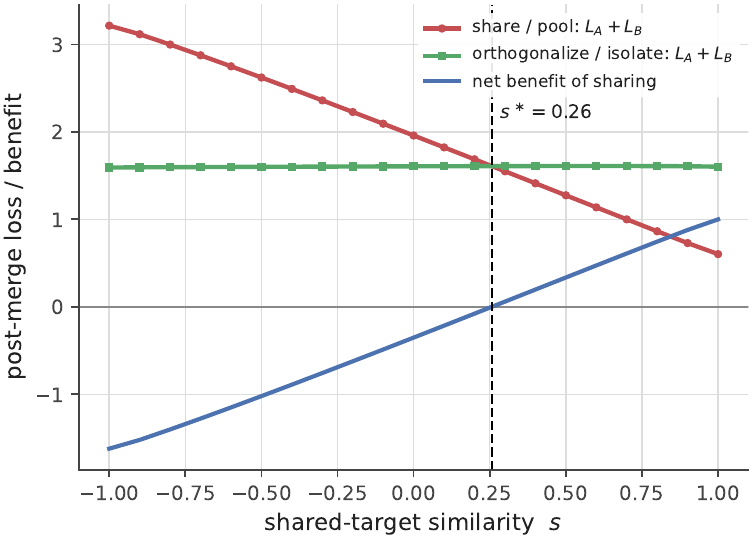}\hfill
\includegraphics[width=0.49\textwidth]{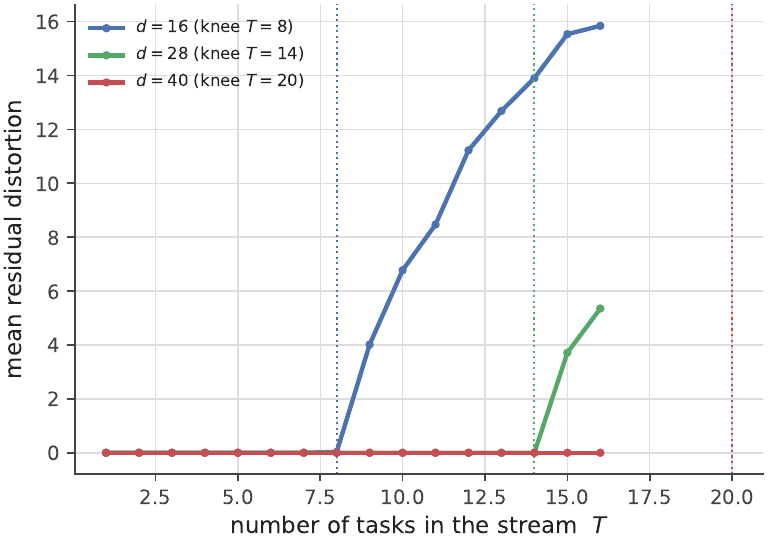}
\caption{Similarity and capacity. \emph{Left:} total population loss of pooling (share)
versus isolating (orthogonalize) two tasks on a shared block, against target similarity
$s$. Isolation is flat in $s$; pooling trades variance reduction against bias and wins
only when tasks are similar, with the net benefit crossing zero at $\sstar\approx0.26$.
\emph{Right:} mean residual distortion over a stream of rank-$2$ tasks under orthogonal
allocation, for three feature dimensions. Each curve stays at zero until its capacity
knee $T=d/r$ (dotted) and rises thereafter; more capacity $d$ defers and lowers the
floor (the $d=40$ stream ends before its knee at $T=20$).}
\label{fig:simcap}
\end{figure}

\subsection{Curvature drift and the segment-averaged correction}
\label{sec:exp:breakdown}
Theorem~\ref{thm:functional} is exact under frozen features, but in jointly trained
networks its frozen-curvature prediction degrades with depth. The shallow network holds
Pearson $r\approx0.93$--$0.95$ up to moderate step sizes, whereas deeper networks are
markedly less predictable, with $r$ falling to $\approx0.2$--$0.4$
(Fig.~\ref{fig:breakdown}, top). Replacing the
initial curvature by the segment-averaged curvature of Theorem~\ref{thm:general} removes
this failure mode: in the same regime, the corrected predictor restores $r=1.00$ at
every tested depth (Fig.~\ref{fig:breakdown}, bottom), estimated from forward
evaluations of $L_A$ along the segment $w_A\!\to\!w_B$ with no explicit Hessian. The
correction from $K=21$ probe evaluations consumes $1.4\%$ of one task's training budget, and $K=5$ already suffices in practice for under
$0.4\%$. Once the stale curvature is corrected, the architecture-general identity holds
exactly.

\begin{figure}[tb]\centering
\includegraphics[width=0.92\textwidth]{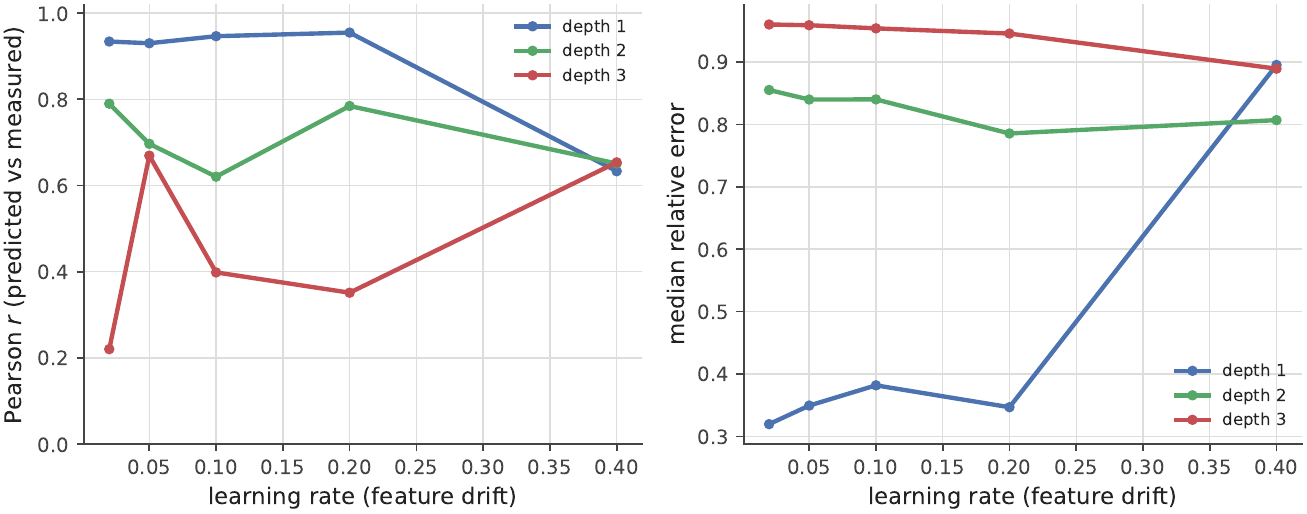}\\[8pt]
\includegraphics[width=0.6\textwidth]{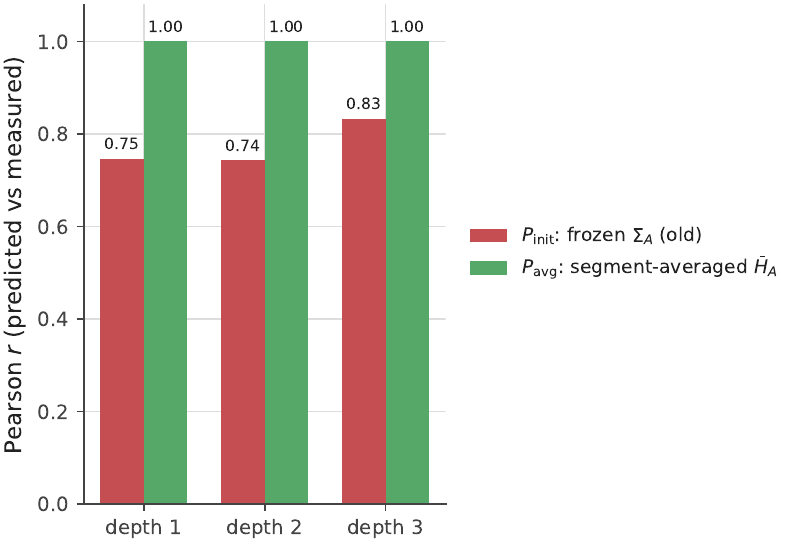}
\caption{Curvature drift and its correction. \emph{Top:} predicted-versus-measured
forgetting fidelity for backprop-trained networks, against learning rate (feature
drift) and depth. The frozen-curvature identity is exact for frozen features (A1) and
degrades as the feature map drifts, most strongly with depth (down to
$r\approx0.2$--$0.4$ for the deeper networks). \emph{Bottom:} at learning rate $0.4$,
replacing $\Sig_A$ by the segment-averaged Hessian $\bar H_A$ of
Theorem~\ref{thm:general} (green) restores $r=1.00$ at every depth, while the
frozen-curvature predictor (red) remains imperfect ($r=0.77$--$0.87$).}
\label{fig:breakdown}
\end{figure}

\subsection{Real data: dissimilar and similar tasks}
\label{sec:exp:real}
We evaluate the functional and \IGFA{} on real benchmarks built on frozen
random-feature backbones, so that Assumption A1 and hence Theorem~\ref{thm:functional}
hold exactly while the tasks remain non-trivial. For \textbf{Split-Digits}, we construct
a task-incremental benchmark from the \texttt{digits} dataset ($1797$ images, $10$
classes), split into five two-class tasks on top of a frozen random ReLU feature map
($d=300$). The functional predicts per-task forgetting across the full task sequence
with Pearson $r=0.999$, on the $y=x$ line, and the feature spectrum shows an effective
per-task rank of about $2$, matching the two-class structure and explaining why a small
protected subspace suffices (Fig.~\ref{fig:diagnostics}). On this sequence, \IGFA{}
attains $0.980\pm0.007$ average accuracy (mean $\pm$ 95\% CI over $10$ seeds) with
near-zero forgetting, matching the strongest replay- and Fisher-free structural baseline
\OGD{}/\textsc{gpm} and exceeding naive training, EWC, and a small replay baseline
(Table~\ref{tab:benchmark}). These digit-pair tasks have low pairwise feature overlap,
all below $\sstar$, so the gate orthogonalizes throughout and \IGFA{} reduces to \OGD{}
\emph{exactly}---the two are identical on every seed (paired $t$-test $p=1$). \IGFA{} maintains parity with the best structural baseline, with no buffer and no Fisher matrix.

\begin{table}[tb]\centering\small
\caption{Split-Digits task-incremental (frozen random-feature backbone, $5$ tasks,
$d=300$). Mean $\pm$ 95\% CI over $10$ seeds; average final accuracy (higher better) and
average forgetting on earlier tasks (lower better). \IGFA{} matches the best structural
baseline---and is identical to \OGD{} on every seed (paired $t$-test $p=1$)---with no
replay buffer and no Fisher matrix.}
\label{tab:benchmark}
\begin{tabular}{lccc}
\toprule
Method & Avg.\ accuracy $\uparrow$ & Avg.\ forgetting $\downarrow$ & Extra state \\
\midrule
Naive            & $0.959\pm0.014$ & $0.040\pm0.017$  & --- \\
EWC \cite{kirkpatrick2017}  & $0.978\pm0.005$ & $0.005\pm0.006$  & Fisher ($\mathcal{O}(d)$) \\
Replay ($10$/task) & $0.977\pm0.009$ & $0.015\pm0.010$ & buffer ($\mathcal{O}(|\mathcal B|d)$) \\
\OGD{}/\textsc{gpm} \cite{farajtabar2020,saha2021} & $\mathbf{0.980\pm0.007}$ & $\mathbf{0.003\pm0.003}$ & subspace ($\mathcal{O}(dk)$) \\
\IGFA{} (ours)   & $\mathbf{0.980\pm0.007}$ & $\mathbf{0.003\pm0.003}$ & subspace ($\mathcal{O}(dk)$) \\
\bottomrule
\end{tabular}
\end{table}

\begin{figure}[tb]\centering
\includegraphics[width=\textwidth]{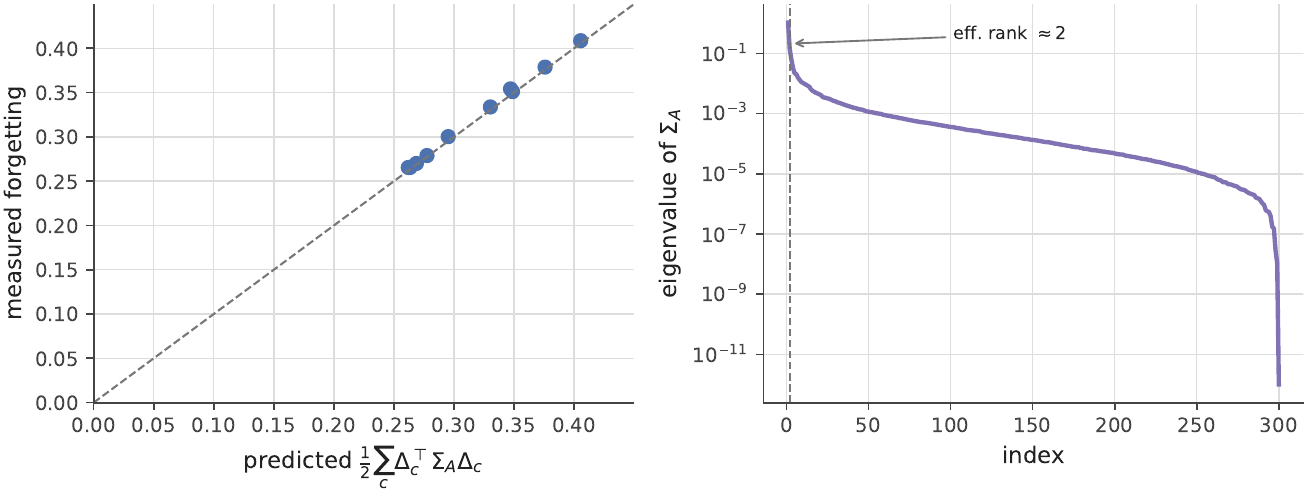}
\caption{Diagnostics on real data (Split-Digits). \emph{Left:} measured versus
predicted forgetting $\tfrac12\sum_c\Del_c^\top\Sig_A\Del_c$ for all (earlier, later)
task pairs---on the $y=x$ line, Pearson $r=0.999$, confirming
Theorem~\ref{thm:functional} on real features. \emph{Right:} the eigenvalue spectrum of
a task's feature second-moment $\Sig_A$; its effective rank ($\approx2$) is the
dimension that must be protected, and the rapid decay is why a small subspace summary
suffices.}
\label{fig:diagnostics}
\end{figure}

To test the sharing branch, we next consider \textbf{Rotated-Digits}, a
domain-incremental benchmark on the same ten classes under five rotations, again with a
frozen random-feature backbone. These tasks are similar, with mean pairwise
feature-subspace overlap $0.68$. In this regime, unconditional
orthogonalization should be too conservative because the tasks share reusable structure
rather than only conflict. Here, \IGFA{} with threshold $\sstar=0.65$ (within the broad
optimum plateau of Sec.~\ref{si:sstar}; the break-even similarity is regime-specific) attains
the highest accuracy among replay- and Fisher-free methods at $0.771\pm0.030$ (mean $\pm$ 95\%
CI over $5$ seeds), \emph{significantly} above \OGD{}'s $0.684\pm0.035$ (paired $t$-test
$p\approx6\times10^{-5}$) and above naive ($0.714\pm0.024$) and EWC ($0.727\pm0.050$),
while keeping forgetting near zero at $0.002\pm0.027$---statistically indistinguishable
from \OGD{}'s ($p=0.44$) (Table~\ref{tab:benchmark2}). Only buffer-based replay attains
higher accuracy ($0.795\pm0.012$, $p=0.03$), at the cost of stored data. \IGFA{} thus
matches the best structural baseline on dissimilar tasks, and its signed gate yields a
\emph{statistically significant} advantage in the similar-task regime for which it was
designed. Section~\ref{sec:exp:cifar} confirms the dissimilar-task reading at scale.

\begin{table}[tb]\centering\small
\caption{Rotated-Digits domain-incremental (frozen backbone, five rotations, mean
subspace overlap $0.68$, threshold $\sstar=0.65$). Mean $\pm$ 95\% CI over $5$ seeds.
\IGFA{} attains the highest accuracy among replay- and Fisher-free methods with near-zero
forgetting---an intermediate between full isolation (\OGD, paired $t$-test
$p\approx6\times10^{-5}$) and full sharing (naive) that improves significantly on both.
Only buffer-based replay attains higher accuracy ($p=0.03$), at the cost of stored data.}
\label{tab:benchmark2}
\begin{tabular}{lccc}
\toprule
Method & Avg.\ accuracy $\uparrow$ & Avg.\ forgetting $\downarrow$ & Extra state \\
\midrule
Naive            & $0.714\pm0.024$ & $0.163\pm0.013$  & --- \\
EWC \cite{kirkpatrick2017}  & $0.727\pm0.050$ & $0.045\pm0.039$  & Fisher \\
\OGD{}/\textsc{gpm} \cite{farajtabar2020,saha2021} & $0.684\pm0.035$ & $-0.007\pm0.004$ & subspace \\
Replay ($10$/class) & $0.795\pm0.012$ & $0.042\pm0.014$ & buffer \\
\IGFA{} (ours)   & $\mathbf{0.771\pm0.030}$ & $0.002\pm0.027$ & subspace \\
\bottomrule
\end{tabular}
\end{table}

\subsection{Scaling checks: frozen-ViT benchmarks over five seeds}
\label{sec:exp:cifar}
We repeat the comparison on three standard benchmarks over a frozen ViT-B/16 backbone with
a single shared linear head, so A1 and Theorem~\ref{thm:functional} hold exactly:
Split-CIFAR-100 ($10$ tasks of $10$ classes), ImageNet-R ($10$ of $20$), and CUB-200
($10$ of $20$), each over five seeds with per-seed class orders (mean $\pm$ 95\% CI;
\texttt{kaggle\_vit\_multiseed.py}; Table~\ref{tab:cifar}). Three observations follow.
\emph{(i) On dissimilar-class streams the gate correctly reduces to \OGD.} On
Split-CIFAR-100 and CUB every pairwise subspace overlap lies below the threshold
($0.31$--$0.54$ vs $\sstar=0.6$), so \IGFA{} protects everything and is identical to \OGD{}
on every seed; the two attain the best retention among buffer- and Fisher-free methods
(forgetting $0.001$ and $-0.004$, vs naive $0.009$ and $0.010$).
\emph{(ii) On ImageNet-R the sharing branch pays off at scale.} There the overlaps straddle
the threshold ($0.51$--$0.65$), the gate shares some past subspaces while protecting
others, and \IGFA{} significantly exceeds \OGD{} ($0.788\pm0.004$ vs $0.781\pm0.005$,
paired $p=0.001$) at equal, zero forgetting---the similar-task advantage of
Sec.~\ref{sec:exp:real}, previously isolated on Rotated-Digits, appears on a standard
benchmark. \emph{(iii) Extra state buys accuracy, not retention.} A tuned Fisher penalty
(EWC) attains the highest raw accuracy on these mild streams at $\mathcal O(d)$ Fisher
state, while feature replay under an unmasked objective \emph{hurts} on two of the three
benchmarks and carries the largest forgetting---stored data is not automatically an
advantage. Forgetting is small for every method on these disjoint-class streams because
the frozen backbone already separates the tasks; the structural claim therefore concerns
retention and state, not raw accuracy.

\paragraph{A similar-task stream at scale.} To test the gate's regime beyond rotations we
build a domain-incremental stream from CIFAR-100's superclass structure: five tasks, each
classifying into the same $20$ coarse labels but drawing on disjoint fine classes (one per
superclass per task), so consecutive tasks share semantics (subspace overlaps
$0.52$--$0.61$). Structural allocation carries this stream: \IGFA{}/\OGD{} reach
$0.786\pm0.010$ versus naive $0.742\pm0.006$ ($p=0.0009$), with forgetting $0.169$ versus
$0.237$; EWC's anchor blocks adaptation entirely ($0.662\pm0.020$), and only replay,
storing data, does better ($0.822\pm0.002$). The gate matches \OGD{} here ($p=0.37$)
because most overlaps sit just below $\sstar$---consistent with the ImageNet-R reading
that the sharing advantage appears where overlaps straddle the threshold.

\paragraph{When the gate pays is a measurable property of the stream.} Across all six real
streams of this paper, the gate's realized advantage over unconditional projection is
organized by one pre-deployment statistic: the stream's \emph{share density}, the fraction
of task pairs whose measured similarity crosses the gate threshold
(Fig.~\ref{fig:density}; \texttt{fig\_stream\_density.py}). On the three zero-density
streams (Split-CIFAR-100, CUB, the AG-News LoRA domains) the gate provably reduces to
\OGD{} and ties exactly; the low-density superclass stream ties within noise; and the two
positive-density streams are precisely the two statistically significant wins, ordered by
density (Rotated-Digits, density $0.54$, $+0.087$; ImageNet-R, density $0.27$, $+0.007$;
Spearman $0.90$ over the six streams). The statistic is computable before training from
the same probes the pre-flight diagnostic already runs, so whether the gate is worth
deploying is itself a pre-flight question rather than a post-hoc finding.

\begin{figure}[tb]\centering
\includegraphics[width=0.62\textwidth]{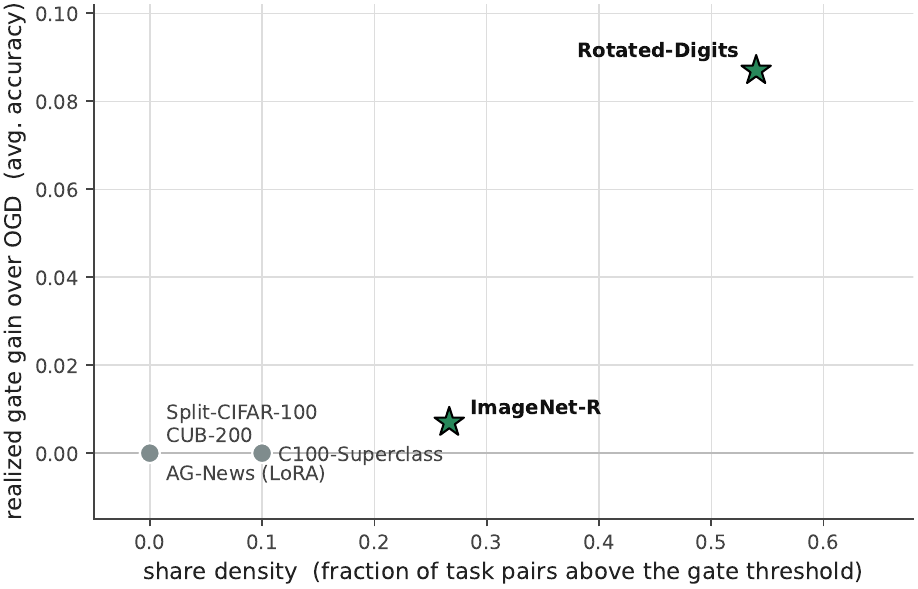}
\caption{The gate's realized gain over \OGD{} against the stream's share density, for all
six real streams in this paper (stars: statistically significant gains; grey: exact or
statistical ties). Zero-density streams tie by construction---the gate has nothing to
decide---and the gain grows with density (Spearman $0.90$). Share density is measurable
before training, making gate-versus-projection a pre-deployment decision.}
\label{fig:density}
\end{figure}

The same frozen backbone also supports a \emph{prompt}-based instantiation beyond the
linear head, and a pair of protocols brackets exactly when gating pays. When all tasks
reuse one shared output head---the high-interference regime of a deployed adapter serving
every task---gating the prompt and head updates lifts accuracy from $0.416$ to $0.733$
(single run, \texttt{kaggle\_pending\_experiments.py}): the structural rule carries over to
prompt space wherever interference is present. When each class instead owns its output row
and logits are task-masked---a protocol that removes head interference by
construction---naive prompt-tuning barely forgets ($0.017\pm0.016$) and the gate is
statistically indistinguishable from it ($0.917\pm0.024$ vs $0.926\pm0.024$, paired
$p=0.20$; five seeds, \texttt{kaggle\_vit\_multiseed.py}). The pair is the functional's own
prediction made visible: the gate's benefit scales with the interference energy actually
present, and vanishes when the protocol allocates disjoint parameters up front.

\paragraph{Relation to prompt-pool methods.} L2P, DualPrompt, and CODA-Prompt
\cite{wang2022l2p,wang2022dualprompt,smith2023coda} also operate on a frozen pretrained ViT backbone.
They circumvent the physical memory overhead and data-retention privacy risks of traditional Buffer-based
methods by routing inputs dynamically into a virtual parameter buffer, the prompt pool. 
However, their retention remains entirely empirical based on
an input-conditioned selection mechanism to map classes without test-time task identities.
Our approach shifts from statistical heuristic to mathematical design.
By utilizing a task-incremental protocol with a single shared linear head, \IGFA{} forces
protected directions to remain first-order invariant by construction
(Corollary~\ref{cor:removability}).
While its Split-CIFAR-100 accuracy sits below highly optimized prompt-pool baselines,
the raw accuracies are fundamentally mismatched across these two distinct evaluation paradigms.
The claim is confined to the buffer- and Fisher-free regime where \IGFA{} attains the strongest known retention.

\begin{table}[tb]\centering\small
\setlength{\tabcolsep}{4pt}
\caption{Frozen-ViT scaling checks, task-incremental with a single shared linear head (A1
exact); mean $\pm$ 95\% CI over five seeds (\texttt{kaggle\_vit\_multiseed.py}). On the
dissimilar-class streams (Split-CIFAR-100, CUB) \IGFA{} reduces to \OGD{} on every seed and
attains the best buffer- and Fisher-free retention; on ImageNet-R, whose subspace overlaps
straddle $\sstar$, the gate significantly exceeds \OGD{} (paired $p=0.001$) at zero
forgetting. EWC carries $\mathcal O(d)$ Fisher state; replay stores $10$ features/class.}
\label{tab:cifar}
\begin{tabular}{lcccccc}
\toprule
 & \multicolumn{2}{c}{Split-CIFAR-100} & \multicolumn{2}{c}{ImageNet-R} & \multicolumn{2}{c}{CUB-200} \\
\cmidrule(lr){2-3}\cmidrule(lr){4-5}\cmidrule(lr){6-7}
Method & acc $\uparrow$ & forget $\downarrow$ & acc $\uparrow$ & forget $\downarrow$ & acc $\uparrow$ & forget $\downarrow$ \\
\midrule
Naive  & $0.945\pm0.010$ & $0.009\pm0.006$ & $0.795\pm0.007$ & $0.014\pm0.008$ & $0.939\pm0.008$ & $0.010\pm0.006$ \\
EWC \cite{kirkpatrick2017} & $0.953\pm0.006$ & $0.000\pm0.000$ & $0.808\pm0.003$ & $0.000\pm0.000$ & $0.941\pm0.011$ & $0.009\pm0.009$ \\
Replay & $0.928\pm0.005$ & $0.025\pm0.004$ & $0.740\pm0.008$ & $0.071\pm0.007$ & $0.941\pm0.003$ & $-0.000\pm0.005$ \\
\OGD{}/\textsc{gpm} \cite{farajtabar2020,saha2021} & $0.933\pm0.006$ & $0.001\pm0.002$ & $0.781\pm0.005$ & $-0.000\pm0.003$ & $0.916\pm0.006$ & $-0.004\pm0.002$ \\
\IGFA{} (ours) & $0.933\pm0.006$ & $0.001\pm0.002$ & $0.788\pm0.004$ & $0.000\pm0.003$ & $0.916\pm0.006$ & $-0.004\pm0.002$ \\
\bottomrule
\end{tabular}
\end{table}

\subsection{Online drift: recursive estimation versus a stale subspace}
\label{sec:exp:drift}
The retention guarantees so far fix the protected subspace at task onset. Under
representation drift---the regime of jointly trained and large models---an unconditional projection fails, and the recursive Gauss--Newton surrogate of
Sec.~\ref{sec:algo} is justified. We stream tasks whose feature geometry rotates each
step while the underlying subspace stays low-rank, and contrast a stale protected subspace
(fixed at onset, as in \OGD{}/\textsc{gpm}) with the recursive estimate. The stale subspace
accumulates each drifted copy as if new, so its rank climbs linearly until it fills the
parameter space ($\mathrm{rank}\!\to\!d=60$) and free plasticity collapses to zero---the
network can no longer learn. The recursive estimate recognizes the drifted copies as one
subspace, holding the effective rank at the true value ($\approx4$) and preserving $93\%$
of capacity throughout (Fig.~\ref{fig:drift}). Re-estimating the geometry is therefore not
a refinement but the difference between exhausting capacity and retaining it under drift;
this is the subspace-projection failure mode reported for LLM merging
\cite{merging2025llm}, and the recursive update removes it. We also compared a
geodesic-aligned tracker (a Grassmann-manifold step in the spirit of GAGP
\cite{geodesic2025}), holding the base optimizer, learning rate, and batch order fixed: it
likewise caps the rank at the true value, but tracks the current subspace at markedly lower
fidelity ($\approx0.5$ versus $\approx1.0$ for the recursive estimate across learning rates
$0.05$--$0.4$), so the path-averaged Gauss--Newton update of Theorem~\ref{thm:general} is the
more faithful deep-network drift tracker.

\begin{figure}[tb]\centering
\includegraphics[width=\textwidth]{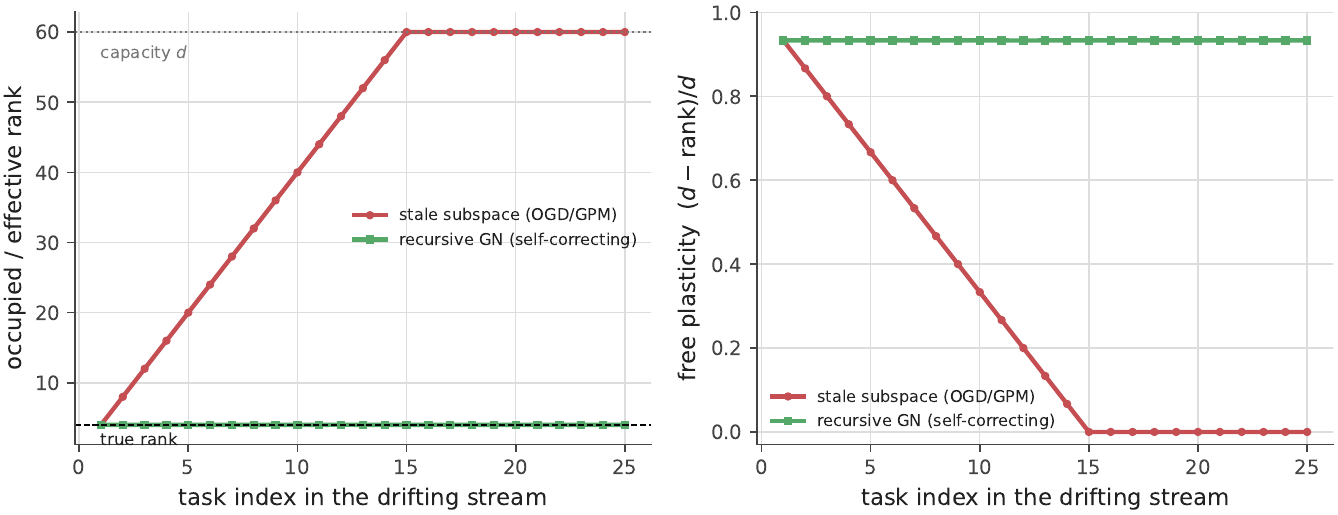}
\caption{Online subspace update under feature drift. \emph{Left:} a stale protected subspace
(\OGD{}/\textsc{gpm}, red) accumulates drifted copies and its rank grows to capacity $d$
(exhaustion); the recursive Gauss--Newton estimate (green) stays at the true rank ($\approx4$).
\emph{Right:} consequently the stale method's free plasticity $(d-\mathrm{rank})/d$ collapses
to $0$ (cannot learn new tasks) while the recursive one retains $\approx0.93$.}
\label{fig:drift}
\end{figure}

\section{Discussion}
\label{sec:discussion}

The identity $\Del L_A=\tfrac12\,\Del^\top\Sig_A\,\Del$ (Eq.~\eqref{eq:functional}) reorganizes the continual-learning
problem around geometry.Components in $\range\Sig_A$ incur forgetting, whereas components in $\ker\Sig_A$ are free.
This perspective determines, before training begins, whether two tasks must interfere at all, what
irreducible floor remains when they do, and which inner product renders model merging
optimal---replacing the Euclidean metric with the $\Sig$-metric. In that
sense, the functional plays the same conceptual role here that the NTK \cite{jacot2018}
played for generalization: it provides a geometry and a set of computable quantities in
place of heuristics.

\paragraph{A map of the claims.} The paper makes three kinds of claims.  Some are
\emph{(I) exact} in frozen-feature or function-space regime (the interference identity, the removability dichotomy, the
$\Sig$-orthogonal optimal merge, the distortion floor, and continuous-time exact
retention, Sec.~\ref{si:beyond}). \emph{Exact-with-quantified-residual}: end-to-end parameter-space
retention, exact up to the basis error of Proposition~\ref{prop:esterror} and the
finite-step remainder of Theorem~\ref{thm:general}, both measured. Some are  \emph{(III)
empirically supported} and remain exact up to measured approximation terms under end-to-end training (the multi-seed benchmark results, Split-Digits, Rotated-Digits,
the frozen-ViT streams, and the threshold-free gate's real-text gain over naive
fine-tuning). \emph{(IV) Mechanism probes and
conjectures.} The generality extensions (federated merging, out-of-distribution and
task-free signals, physics-informed constraints, Fisher-metric policy gradients,
meta-learning, memory-and-control instances) are exact-regime proof-of-concepts
labelled in the Supporting Information as \emph{proved-in-A1}, \emph{empirical mechanism
probe}, or \emph{forward-looking conjecture}.

 and some are empirical demonstrations of the mechanism at benchmark and language-model scale. Keeping these tiers separate clarifies both what is proved and what is shown experimentall

\paragraph{Forgetting decomposes into incompatibility, capacity, and control.} The
results above assemble into an attribution identity. Writing $\mathcal{D}^\star_\infty$
for the floor of any single predictor (Theorem~\ref{thm:mi}; under cross-entropy the
conditional mutual information), $\mathcal{D}^\star_{\mathrm{class}}$ for the optimum of
the \emph{deployed} class (width, rank, architecture), and $L$ for what training
achieved, every observed forgetting level splits exactly, term by non-negative term,
into $L=\mathcal{D}^\star_\infty+[\mathcal{D}^\star_{\mathrm{class}}-\mathcal{D}^\star_\infty]
+[L-\mathcal{D}^\star_{\mathrm{class}}]$: \emph{incompatibility}, a property of the task
family that no gate, replay, or width removes. \emph{Capacity} is a property of the model
class that only widening or routing removes and \emph{control} a property of the
training policy that only a better gate, replay schedule, or ordering removes. In the
exact regime each intervention moves only its own term. Widening eliminates the
capacity term while no policy touches it and the active-set gate cuts the control term
while the floor is invariant throughout (Sec.~\ref{si:beyond}). The practical
consequence is a diagnosis rule that precedes every method choice in this paper's
taxonomy: measure which term dominates (floor from the data, class optimum from the
deployed geometry, control as the remainder) \emph{before} deciding on a mechanism, because each
family of continual-learning methods can repair exactly one of the three. The
attribution is measurable end to end on real continual pretraining: on the four-domain
language stream, a certified floor bound of $0.005$ nats/token against observed
forgetting of $0.16$--$0.40$ establishes that at least $97\%$ of the forgetting was
avoidable in principle, with the control term dominating or co-dominating at both
scales and the active-set gate removing a third to a half of it
(Sec.~\ref{si:beyond}). A further
split follows from the identity's anchor: evaluated on the deployment rather than the
training distribution, forgetting separates into knowledge loss and memorization loss,
and the two can carry opposite signs. An interfering update can raise train loss while
\emph{improving} the deployed function, so part of what benchmarks report as
catastrophic forgetting is implicit unlearning of noise. Protection must therefore
anchor on validated data: a gate fed training gradients defends memorized noise as if
it were knowledge, and is dominated on both axes by its held-out-anchored counterpart
(Sec.~\ref{si:beyond}).

The central assumption is A1. For frozen backbones and PEFT, the regime most
practical deployments now occupy, it is exact and the results stand without
approximation, as the real-data identity at $r=0.999$ confirms. For full networks it is
first-order, and the paper measures how that approximation degrades: the
prediction remains accurate for shallow, small-step training and falls to
$r\approx0.2$--$0.4$ for deeper networks (identity becomes first-order, Sec.~\ref{sec:exp:breakdown}).
Allocation does not eliminate the floor, but only redistributes its cost.
Task ordering does not reduce the floor either because the floor is a property of the joint support
overlap. It is essentially invariant to task order (a $\Sig$-overlap curriculum did not lower it
in our tests, Sec.~\ref{si:extensions}), which reinforces that allocation only relocates cost.
Against unconstrained replay, which can reduce the floor by re-supplying $\Sig_A$, the
claim is therefore parity under no-buffer or
no-Fisher constraints together with structural guarantees and preserved transfer where
similarity permits. On the benchmark, \IGFA{} matches the best structural baseline because
the tasks are dissimilar; its distinctive sharing gain requires task similarity, isolated
on Rotated-Digits and confirmed at scale on ImageNet-R, where the gate significantly
exceeds unconditional projection at equal forgetting (Sec.~\ref{sec:exp:cifar}).

The geometry also yields an operational decision procedure. Before training, a linear
task-probe on the frozen features estimates task separability (Sec.~\ref{si:feasibility});
pairwise subspace overlap and target similarity then classify the stream. Disjoint supports
make structural allocation lossless, so neither replay nor a Fisher penalty is needed.
Overlapping supports with aligned targets favor sharing via gate. Overlapping
supports with conflicting targets carry an irreducible floor, which poses the question
of where to assign its cost: on the old task as forgetting, on the new task as deferred fit, or
on the memory budget as stored data for replay. The capacity budget adds the final
constraint: once the cumulative occupied rank approaches the feature dimension, retention
must be paid for with added capacity, replay, or graceful overwriting. Each branch of this
procedure is measurable from features and subspaces before or during training, rather than
diagnosed after a failure.

A recent broad evaluation reports that subspace-projection merging works well on small
classifiers but degrades on large language models \cite{merging2025llm}.
On small classifiers the feature geometry is nearly
static, so a subspace fixed once remains a good metric and projection succeeds, which is
precisely the frozen regime where Theorem~\ref{thm:functional} is exact. Large
language models, by contrast, have highly entangled representations that drift as
parameters move, so projection against a stale geometry accumulates misaligned
constraints, exhausts capacity, and eventually damages the representation itself. A
natural fix follows from Theorem~\ref{thm:general}: the relevant metric is the curvature
along the trajectory, and its online surrogate is the recursive Gauss--Newton estimate,
which continually realigns the protected subspace with the current geometry and keeps its
effective rank bounded---the capacity-exhaustion mechanism shown directly in
Sec.~\ref{sec:exp:drift}. Preliminary language-model results suggest that the retention
mechanism transfers, while the right similarity signal for language remains open
(Sec.~\ref{si:llm}).

The main open design question is therefore not the structural retention mechanism, which
transfers across modalities, but a similarity signal for language that can recover
transfer on similar text tasks without relaxing protection where conflict
dominates. Two resolutions are developed in the Supporting Information: a probe-based
similarity estimate validated in the exact regime (Sec.~\ref{si:llm}), and---the
theorem-backed answer---the threshold-free functional gate of Sec.~\ref{si:beyond}, which
reads the measured interference rate instead of any modality-specific similarity signal,
carries a continuous-time monotone-retention guarantee, and delivers a significant,
threshold-free real-text gain over naive continual fine-tuning on a four-domain LoRA
stream, with parity-to-slight-edge over unconditional protection (Sec.~\ref{si:llm}). Several extensions are detailed in
the Supporting Information. First, we show sequential LoRA fine-tuning on language
models at scale (Sec.~\ref{si:llm}), using recursive Gauss--Newton tracking on the adapting
features together with a modality-appropriate similarity gate for language. Second, the
frozen-backbone evaluation on Split-CIFAR-100, ImageNet-R, and CUB appears in the main text
over five seeds (Sec.~\ref{sec:exp:cifar}), comparing the retention--transfer tradeoff
against \OGD{}/\textsc{gpm}, EWC, and replay in the exact A1 regime. Third, we present
overlay-native continual pretraining at the billion-parameter scale (Sec.~\ref{si:pretrain}),
where replay-free and Fisher-free operations are most valuable and where the capacity budget
of Sec.~\ref{sec:exp:simcap} predicts the model-size-versus-rehearsal tradeoff directly.
Finally, we introduce a pre-flight feasibility diagnostic (Sec.~\ref{si:feasibility}) that
estimates task separability from features before training, attributing any remaining gap to
drift, bias, or inference rather than to intrinsic overlap.

Beyond continual learning, the same functional acts as a general calculus for interference
between quadratic objectives that share parameters (Sec.~\ref{si:extensions}). Federated
aggregation is the merge problem of Theorem~\ref{thm:merge} with clients as tasks; the
occupied subspace supplies an out-of-distribution signal and a task-free boundary detector;
the physics constraint of a physics-informed network is a second task whose violation the
identity predicts exactly; the Fisher information plays the role of $\Sig$ for policy
gradients in reinforcement learning; and a feature map can be meta-trained to shrink the
distortion floor itself. These are directional demonstrations rather than benchmarks, but
they indicate that the geometry is a property of shared parameterizations, not of the
continual-learning protocol. Read as a control problem, the calculus also subsumes the
existing method families rather than competing with them: unconditional projection
(\OGD{}/\textsc{gpm}), Fisher penalties (EWC), replay, per-task routing, and dynamic
expansion are each the geometry-driven controller clamped to a single fixed
action---always project, always penalize, always rehearse, always separate---while the
signed gate is the regime-adaptive interior that reads the ledger and spends memory or
rank only where the floor is positive. In a controlled comparison across regimes the
adaptive rule is the Pareto point, strictly dominating every zero-resource fixed method
and matching replay's retention at a fraction of its memory (runnable in
\texttt{demo\_method\_taxonomy.py}, with the metric set and decision rules released as a
standalone interference-ledger toolkit).

\paragraph{A testable bridge to the verbalizable workspace.} The convergence with the
Jacobian lens (Sec.~\ref{sec:related}) yields a falsifiable prediction rather than only
an analogy. If our $\range\Sig_A$ and the lens's J-space are the same functional
subspace, then forgetting must be \emph{verbalizable-subspace-mediated}: the readout
change $\Del f$ that a later task induces on task $A$ should carry its interference
energy inside $A$'s J-space, and protecting that subspace should prevent forgetting as
effectively as protecting $\Sig_A$'s own eigen-subspace. We sketch the experiment
(\texttt{kaggle\_jspace\_interference.py}) on a frozen \texttt{pythia-410m}: fit or load
the released lens to obtain the J-space projector $P_J$ at a readout layer, learn the
four-domain stream with a low-rank head, and (i) decompose the measured forgetting of
domain $A$ into its $P_J$ and orthogonal parts, and (ii) compare naive learning against
gating the update off $P_J$, off $\Sig_A$'s top subspace, and off a random subspace of
equal rank. The interference geometry predicts a J-space energy fraction far above the
$r/d$ random baseline in (i), and \emph{protect-J-space} $\approx$ \emph{protect-}$\Sig$
$\ll$ \emph{naive} with \emph{protect-random} $\approx$ \emph{naive} in (ii)---a
specificity control that would establish the two averaged-Jacobian subspaces as one.
Measuring both on a frozen \texttt{pythia-410m} resolves the relationship into a
\emph{component} rather than an identity (\texttt{kaggle\_jspace\_layers.py}). The strong
form fails: the interference $\Sig_A=\mathbb{E}[gg^\top]$ is high-effective-rank at every
layer---$80\%$ of its energy needs at least $285$ of $1024$ dimensions---so a
workspace-sized subspace cannot carry the bulk of forgetting, which is also why no
low-rank projection protects in (ii), $\Sig_A$'s own top subspace included. But the
enrichment is real and, tellingly, grows toward the readout: the J-space carries $1.3$ to
$2.5\times$ its dimensional share of interference energy across layers $2$ to $22$
(enriched at every layer probed; correlation with depth $0.86$), and its overlap with
$\Sig_A$'s leading subspace rises from $0.05$ to $0.15$ against a $0.03$ chance baseline.
The verbalizable directions are thus systematically over-represented in the forgetting
geometry---increasingly so where representations become more readable---without being
identical to it, a measured link between what a model can say and what it can lose.

Several algorithmic refinements---a self-calibrating threshold, low-cost $\mathcal{O}(rd)$
online curvature via Frequent Directions
\cite{liberty2013}, a per-direction gate, and the soft-gating capacity coordinate
$r_{\rm eff}=\sum_j(1-\mathrm{keep}_j)$---are developed and validated in the Supporting
Information. The functional further extends to control and memory: an interference-budgeted
trust region that dominates step-size control, consolidation with a priced forgetting bound,
floor-scheduled replay, and generative replay from the subspace state itself
(Sec.~\ref{si:memory}). Together with low-rank or diagonal Gauss--Newton approximations of
$\bar H_A$, they suggest a practical route for carrying the architecture-general identity
into fully fine-tuned deep networks at controlled cost.

A further assumption deserves explicit statement: as presented, \IGFA{} assumes task
boundaries are known and switches are abrupt at training time. Section~\ref{sec:algo} and
Algorithm~1 use it most directly---the method processes ``each task $n$'' and forms a basis
$B_n$ from that task's estimated second moment. The assumption is operational bookkeeping,
not theory: the interference functional itself is boundary-free, and the machinery that
removes the bookkeeping already appears in this paper. The recursive Gauss--Newton tracker
replaces per-task second moments with a decayed running estimate, the subspace-velocity
detector recovers unknown boundaries at recall and precision $1.00$ on clean streams, and
their fully autonomous composition matches the oracle given the true boundaries
(Sec.~\ref{si:extensions}). Gradual transitions are the harder case;
there the boundary concept itself blurs, and the drift experiments of
Sec.~\ref{sec:exp:drift} show that the running estimate simply tracks the moving geometry
without requiring a boundary at all.

Several limitations bound these claims. Exactness of the parameter-space results is
confined to A1; the function-space formulation extends the identity, the dichotomy, the
floor, and the continuous-time retention guarantee beyond it (Sec.~\ref{si:beyond}),
leaving finite step size and metric estimation as the remaining---quantified---gaps, and
end-to-end \emph{parameter-space} predictions stay first-order, with a measured accuracy
envelope and the segment-averaged correction. The
frozen-ViT benchmarks and the merge ablation are five-seed, and the functional-gate
language result is now multi-seed at two scales (GPT-2, five seeds;
\texttt{pythia-410m}, three seeds), but the remaining language-model experiments are
single runs and the extension studies are synthetic proofs-of-concept; multi-seed
replication at billion-parameter scale is outstanding. The
similarity gate reads input geometry only in its threshold form; the threshold-free
functional gate resolves the language case in principle (Sec.~\ref{si:beyond}) but its
margins at scale are modest.

\begin{center}\small
\fbox{\begin{minipage}{0.95\columnwidth}
\textbf{What breaks the exact identities, and how.} \emph{Label noise / imperfect task
optima:} a non-zero residual $r_A\neq0$ adds the cross term $\mathbb{E}[\Del\!f\,r_A]$ of
Eq.~\eqref{eq:funcspace-main}; forgetting is then predicted by the \emph{full} identity,
not the energy alone, and mis-specification enters as a task-constant under A2 (the
$\Sig$-metric ranking is preserved). \emph{Early stopping / finite steps:} the update is
not the exact minimizer, leaving the $O(\eta)$ remainder of Theorem~\ref{thm:general} and
the diffusion term of Proposition~\ref{prop:esterror}---both measured, both shrinking with
step size and basis accuracy. \emph{Class imbalance / shifted input density:} $\Sig_A$ is
re-weighted by the deployment density, so the protected geometry must be estimated on the
distribution that matters, not the training mixture; a stale or mis-weighted $\Sig_A$
shows up as basis error $\epsilon$ and is caught by the drift-velocity monitor. In each
case the failure is a \emph{named, measurable} quantity in the identities, not an
unmodelled gap.
\end{minipage}}
\end{center}

\section{Conclusion}
\label{sec:conclusion}
Forgetting is interference, represented here as geometry. In the frozen-feature regime,
forgetting equals $\tfrac12\Del^\top\Sig_A\Del$ and is removable if and only if task
supports are disjoint; when supports overlap, an irreducible distortion floor appears and
matches the task-inference obstruction, and optimal merging is mutual
$\Sig$-orthogonalization, with continual learning as its online counterpart. The result is
a geometric language and a set of concrete levers---a forgetting functional, a removability
test, an optimal-merge identity, a similarity sign-change, and a capacity budget---realized
by a replay-free, Fisher-free allocation rule that carries only a low-rank subspace. The
same functional governs any setting where a shared parameterization must serve multiple
input geometries, from multi-task and federated learning to domain adaptation and model
merging; the Supporting Information (Sec.~\ref{si:extensions}) gives exact-A1 proofs-of-concept
for federated $\Sig$-merging, an intrinsic out-of-distribution signal, task-free boundary
detection, multimodal block interference, physics-informed constraints, Fisher-metric policy
gradients, and floor-minimizing meta-learning. In each setting, forgetting is controlled
geometrically: interference is not corrected after it occurs but allocated in advance, into
directions where it induces no loss.

\section*{Author contributions}
J.S. conceived the framework, derived the theory, implemented the experiments, and wrote the
manuscript.

\section*{Acknowledgments}
The author thanks colleagues at VARTA and TU Braunschweig for discussions on continual learning
in industrial process models.

\section*{Reproducibility statement}
\sloppy
All results are exact-regime simulations with fixed seeds. \texttt{experiments.py} regenerates
the core result figures and numerical claims (interference identity, lossless/relocation,
sign-change $\sstar$, capacity floor); \texttt{verify\_mi\_identity.py} the
information--estimation verification (Fig.~\ref{fig:mi}); \texttt{experiments\_rev.py} the
breakdown, breakdown-fix, and digit benchmarks; \texttt{variance\_report.py} the multi-seed
mean\,$\pm$\,CI and paired-significance results of Tables~\ref{tab:benchmark}--\ref{tab:benchmark2};
\texttt{experiments\_compare.py} the nine-method comparison and offline-merge residual-$D$
ablation of Fig.~\ref{fig:teaser} and Table~\ref{tab:compare};
\texttt{evaluate\_hypotheses.py} four generality extensions of Sec.~\ref{si:extensions}
(federated $\Sig$-merge, OOD signal, task-free boundaries, multimodal blocks) and
\texttt{evaluate\_applications.py} three more (physics-informed constraints, Fisher-metric
policy gradients, floor-minimizing meta-learning);
\texttt{evaluate\_next\_steps.py} the sketched-drift, trust-region, anchor-protection,
probe-similarity, consolidation, floor-scheduled-memory, dream-replay, cascade, and
world-model studies of Secs.~\ref{si:cheapgn}, \ref{si:llm}, and~\ref{si:memory};
\texttt{kaggle\_probe\_gate\_llm.py} the probe-gate protocol on real text (results in
\texttt{probe\_gate\_results.json});
\texttt{evaluate\_open\_questions\_new.py} the open-question battery, with each result
classified as exact-scope support, mechanism probe, or appendix scoping (only exact-scope
results are quoted in this manuscript), \texttt{evaluate\_oq\_hardened.py} an
end-to-end drift-and-noise probe on a gradient-trained network,
\texttt{evaluate\_unlearning\_real.py} the real-data unlearning study,
\texttt{evaluate\_upgrades.py} the noise-law, rate-controller, allocation, privacy, and
damping upgrades, \texttt{evaluate\_beyond\_a1.py} the function-space validations of
Sec.~\ref{si:beyond}, \texttt{evaluate\_functional\_gate.py} the threshold-free
functional gate (with \texttt{kaggle\_functional\_gate\_llm.py} its real-text protocol),
and \texttt{fig\_stream\_density.py} the share-density figure;
\texttt{kaggle\_interference\_optimizer.py} (the \textsc{InterferenceSGD} benchmark against
tuned learning-rate schedules) and \texttt{kaggle\_scale\_addons.py} (per-layer text
similarity, real-feature $\Sig_{\rm eval}$, and the pythia-160m capacity ledger) extend
these to scale;
\texttt{experiments\_drift.py} the $\sstar$ ablation, graceful-degradation, and drift figures;
\texttt{experiments\_extra.py} the self-calibrating-$\sstar$ and tight multiclass-constant
results; and \texttt{experiments\_roadmap.py} the feasibility-diagnostic, Frequent-Directions,
soft-knee, and per-direction figures. Self-contained Colab/Kaggle notebooks accompany the paper:
\texttt{colab\_split\_cifar100\_igfa.py} (Split-CIFAR-100 with a frozen ViT-B/16),
\texttt{colab\_lora\_llm\_igfa.py} and \texttt{colab\_lora\_llm\_igfa\_3approaches.py}
(sequential LoRA on a transformer with the four gate designs), and
\texttt{colab\_continual\_pretrain\_igfa.py} (boundary-free, replay-free,
Frequent-Directions continual pretraining on \texttt{pythia-1b}); and
\texttt{kaggle\_pending\_experiments.py}, which runs the real-backbone checks quoted above
(frozen-ViT Split-CIFAR-100 allocation and merge ablation, the drift/geodesic sweep,
prompt-\IGFA, and the FD-vs-full-vs-Oja LoRA-LM comparison), \texttt{kaggle\_H2\_H3\_H12.py}
(the layer-wise null-space-routing pipeline on a frozen transformer), and
\texttt{kaggle\_vit\_multiseed.py} (the five-seed frozen-ViT protocol: Split-CIFAR-100,
ImageNet-R, CUB, the similar-task CIFAR-100 superclass stream, the merge ablation, and
prompt-\IGFA, on two T4 GPUs). Code, figures, and logs are at
\url{https://github.com/j-stoerk/geometry-of-forgetting}.

\clearpage
\appendix
\renewcommand{\thesection}{S\arabic{section}}
\setcounter{section}{0}
\setcounter{figure}{0}\renewcommand{\thefigure}{S\arabic{figure}}
\setcounter{table}{0}\renewcommand{\thetable}{S\arabic{table}}

\section*{\centering Supporting Information}
\addcontentsline{toc}{section}{Supporting Information}
\medskip

\noindent This Supporting Information collects the experiments and derivations omitted from
the main text for length: deferred proofs (Sec.~\ref{si:proofs}), additional theory
verifications (Sec.~\ref{si:mi}), theory details
(Sec.~\ref{si:theory}), the full method comparison behind Fig.~\ref{fig:teaser}
(Sec.~\ref{si:compare}), benchmark and scale experiments
(Secs.~\ref{si:llm}--\ref{si:pretrain}), the algorithmic refinements referenced in the
Discussion, extensions of the framework to federated, out-of-distribution, task-free, and
multimodal settings (Sec.~\ref{si:extensions}), memory-systems and control extensions
(Sec.~\ref{si:memory}), and the function-space formalism that extends the exact results
beyond frozen features (Sec.~\ref{si:beyond}). All results are exact-regime simulations or
runnable notebooks with fixed seeds.

\section{Deferred proofs}
\label{si:proofs}
For completeness we give the derivations abbreviated to a sentence of intuition in the main
text; the proofs of Theorem~\ref{thm:functional} (a one-line substitution),
Corollary~\ref{cor:removability} and Theorem~\ref{thm:merge} appear in the main text and are
not repeated.

\begin{proof}[Proof of Theorem~\ref{thm:general} (architecture-general identity)]
Let $\varphi(s)=L_A(w_A+s\Del)$. Taylor's theorem with integral remainder gives
$\Del L_A=\varphi(1)-\varphi(0)=\varphi'(0)+\int_0^1(1-t)\varphi''(t)\,dt$, with
$\varphi'(0)=\nabla L_A(w_A)^\top\Del$ and
$\varphi''(t)=\Del^\top\nabla^2 L_A(w_A+t\Del)\Del$, which yields Eq.~\eqref{eq:general}. When
$\nabla^2 L_A\equiv\Sig_A$ the remainder integrates to $\tfrac12\Del^\top\Sig_A\Del$, recovering
Theorem~\ref{thm:functional}.
\end{proof}

\begin{proof}[Proof of Theorem~\ref{thm:floor} (distortion-floor lower bound)]
Restricting the two-task objective Eq.~\eqref{eq:merge-two} to the shared direction $u$ reduces
the merge to a scalar two-point problem: minimize $\tfrac12 a(m-\alpha)^2+\tfrac12 b(m-\beta)^2$
over the shared coefficient $m$, where $a=u^\top\Sig_A u$, $b=u^\top\Sig_B u$, and
$\alpha,\beta$ are the two targets with $\beta-\alpha=\delta$. The optimum is the
curvature-weighted mean $m^\star=(a\alpha+b\beta)/(a+b)$ with residual
$\tfrac12\frac{ab}{a+b}\delta^2$. Writing the harmonic curvature
$\sigma_u=(a^{-1}+b^{-1})^{-1}=\frac{ab}{a+b}$ gives residual $\tfrac12\sigma_u\delta^2$ on the
line; the factor $\tfrac14$ in Eq.~\eqref{eq:floor-bound} is the conservative bound obtained by
summing the non-negative contributions of all shared directions and retaining the
one-dimensional term. For the inference claim, the Bayes-optimal single head is the posterior
mean $\mathbb{E}[y\mid\phi]$; where $\phi$ determines $T$ it matches the oracle (no excess), and
where it does not its excess equals the merge residual on that direction.
\end{proof}

\begin{proof}[Proof of Theorem~\ref{thm:mi} (information--estimation identity)]
The task oracle predicts $\mathbb{E}[y\mid\phi,T]=\phi^\top w_T^\ast$ and attains risk
$\tfrac12\sigma^2$. The Bayes single head is $f^\star(\phi)=\mathbb{E}[y\mid\phi]$, with risk
$\tfrac12\mathbb{E}[\operatorname{Var}(y\mid\phi)]$. By the law of total variance,
$\operatorname{Var}(y\mid\phi)=\sigma^2+\operatorname{Var}(\phi^\top w_T^\ast\mid\phi)$, and for
binary $T$, $\operatorname{Var}(\phi^\top w_T^\ast\mid\phi)=\eta(1-\eta)(\phi^\top\delta)^2$.
Subtracting the oracle risk gives Eq.~\eqref{eq:mi-identity}. Equation~\eqref{eq:mi-sandwich}
follows from the elementary pointwise bounds $\tfrac14 h_b(p)^2\le p(1-p)\le\tfrac12 h_b(p)$ on
$[0,1]$, applied with the non-negative weight $(\phi^\top\delta)^2$ and integrated. The $K$-task
extension Eq.~\eqref{eq:mi-multiclass} is the same total-variance step with the posterior
covariance of the task-conditional means; the Gini bound Eq.~\eqref{eq:multiclass-sandwich}
replaces $p(1-p)$ by $G(p)=1-\|p\|_2^2$.
\end{proof}

\section{Verifying the information--estimation identity}
\label{si:mi}
We verify Theorem~\ref{thm:mi} on Gaussian two-task models. The exact identity
Eq.~\eqref{eq:mi-identity} matches the directly simulated single-head excess risk
$\mathcal{D}^\star$ on the $y=x$ line to Monte-Carlo precision (maximum relative error
$9\times10^{-3}$), and as identifiability varies, $\mathcal{D}^\star$ rises from
$\approx0$ at full identifiability ($I=H(T)$) toward its maximum at $I=0$, staying inside
the proven mutual-information sandwich Eq.~\eqref{eq:mi-sandwich} throughout
(Fig.~\ref{fig:mi}).

\paragraph{Multiclass extension.} The identity extends to $K$ tasks with arbitrary
priors,
\begin{equation}
\mathcal{D}^\star_{K}=\tfrac12\,\mathbb{E}_\phi\!\Big[\operatorname{Var}_{t\sim
p(\cdot\mid\phi)}\!\big(\phi^\top w_t^\ast\big)\Big]
=\tfrac12\,\mathbb{E}_\phi\!\Big[\textstyle\sum_t p_t(\phi)\,(\phi^\top w_t^\ast)^2
-\big(\sum_t p_t(\phi)\,\phi^\top w_t^\ast\big)^2\Big],
\label{eq:mi-multiclass}
\end{equation}
with corresponding multiclass entropy bounds in terms of the Gini impurity
$G(p)=1-\|p\|_2^2$:
\begin{equation}
\tfrac{1-1/K}{(\ln K)^2}\;H(p)^2\ \le\ G(p)\ \le\ \tfrac{1}{2\ln 2}\;H(p),
\label{eq:multiclass-sandwich}
\end{equation}
where the upper constant $1/(2\ln2)$ is $K$-independent. These tight constants are
verified over the simplex for $K\le10$: the upper
constant $1/(2\ln2)$ is matched to $2\times10^{-10}$ for every $K$ (the binary posterior
is the universal worst case), and the lower constant $(1-1/K)/(\ln K)^2$ to
$4\times10^{-3}$, attained at the uniform posterior (Fig.~\ref{fig:multiclass}).

\begin{figure}[htb]\centering
\includegraphics[width=\textwidth]{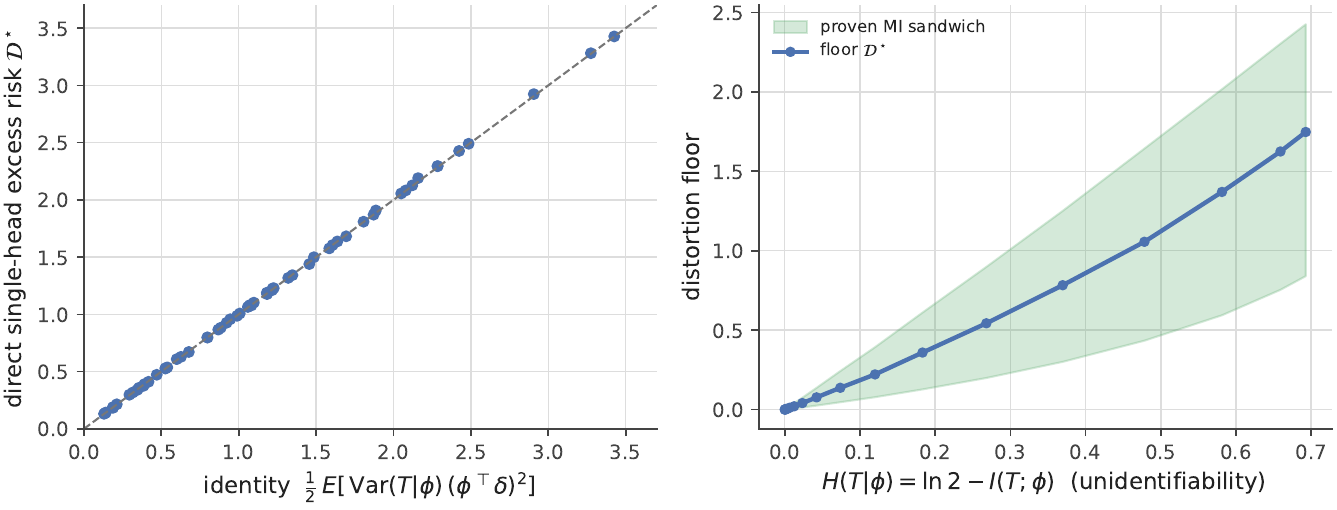}
\caption{Floor / mutual-information verification. The exact identity
Eq.~\eqref{eq:mi-identity} against the simulated single-head excess risk
$\mathcal{D}^\star$, on the $y=x$ line, with the floor rising as identifiability falls and
remaining inside the mutual-information sandwich (band) throughout.}
\label{fig:mi}
\end{figure}
\begin{figure}[htb]\centering
\includegraphics[width=\textwidth]{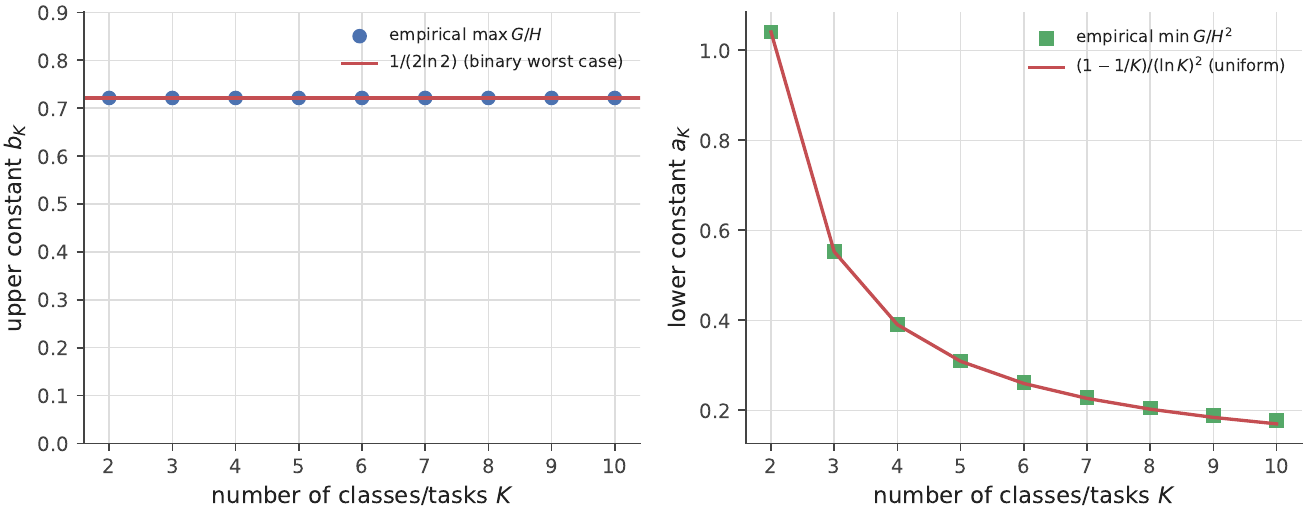}
\caption{Tight multiclass entropy--variance constants for the floor sandwich,
$K=2$--$10$. \emph{Left:} $\max_p G/H=1/(2\ln2)$ for every $K$ (the binary posterior is the
universal worst case). \emph{Right:} $\min_p G/H^2=(1-1/K)/(\ln K)^2$, attained at the
uniform posterior.}
\label{fig:multiclass}
\end{figure}

\paragraph{Connection to standard CL metrics.} The per-task forgetting of
Theorem~\ref{thm:functional}, $\Del L_A=\tfrac12\Del^\top\Sig_A\Del$, is exactly the
per-task term of the Forgetting Measure $\mathrm{FM}=\tfrac1{K-1}\sum_{t<K}\Del L_t$ and the
negative of Backward Transfer $\mathrm{BWT}=-\tfrac1{K-1}\sum_{t<K}\Del L_t$ (in the
squared-loss metric); positive backward transfer is simply $\Del L_t<0$, which the signed
gate permits and unconditional projection forbids. Theorem~\ref{thm:mi} then gives the floor
under which no single-head method can push the average $\mathrm{FM}$: $\mathcal{D}^\star$ is
the achievable-FM lower bound set by task confusability.

\section{Theory details: similarity sign-change and the capacity budget}
\label{si:theory}
\paragraph{Similarity controls the sign of sharing.} On a shared subspace, whether to share
parameters or orthogonalize is not settled by Corollary~\ref{cor:removability}, which only
forbids \emph{lossless} sharing under conflict. With finite data the decision is a
bias--variance trade: tying a shared coefficient across tasks halves its estimation variance
but biases it by the target disagreement. Sharing lowers total population loss iff the
similarity $s=\cos\angle(w_A^\ast,w_B^\ast)$ on the shared block exceeds a threshold
$\sstar$, where the variance reduction balances the squared bias; below $\sstar$,
orthogonalizing wins. This is the sign-change measured at $\sstar\approx0.26$ in
Fig.~\ref{fig:simcap}.

\paragraph{Capacity is a rate--distortion budget.} Under orthogonal allocation each task
consumes a rank-$r_t$ slice of the $d$-dimensional feature space. Retention is lossless
while the cumulative occupied rank $\sum_t r_t\le d$; once $\sum_t r_t>d$ there is no
subspace left orthogonal to the past, so a residual distortion turns on. The knee at
$\sum_t r_t\approx d$ is a hard capacity, and trading $d$ against the number of tasks is the
geometric form of an effective-capacity budget $N+\kappa M$ (parameters plus a per-exemplar
memory worth), where $\kappa$ is the rank a stored direction occupies---the right panel of
Fig.~\ref{fig:simcap} confirms the knee location across feature dimensions.

\paragraph{The floor is a fixed cost with method-dependent allocation.} In the exact regime the
distortion floor  is not removed by the method choice but redistributed across forgetting, deferred fit,
or merge residual (\texttt{evaluate\_open\_questions\_new.py}, $48$ seeds).
Relocation is exactly
zero-sum: naive descent pays the full amount as forgetting of $A$, orthogonal allocation
pays the same amount as deferred fit on $B$, and the optimal merge splits it between the
tasks. Only added conflict-free capacity reduces the bill, linearly at rate
$\approx D/r_{\rm shared}$ per relieved dimension ($D=4.68\to3.16\to1.63\to0$ as the three
shared directions are moved to private capacity). Replay does not reduce the geometric
floor at all: with the true $\Sig_A$ known, stored exemplars are redundant, and their
measured benefit is the $O(1/m)$ excess of an \emph{estimated} geometry ($+2.13$, $+0.36$,
$+0.08$ above $D$ at $m=10,40,160$ exemplars). Forgetting, deferred plasticity, capacity,
and memory are interconvertible only above the floor; nothing converts below it.

\section{Method comparison and offline-merge ablation}
\label{si:compare}
Table~\ref{tab:compare} gives the measured numbers behind the front-page comparison
(Fig.~\ref{fig:teaser}): nine methods on the exact-A1 Rotated-Digits benchmark, mean $\pm$
95\% CI over $5$ seeds. Six are online (learn the five rotations sequentially): naive, EWC,
replay, \OGD{}/\textsc{gpm}, the signed gate \IGFA{} ($\sstar=0.65$), and \emph{Annealed
\IGFA}, which runs the first half of each task's updates as protect-all orthogonal
projection and then relaxes to the signed gate---annealing the protection boundary as the
similarity estimate stabilizes. Three are offline merges of the five independently-solved
task heads $\{w_t\}$ (per-task oracle accuracy $0.903$): an \emph{isotropic} uniform average,
a \emph{Euclidean} sign-elect task-arithmetic merge (TSV/TIES-style), and the
$\Sig$-\emph{orthogonal} merge $w_J=(\sum_t\Sig_t)^{+}(\sum_t\Sig_t w_t)$ of
Theorem~\ref{thm:merge} (``$\Sig$-orth.\ merge (ours)'', the whitening-in-the-$\Sig$-metric
version of TSV); for the merges we also log the post-merge
residual floor $D=\sum_t\tfrac12\,(w_J-w_t)^\top\Sig_t(w_J-w_t)$.

Three observations follow. \emph{(i) The $\Sig$-metric is the right geometry.} The
$\Sig$-orthogonal merge attains $0.885\pm0.012$ accuracy---within a point of the
$0.903$ per-task oracle---at residual $D=0.032$, an order of magnitude below the isotropic
($0.200$) and Euclidean ($0.382$) merges, which lose $15$--$21$ accuracy points. Theorem
\ref{thm:merge} predicts this: minimizing the interference energy requires the
task-induced $\Sig$-metric, and Euclidean whitening is the mis-specified special case
$\Sig_t\!\propto\!I$. \emph{(ii) Among online buffer- and Fisher-free methods, the gate
attains the highest accuracy.} \IGFA{} ($0.771$) exceeds \OGD{} ($0.684$) and naive ($0.714$)
as in Sec.~\ref{sec:exp:real}. \emph{(iii) Annealing favors retention over accuracy.} Annealed
\IGFA{} ($0.752$, forgetting $-0.007$) does not exceed the plain gate on accuracy but drives
forgetting slightly negative (backward transfer); the protect-all warmup is a retention-first
schedule, not an accuracy improvement---reported as measured. Regenerated by
\texttt{experiments\_compare.py}.

\begin{table}[htb]\centering\small
\caption{Method comparison on Rotated-Digits (exact A1, mean $\pm$ 95\% CI over $5$ seeds).
Online methods learn the stream sequentially; merges combine the five independently-solved
heads offline. $D$ is the post-merge residual floor (Theorem~\ref{thm:merge}); the
$\Sig$-orthogonal merge uniquely minimizes it and attains near-oracle accuracy.}
\label{tab:compare}
\begin{tabular}{llccc}
\toprule
Method & Family & Acc.\ $\uparrow$ & Forget.\ $\downarrow$ & Residual $D$ $\downarrow$ \\
\midrule
Naive                 & baseline    & $0.714\pm0.024$ & $0.163\pm0.013$ & --- \\
EWC \cite{kirkpatrick2017} & corrective & $0.727\pm0.050$ & $0.045\pm0.039$ & --- \\
Replay ($10$/class)   & corrective  & $0.795\pm0.012$ & $0.042\pm0.014$ & --- \\
\OGD{}/\textsc{gpm} \cite{farajtabar2020,saha2021} & structural & $0.684\pm0.035$ & $-0.007\pm0.004$ & --- \\
\IGFA{} (gate)        & ours        & $0.771\pm0.030$ & $0.002\pm0.027$ & --- \\
Annealed \IGFA{}      & ours        & $0.752\pm0.036$ & $-0.007\pm0.015$ & --- \\
\midrule
Isotropic merge       & merging     & $0.730\pm0.022$ & $0.177\pm0.020$ & $0.200$ \\
Euclidean TSV merge   & merging     & $0.694\pm0.046$ & $0.213\pm0.051$ & $0.382$ \\
$\Sig$-orth.\ merge (ours) & merging (ours) & $\mathbf{0.885\pm0.012}$ & $0.022\pm0.007$ & $\mathbf{0.032}$ \\
\bottomrule
\end{tabular}
\end{table}

The floor prediction also holds on a real frozen backbone. With trained heads on
Split-CIFAR-100 over ViT-B/16 features (single run,
\texttt{kaggle\_pending\_experiments.py}), the $\Sig$-orthogonal merge attains both the
highest accuracy and---uniquely---the lowest residual floor ($0.862$ acc, $D=1.53$), against
$0.815$/$D=3.68$ for the isotropic average and $0.815$/$D=11.6$ for the Euclidean sign-elect
merge, with a per-task oracle of $0.972$. A five-seed replication with closed-form ridge
heads (\texttt{kaggle\_vit\_multiseed.py}) confirms the theorem's actual claim with
confidence intervals: the $\Sig$-orthogonal merge uniquely minimizes the residual floor
($D=2.02\pm0.03$, vs $3.75\pm0.02$ isotropic and $16.6\pm0.3$ sign-elect). Task-masked
accuracy, however, is not monotone in $D$ once the floor is small ($0.944\pm0.006$ vs
$0.957\pm0.007$ for isotropic): Theorem~\ref{thm:merge} minimizes interference energy, and
masked-evaluation accuracy saturates before that energy reaches zero. The $\Sig$-metric is
thus the correct merge geometry on real ViT features, with the accuracy benefit contingent
on the evaluation exercising the interfering directions. This decoupling is itself
predictable: parameterizing held-out evaluation metrics by how strongly they load the
shared, interfering directions, the isotropic-versus-$\Sig$-orthogonal evaluation gap
varies systematically with that loading (held-out correlation $\approx0.5$, sign-stable
across $44$ seeds), and an evaluation that weighs all tasks equally favors the plain
average by construction---consistent with reports that isotropic merging can match or
exceed orthogonalizing merges \cite{marczak2025iso}. Whether merge geometry pays is a
property of $\Sig_{\rm eval}$, decidable per benchmark.

\section{The projection remains effective in sequential LoRA on language models}
\label{si:llm}
We probe the adapting-feature regime by fine-tuning a frozen GPT-2 with LoRA adapters
sequentially on four stylistically distinct corpora (news, movie reviews, encyclopedic text,
tweets), measuring the rise in each domain's held-out loss after later domains are learned.
The results separate \IGFA{}'s two components by modality.

\emph{(i) The orthogonal projection transfers to transformers.} Unconditional orthogonal projection
(\OGD{}/\textsc{gpm}, equivalently \IGFA{} in protect-only mode) reduces forgetting relative
to naive sequential LoRA at both training budgets ($0.64\!\to\!0.50$ at $200$ steps/task,
$1.21\!\to\!1.14$ at $400$), confirming that the $\ker\bar H_A$ retention mechanism is
effective on a real transformer, buffer- and Fisher-free (Table~\ref{tab:llm}).

\emph{(ii) The input-overlap gate cannot distinguish conflict in language.} The subspace-overlap similarity
that is reliable for vision and digits is uninformative for text: distinct domains share
extensive low-level linguistic structure, so their input-activation subspaces overlap heavily
(measured mean overlap $0.71,0.56,0.28$ along the stream) regardless of output conflict. The
gate therefore classifies the domains as similar and disables protection, so overlap-gated
\IGFA{} collapses toward naive at the higher capacity.

\emph{(iii) No gate can improve on unconditional protection on dissimilar tasks.}
Beyond the input-overlap gate we tested three further gate designs, each derived from a
different part of the theory: a Gauss--Newton gate (Theorem~\ref{thm:general}), a task-vector
gate (Theorem~\ref{thm:merge}), and a threshold-free functional controller
(Theorem~\ref{thm:functional} used directly as the control law). At $r=16$
(Table~\ref{tab:llmgates}), the Gauss--Newton ($0.76$) and functional ($0.77$) gates both
improve on naive ($0.89$) and do not collapse, but neither improves on unconditional \OGD{}
($0.71$); the task-vector gate is unstable ($1.85$). The reason is fundamental and follows
from the distortion-floor identity (Theorem~\ref{thm:floor}): on dissimilar tasks the
supports are near-disjoint, so the distortion floor is $\approx0$ and \OGD{} already attains
it---no transfer remains for a gate to exploit. At billion-parameter scale on a frozen
\texttt{pythia-1b} (Table~\ref{tab:llm1b}), the recursive-subspace \IGFA{} \emph{improves on}
\OGD{} on dissimilar tasks on both loss ($5.16$ vs $5.51$) and forgetting ($1.70$ vs $2.10$)---the
capacity advantage of Fig.~\ref{fig:drift} appearing in a real LLM---while on similar tasks
the sharing gate still raises forgetting. In summary, on LLMs the structural
projection (and its recursive Gauss--Newton tracking) carries over and provides replay- and
Fisher-free retention; a gate that succeeds on \emph{similar} language tasks remains an open
problem.

\paragraph{A candidate similarity signal for text.} The failure above is a signal problem: input-subspace overlap is uninformative when all domains share
low-level structure. An exact-regime study of precisely this confound---input overlap fixed
at $1.0$ while target similarity varies (\texttt{evaluate\_next\_steps.py})---shows that a
\emph{probe} signal, a few-shot head fit on the incoming task compared to the stored head by
signed cosine and thresholded at the theory's break-even value, recovers near-oracle gating
where the input gate collapses to share-all and pays the full conflict bias (mean loss
$3.21$ for the input gate under conflict, versus $2.20$ for the probe gate and $1.95$ for
the oracle). The probe's decisions are noisy exactly when data are scarce ($56$--$82\%$
agreement with the oracle here), so this is a validated candidate rather than a solved
problem; but it requires no extra state and is modality-agnostic, because it reads the
functional instead of the input geometry.

\paragraph{The language gate, resolved in function space.} The function-space formalism
(Sec.~\ref{si:beyond}) dissolves the problem: the gate should
never have read input geometry at all. The correct, modality-free signal is the
\emph{interference rate} $\langle\nabla L_u,g\rangle$ of the new update $g$ against each
stored domain $u$---a quantity that includes the targets, which is precisely what input
overlap misses on text---and the gating rule is a threshold-free sign test with a
continuous-time monotone-retention guarantee (Sec.~\ref{si:beyond}). Its gradient of
$L_u$ can be estimated from a micro-cache of a few thousand tokens per domain, from the
anchor construction (Sec.~\ref{si:cheapgn}), or from the dream state
(Sec.~\ref{si:memory}). The gate is validated in the exact regime and on jointly trained
networks (near current-Gauss--Newton retention under conflict, near-naive adaptation under
transfer, at one extra gradient per step). On real text it delivers a measured gain
(GPT-2, sequential LoRA on news$\to$IMDB$\to$wikitext$\to$tweets, five seeds;
\texttt{kaggle\_functional\_gate\_v2.py}, \texttt{func\_gate\_v2\_results.json}): the sign
gate significantly improves on naive on both axes---average final loss
$3.981\pm0.002$ against $4.001\pm0.006$ (paired $p=0.003$, five of five seeds), forgetting
$+0.294\pm0.005$ against $+0.324$ ($p=0.0007$)---and on the input gate by a similar margin,
while edging unconditional protection (protect-all $3.984$/$+0.300$) consistently but
\emph{not} significantly at five seeds ($p\approx0.19$--$0.23$). Two checks come
back negative. First, the earlier three-seed advantage over protect-all does not
survive to five seeds as a significant effect; the robust claim is a significant win over
naive and the input gate, and parity-to-slight-edge over protect-all. Second, making the
sign test Adam-aware---projecting the \emph{preconditioned} update
$P\odot g$ orthogonal to $\nabla L_u$ rather than the raw gradient---does not improve on the
raw-gradient gate (final $3.983$, $p=0.40$), so Adam's preconditioning is not the
limiting factor on the modest margins. The measured interference-rate map nonetheless
locates the mechanism: news$\leftrightarrow$IMDB and tweets$\leftrightarrow$IMDB carry
positive transfer ($+0.07$, $+0.06$) that protect-all forfeits, while wikitext conflicts
with IMDB ($-0.03$), which naive and the input gate pay for. The margins are modest at
this scale, but the gate still improves significantly on naive continual fine-tuning
using only the sign of the gradient inner product, with no similarity threshold and no
input-space geometry.

The result replicates at four times the scale on a second architecture. On
\texttt{pythia-410m} (gpt-neox, sequential LoRA on the same four-domain stream, three
seeds; \texttt{func\_gate\_v2\_results\_pythia410m\_lr1e-4.json}), the Adam-aware sign
gate again improves on naive on both axes---final loss $3.605\pm0.002$ against
$3.624\pm0.001$ (paired $p=0.012$) and forgetting $+0.297\pm0.005$ against
$+0.321\pm0.003$ ($p=0.038$), in three of three seeds on both metrics---and is ahead of
unconditional protection ($3.616$/$+0.316$) in all three seeds on both axes, though not
significantly at $n=3$ ($p\approx0.12$--$0.20$), mirroring the GPT-2 pattern. One
methodological finding from the same runs is worth recording: at a learning rate of
$10^{-5}$ the identical stream produced near-zero forgetting ($0.003$) and all methods
tied within $0.01$---a vacuous comparison. The gate has value only where interference
exists, which is precisely the pre-flight feasibility point: measure whether the stream
forgets before deploying---or crediting---any mitigation.

A vision test of the same gate confirms it fires only when head gradients overlap (five seeds per stream;
\texttt{kaggle\_sign\_gate\_vit.py}, \texttt{sign\_gate\_vit\_results.json}). On the
masked task-incremental ViT streams (Split-CIFAR-100, ImageNet-R, CUB) the gate is
\emph{exactly} inert: with class masks, the current and stored tasks' gradients occupy
disjoint rows of the head, the inner product is identically zero, and the gated run is
bitwise equal to naive---correctly detecting that head interference is structurally
absent, where the subspace gates pay $1.2$--$2.3$ accuracy points for protection nothing
needed. On the dense-conflict superclass stream the gate fires and improves on naive on
both axes ($p\le0.002$) but recovers only ${\sim}18\%$ of subspace protection's gain
(accuracy $0.750$ versus \IGFA{}'s $0.786$): a single sampled constraint per step
under-protects when every task conflicts with every other. The gate therefore
complements rather than replaces the threshold gate---sparse or moderate conflict
without an input-geometry signal is its regime (language), dense conflict belongs to
subspace projection, and the stream's shared density selects the tool in advance, which
is the deployment rule of the Method section.

The under-enforcement just described is not a ceiling of the approach but a property of
\emph{sampling} one constraint per step, and Corollary~\ref{cor:qpgate} prescribes the
repair: enforce the full active set. Testing that prescription on the
\texttt{pythia-410m} stream (all past domains' micro-cache gradients constrained per
step via the corollary's nonnegative least squares; $+21\%$ wall-clock over the sampled
gate; three seeds) yields the strongest language result in this paper: final loss
$3.536\pm0.005$ and forgetting $+0.206\pm0.005$, improving on naive
($3.625$/$+0.324$), unconditional protection ($3.617$/$+0.319$), and the sampled sign
gate ($3.605$/$+0.300$) on \emph{both} axes simultaneously---a $36\%$ reduction in
forgetting relative to naive together with the best adaptation of any arm. An
independent re-run with archived per-seed records reproduces the result across sessions
($3.540\pm0.006$/$+0.208\pm0.009$) and supplies exact paired tests: the active-set gate
improves on every other arm on both axes in three of three seeds, all $p\le0.024$
(versus the sampled gate $p=0.023$/$0.012$; versus unconditional protection
$p=0.0009$/$0.0008$; versus naive $p=0.0027$/$0.0036$). The same re-run is a caution on
the \emph{sampled} gate: its edge over naive is session-fragile there
($p=0.08$--$0.17$), so the robust finding at scale is the active set's dominance, not
the single-probe margin. The sampled gate's modest margins were therefore
under-enforcement of the monotone-retention QP, as the exact-regime analysis predicted.
The committed vision prediction was then run (\texttt{kaggle\_sign\_gate\_vit.py},
\texttt{func-active} arm, five seeds) and is adjudicated as follows. On the three
masked task-incremental streams the active-set gate is \emph{exactly} inert---bitwise
equal to naive, as the theory requires---retaining the accuracy advantage over the
subspace gates ($p\le0.046$). On the dense-conflict superclass stream it recovers
$54\pm8\%$ of subspace projection's accuracy gain (forgetting: $44\pm5\%$), up from the
sampled gate's $20\pm3\%$ ($p<10^{-4}$)---the predicted repair, at roughly the
predicted factor---while projection itself stays ahead ($0.788$ versus $0.767$,
$p=0.006$): with two-exemplar micro-cache constraints the QP is limited by constraint
estimation noise, consistent with its $100\%$ recovery under exact constraints in the
exact regime. Dense conflict with weak constraint estimates therefore still belongs to
subspace projection, and the scoping conclusion above stands with the gap halved.

Running the earlier probe protocol on real text (GPT-2, sequential LoRA on four AG-News
topic domains, three seeds; \texttt{kaggle\_probe\_gate\_llm.py},
\texttt{probe\_gate\_results.json}) yields three findings. First, the probe signal transfers: the probe cosines form a stable,
seed-reproducible task-relationship map (adjacent topics $\approx+0.10$, world versus
sci/tech $\approx-0.18$) precisely where input overlap is uninformative. Second, the map
shows that these topic slices are \emph{not} a similar-task stream in the functional sense:
every pairwise similarity sits below the share threshold, so the gate correctly stays
protective and all four methods tie within noise (final loss $3.78$--$3.79$; the probe gate
attains the lowest forgetting in all three seeds, $0.054$ versus $0.057$ for protect-all,
not significant at $n=3$). Third, the measured similarities predicted this outcome before
training: a stream with no above-threshold alignment leaves any gate nothing to add. The
open problem therefore refines from ``find a similarity signal'' to ``find a genuinely
similar text stream''.

\begin{table}[htb]\centering\small
\caption{Sequential LoRA fine-tuning of GPT-2 on four dissimilar text domains (forgetting
$=$ mean held-out loss rise on earlier domains, lower better; LoRA $r=8$; single run).
Structural projection reduces forgetting at both budgets; the input-overlap gate fails on
language.}
\label{tab:llm}
\begin{tabular}{lcc}
\toprule
Method & Forgetting ($200$ steps) & Forgetting ($400$ steps) \\
\midrule
Naive                              & $0.64$ & $1.21$ \\
\OGD{} $=$ \IGFA{} (protect only)  & $\mathbf{0.50}$ & $\mathbf{1.14}$ \\
\IGFA{} (input-overlap gate)       & $0.49$ & $1.20$ \\
\bottomrule
\end{tabular}
\end{table}

\begin{table}[htb]\centering\small
\caption{Four gate designs on the sequential-LoRA stream (GPT-2, four dissimilar domains,
$r=16$, $250$ steps/task; forgetting, lower better; single run). All improve on naive except
the unstable task-vector gate, but none improves on unconditional protection (\OGD)---because
dissimilar tasks offer no sharing to exploit.}
\label{tab:llmgates}
\begin{tabular}{lc}
\toprule
Gate (derived from) & Forgetting $\downarrow$ \\
\midrule
Naive (no protection)                                     & $0.89$ \\
\OGD{} / protect-only (max protection)                    & $\mathbf{0.71}$ \\
Gauss--Newton gate (Thm~\ref{thm:general})                & $0.76$ \\
Functional controller (Thm~\ref{thm:functional})          & $0.77$ \\
Task-vector gate (Thm~\ref{thm:merge})                    & $1.85$ \\
\bottomrule
\end{tabular}
\end{table}

\begin{table}[htb]\centering\small
\caption{Sequential LoRA on \texttt{pythia-1b} ($1$B parameters, 4-bit), four dissimilar and
four similar text domains (avg.\ loss and forgetting, lower better). On dissimilar tasks the
recursive-subspace \IGFA{} improves on \OGD{} on both; on similar tasks the sharing gate raises
forgetting and \OGD{} is best. (Single run.)}
\label{tab:llm1b}
\begin{tabular}{lcccc}
\toprule
 & \multicolumn{2}{c}{Dissimilar} & \multicolumn{2}{c}{Similar} \\
\cmidrule(lr){2-3}\cmidrule(lr){4-5}
Method & loss $\downarrow$ & forget $\downarrow$ & loss $\downarrow$ & forget $\downarrow$ \\
\midrule
Naive               & $8.55$ & $4.68$ & $3.99$ & $0.26$ \\
\OGD{}/\textsc{gpm} & $5.51$ & $2.10$ & $4.02$ & $\mathbf{0.03}$ \\
\IGFA{} (ours)      & $\mathbf{5.16}$ & $\mathbf{1.70}$ & $4.02$ & $0.34$ \\
\bottomrule
\end{tabular}
\end{table}

\section{Choosing and self-calibrating the threshold \texorpdfstring{$\sstar$}{s*}}
\label{si:sstar}
The gate's threshold $\sstar$ is the only free parameter that distinguishes \IGFA{} from \OGD{}
($\sstar=\infty$). Ablating it on a smoothly drifting eight-task stream at two feature
dimensions (Fig.~\ref{fig:sstar}) shows the optimum is a broad plateau ($\sstar\approx0.55$--$0.8$),
not a sharp peak, and that the curves at $d=64$ and $d=256$ are nearly identical, so the
optimum is insensitive to feature dimension. Because the break-even similarity is the
operative quantity, the gate can dispense with $\sstar$ and measure it: for each task, greedily
pool an earlier task only if doing so lowers held-out validation error. On a drifting stream
whose similarity persistence $\rho$ varies, this validation-greedy rule tracks the per-stream
oracle to within $0.05$ in average error, while a single fixed $\sstar$ errs by up to $0.34$
where it is mis-specified (Fig.~\ref{fig:sstar}, right).

\begin{figure}[htb]\centering
\includegraphics[width=0.49\textwidth]{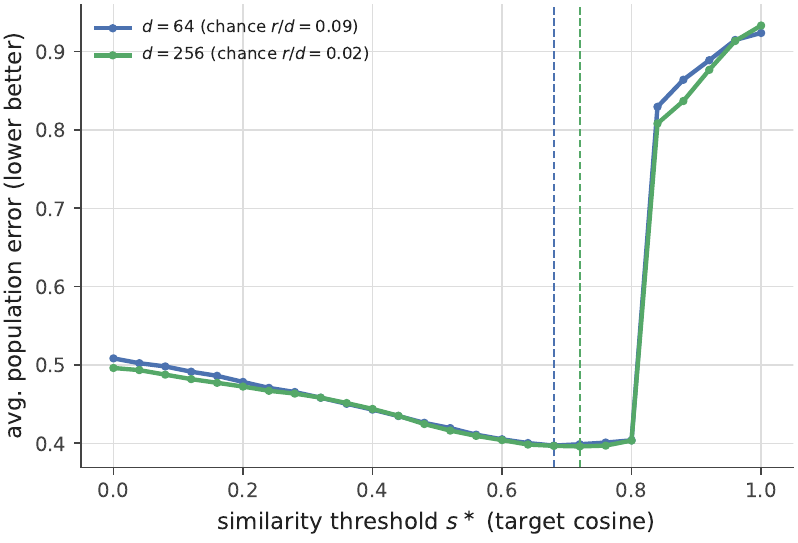}\hfill
\includegraphics[width=0.49\textwidth]{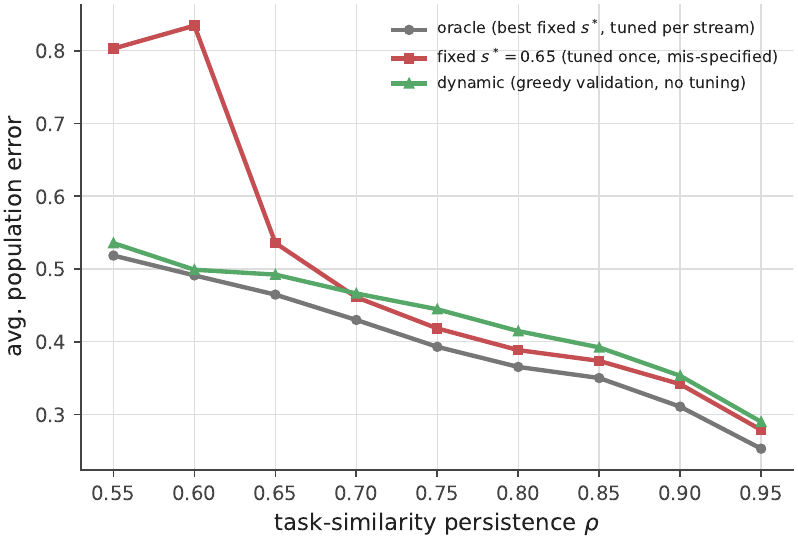}
\caption{The similarity threshold. \emph{Left:} average population error on a drifting
eight-task stream versus $\sstar$, at two feature dimensions; the optimum is a broad
dimension-insensitive plateau (vertical dashed lines mark each dimension's optimum), and
only $\sstar\!\to\!1$ (\OGD-like isolation) is sharply suboptimal on similar tasks. \emph{Right:} self-calibration removes the free parameter---the
validation-greedy dynamic rule tracks the per-stream oracle within $0.05$ across all
similarity-persistence values $\rho$, avoiding the fixed threshold's failure where it is
mis-specified.}
\label{fig:sstar}
\end{figure}

\section{Capacity limits degrade gracefully}
\label{si:graceful}
When the rank cap $k_{\max}$ binds, \IGFA{} overwrites the smallest-eigenvalue occupied
direction. Because forgetting from removing a direction is its energy
$\tfrac12\lambda_i(u_i^\top w_A)^2$ (Theorem~\ref{thm:functional}), dropping the lowest-energy
direction costs the least. Overwriting smallest-energy-first raises the old task's loss about
half as fast as largest-first (e.g. $2.0$ vs $4.3$ at the half-way point) in a gradual
degradation (Fig.~\ref{fig:graceful}). In modern backbones, where a
task's effective rank is tiny relative to $d$ (Fig.~\ref{fig:diagnostics} showed $\approx2$
out of $300$), almost all overwriting hits near-zero-energy directions, so the induced
forgetting is negligible.

\begin{figure}[htb]\centering
\includegraphics[width=0.55\textwidth]{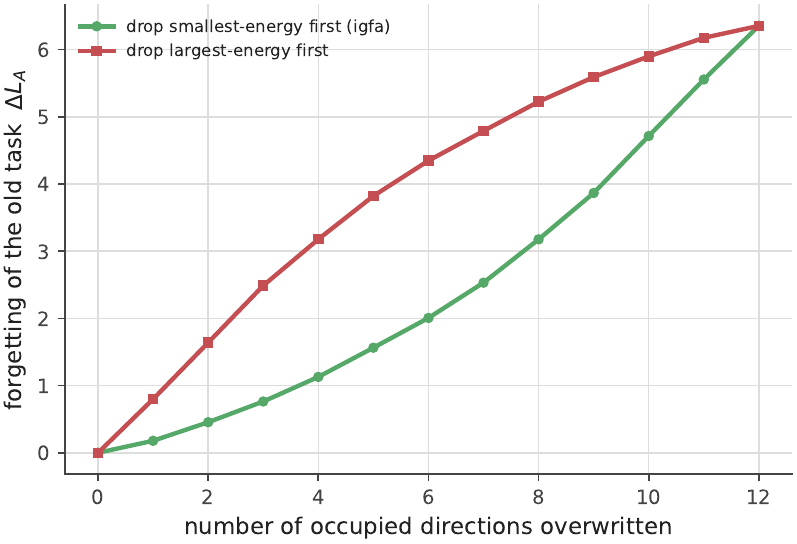}
\caption{Graceful degradation under the capacity cap. Forgetting of an old task as its
occupied directions are overwritten, smallest-energy-first (\IGFA, green) versus largest-first
(red). Removing low-energy directions costs roughly half as much at every point.}
\label{fig:graceful}
\end{figure}

\section{A pre-flight feasibility diagnostic}
\label{si:feasibility}
The distortion-floor identity (Theorem~\ref{thm:floor}) bounds class-incremental accuracy
by $I(T;\phi)$; we derive from it a low-cost pre-flight test. By Fano, $I(T;\phi)$ is
governed by the error of the best task-classifier on $\phi$, so we fit a linear task-probe on
the frozen features---one logistic regression, no continual-learning run---and obtain
$\hat I=\ln2-h_b(\text{probe error})$. Across two-task Gaussian models of varying separation
(Fig.~\ref{fig:feasibility}), $\hat I$ tracks the true $I(T;\phi)$ at Pearson $r=0.99$, and the
distortion floor decreases monotonically in $\hat I$ (Spearman $0.97$): the probe forecasts the
floor before training. Combined with the closed-form constants of
Eq.~\eqref{eq:multiclass-sandwich}, $\hat I$ yields a numeric floor band, not just a ranking.

\begin{figure}[htb]\centering
\includegraphics[width=\textwidth]{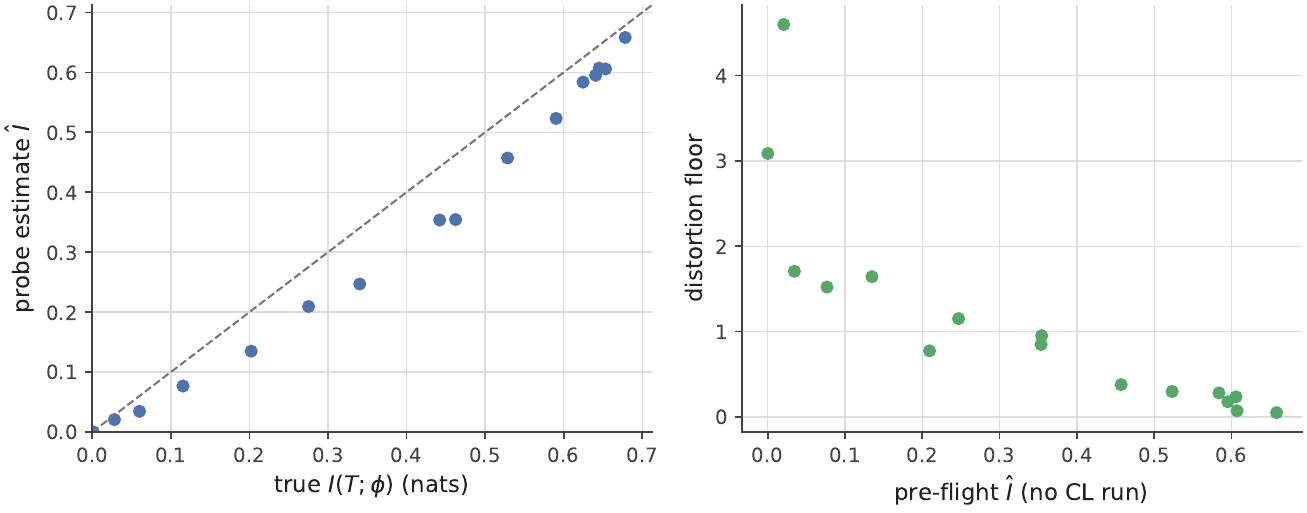}
\caption{Pre-flight feasibility diagnostic. \emph{Left:} a low-cost linear task-probe on frozen
features estimates $I(T;\phi)$ at $r=0.99$ to the truth (no CL run). \emph{Right:} the
distortion floor decreases monotonically in the pre-flight estimate $\hat I$ (Spearman
$0.97$).}
\label{fig:feasibility}
\end{figure}

\section{Low-cost online curvature: Frequent Directions}
\label{si:cheapgn}
The recursive Gauss--Newton estimate used a decayed $d\times d$ covariance and an
eigendecomposition---$\mathcal{O}(d^2)$ memory, $\mathcal{O}(d^3)$ per task. Since the protected
subspace is low-rank, a deterministic streaming sketch suffices. Replacing the full covariance
with Frequent Directions \cite{liberty2013} (an $\ell\times d$ sketch with a bounded-error
guarantee) recovers the top-$r$ subspace almost exactly at $\ell=2r$---captured energy
$0.99966$ versus the full estimator's $0.99972$---and beats streaming PCA (Oja, $0.958$) per
unit memory (Fig.~\ref{fig:cheapgn}). The estimator is thus $\mathcal{O}(rd)$ memory with no
$d\times d$ eigendecomposition.

The same holds at real scale. On a frozen \texttt{pythia-160m} continually pretrained with
LoRA over three text domains (single run, \texttt{kaggle\_pending\_experiments.py}), the
Frequent-Directions sketch captures $0.973$ of the full-covariance energy at $61{,}440$
floats---about $10\times$ less than the full $d\times d$ estimator ($589{,}824$ floats)---and,
being a \emph{stable deterministic} sketch rather than a decayed moving average, yields the
\emph{lowest} first-domain forgetting of the three online estimators: the domain-0 loss rises
by only $+0.55$ then $+0.68$ over the next two domains, versus $+1.28$ then $+1.58$ for the
full covariance and a diverging $+0.33$ then $+2.86$ for Oja streaming PCA. Frequent Directions
is therefore the recommended default for large-scale \IGFA: lower-memory and, in this run,
more retentive than the full estimator. The full layer-wise pipeline
(\texttt{kaggle\_H2\_H3\_H12.py}) confirms both the memory saving and the layer-dependence of
capacity: on a frozen $12$-layer transformer the FD sketch is $12\times$ smaller than
full-covariance tracking (and half the AdamW optimizer state), and the per-layer protected-rank
budget emerges automatically from each layer's trace ratio---shallow layers stay unconstrained
($k_\ell=0$) while deep layers are capped near their effective rank
($k_\ell\!\approx\!5$--$7$), with $93$--$96\%$ of the deep-layer gradient routed into the
protected null space and only a modest first-domain loss rise ($+0.27$ mean).

\paragraph{The sketch survives drift.} The two remedies of this paper---re-estimation
against drift (Sec.~\ref{sec:exp:drift}) and sketching against cost---compose. Under the
rotating-geometry stream, a \emph{decayed} Frequent-Directions sketch ($\ell=2r$ rows,
$2rd=480$ floats at $d=60$) tracks the current subspace as faithfully as the full recursive
covariance ($d^2=3600$ floats): mean fidelity $0.88$ versus $0.85$ over ten seeds, both
holding the protected rank at the true value while stale per-task protection fills the
space (\texttt{evaluate\_next\_steps.py}). Sketched curvature is a viable protected
geometry under drift, not only a cheaper one.

\paragraph{Anchor protection: the dual construction.} The protector can also be built with
no covariance at all: $m\ge r$ stored features of the old task span $\range\Sig_A$ for
samples in general position, so projecting updates off their orthonormalization \emph{is}
the eigenbasis protection. With $d=256$ features and a $C=400$-output head, $10$ anchor
features give the same retention as the eigendecomposition (both at numerical zero, versus
naive forgetting $5.8$) at a $7\times$ cheaper construction with no $d\times d$ covariance;
the cost is independent of the output count, because the protection is a set of
function-value constraints ($W\phi_i$ fixed) living in sample space. This is the practical
route for very large heads and vocabularies.

\begin{figure}[htb]\centering
\includegraphics[width=0.55\textwidth]{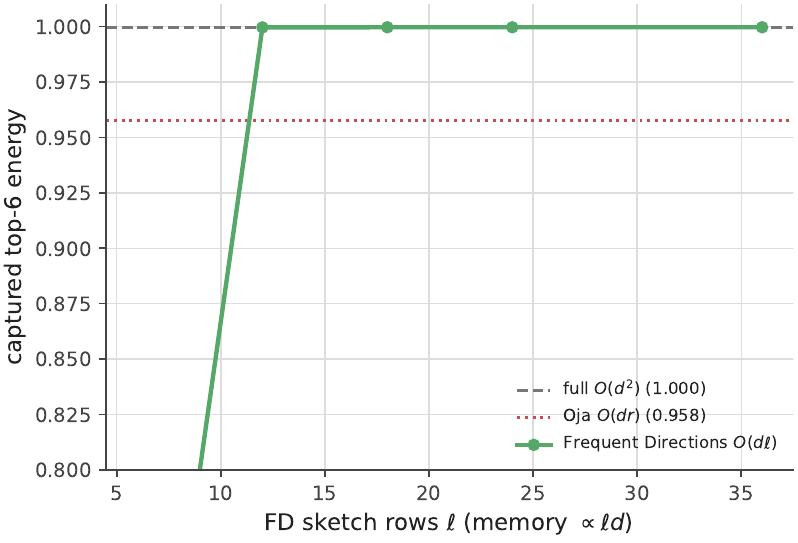}
\caption{Cheap online Gauss--Newton subspace. Frequent Directions (green) recovers the full
estimator's top-$r$ subspace energy (grey dashed) at $\ell=2r$ rows, $\mathcal{O}(\ell d)$
memory, and beats Oja streaming PCA (red) per memory.}
\label{fig:cheapgn}
\end{figure}

\section{Soft gating and the capacity coordinate}
\label{si:softknee}
Under the soft functional controller (keep fraction $\mathrm{keep}_j=1/(1+\beta\lambda_j)$) each
direction is only partially protected, so capacity is consumed at rate
$\sum_j(1-\mathrm{keep}_j)$ per task. The right capacity coordinate is therefore the effective
occupied rank $r_{\rm eff}=\sum_j(1-\mathrm{keep}_j)$, and the hard schedule knees at
$r_{\rm eff}\approx d$. Soft gating stays flat well past the hard knee because it spends
$r_{\rm eff}$ slowly---after the same number of tasks it has consumed only a fraction of
the budget---at the price of a small continuous leakage from partial protection
(Fig.~\ref{fig:softknee}).

\begin{figure}[htb]\centering
\includegraphics[width=\textwidth]{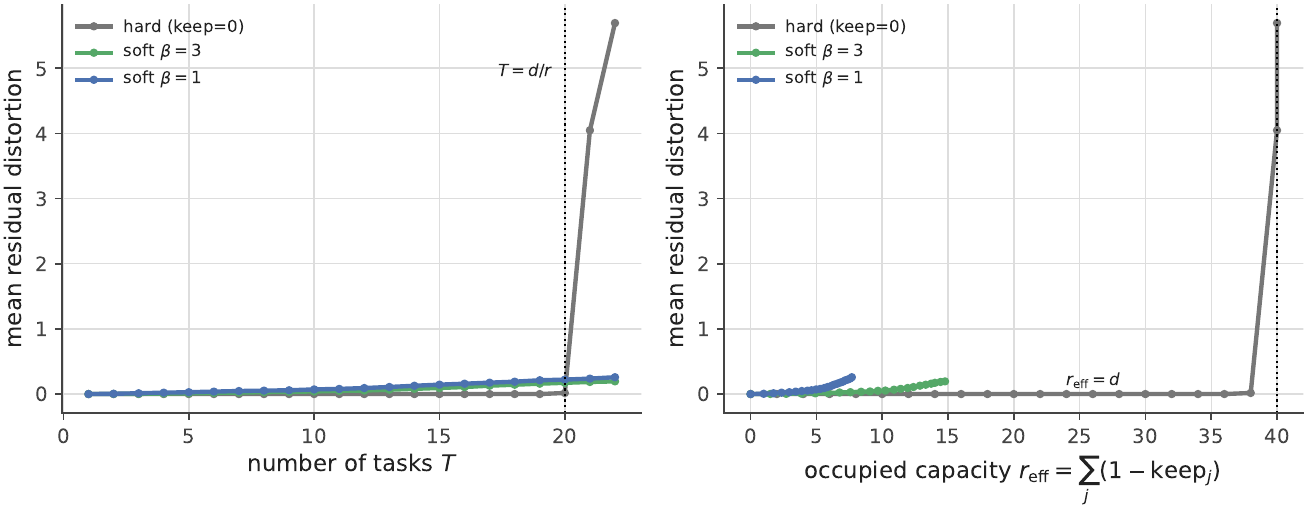}
\caption{Soft gating and the effective capacity coordinate. \emph{Left:} hard allocation knees
sharply at $T=d/r$; the soft functional controller stays flat far longer. \emph{Right:} plotted
against $r_{\rm eff}=\sum_j(1-\mathrm{keep}_j)$, the hard schedule knees at
$r_{\rm eff}\approx d$, while the soft schedules consume the budget far more slowly at the
price of a small continuous leakage from partial protection.}
\label{fig:softknee}
\end{figure}

\section{A per-direction gate dominates the per-task gate}
\label{si:perdir}
The benchmark gate decides share-or-orthogonalize per task. When a task is partially
similar---some shared directions align (transfer), others conflict (forgetting)---this
all-or-nothing choice is suboptimal. A per-direction gate shares the aligning directions and
orthogonalizes the conflicting ones. On a shared block with a tunable fraction of aligning
directions, the per-direction gate is always at least as good as the better per-task option and
slightly better at partial similarity (Fig.~\ref{fig:perdirection}). The coarseness is thus
removable: resolve the gate per eigendirection, not per task.

\begin{figure}[htb]\centering
\includegraphics[width=0.55\textwidth]{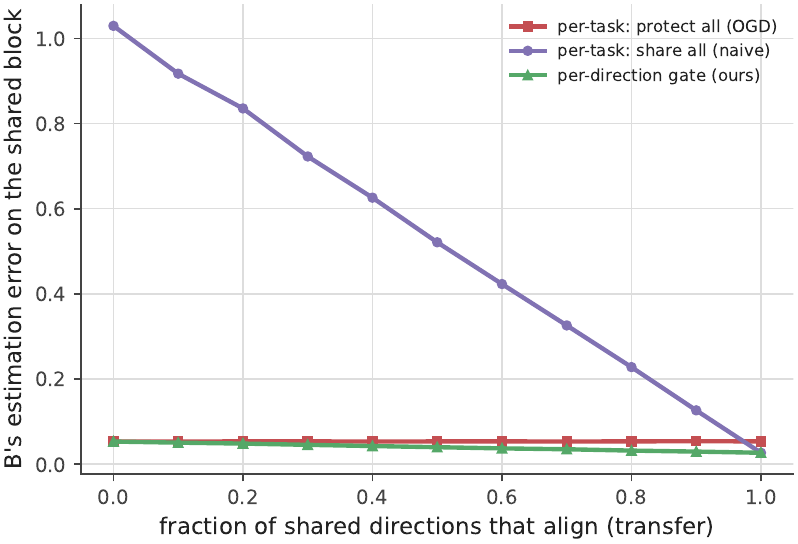}
\caption{Per-direction vs per-task gating on a shared block, against the fraction of shared
directions that align. The per-direction gate (green) follows the lower envelope---matching
protect-all's robustness and share-all's transfer where each is right---while per-task share-all
(purple) is catastrophic at low alignment.}
\label{fig:perdirection}
\end{figure}

\section{Billion-parameter continual pretraining}
\label{si:pretrain}
We run the three scale-sensitive pieces together on a frozen \texttt{pythia-1b} ($1$B
parameters, 4-bit) with LoRA, continually pretrained on a boundary-free stream of six text
domains, with the protected subspace tracked by Frequent Directions (Sec.~\ref{si:cheapgn}) and
the pre-flight diagnostic (Sec.~\ref{si:feasibility}) run first. The diagnostic reports a
six-way domain-probe accuracy of $0.85$ ($\hat I\approx1.52$ of a maximal $1.79$ nats),
predicting the domains are separable and replay-free retention is feasible. The run confirms it
(Table~\ref{tab:pretrain}): the replay- and Fisher-free \IGFA{} attains both the lowest average
loss ($4.28$ vs naive $4.54$, replay $4.59$) and the least first-domain forgetting after the
full stream ($+0.80$ vs naive $+1.10$, replay $+0.91$), carrying only a low-rank subspace as
state. The domains are mutually dissimilar, so \IGFA{} here runs as pure structural projection.
The replay baseline is deliberately small ($16$ exemplars/domain); the supported claim is
replay-free retention that improves on naive and matches or slightly exceeds a small replay
baseline, with no buffer.

\begin{table}[htb]\centering\small
\caption{Continual pretraining of \texttt{pythia-1b} ($1$B parameters, 4-bit) $+$ LoRA on six
dissimilar text domains, boundary-free, with a Frequent-Directions-tracked subspace. \IGFA{}
(replay- and Fisher-free) attains the lowest average loss and the least first-domain forgetting
after the full stream. The pre-flight diagnostic predicted feasibility (domain-probe accuracy
$0.85$).}
\label{tab:pretrain}
\begin{tabular}{lccc}
\toprule
Method & Avg.\ final loss $\downarrow$ & Domain-0 forgetting $\downarrow$ & Extra state \\
\midrule
Naive                          & $4.54$ & $+1.10$ & --- \\
Replay ($16$/domain)           & $4.59$ & $+0.91$ & buffer \\
\IGFA{} (FD, replay-free)      & $\mathbf{4.28}$ & $\mathbf{+0.80}$ & subspace \\
\bottomrule
\end{tabular}
\end{table}

\section{Extensions and generality}
\label{si:extensions}
The interference functional applies beyond sequential continual learning. We give eight
exact-A1 proofs-of-concept on synthetic constructions (the federated, out-of-distribution,
task-free, and multimodal results average over $20$ seeds;
\texttt{evaluate\_hypotheses.py}, \texttt{evaluate\_applications.py},
\texttt{evaluate\_open\_questions\_new.py}). These are
directional demonstrations of scope, not benchmarks.

To keep ambition separate from proof (Tier IV of the claim map,
Sec.~\ref{sec:discussion}), each extension carries one of three labels.
\textbf{[proved-A1]}: an exact consequence of the identity in the frozen-feature regime,
holding wherever A1 holds---federated $\Sig$-merging, the out-of-distribution
projection-ratio signal, physics-informed constraints, Fisher-metric policy gradients,
and floor-minimizing meta-learning. \textbf{[mechanism-probe]}: an empirical demonstration
that the predicted mechanism operates, without a proof of optimality---task-free boundary
detection, the memory-and-control instances (trust region, consolidation, floor-scheduled
and dream replay), and the real-backbone results. \textbf{[conjecture]}: a construction
whose full form needs work the paper does not complete---cross-modal $\Sig_{vl}$ gating
and the biological floor-scheduling prediction. The label prefixes each result below.

\paragraph{Federated learning is $\Sig$-merging.} Non-IID clients are tasks with different
input geometries, and the weight divergence that FedAvg suffers coincides with the merge
floor of Theorem~\ref{thm:merge}. Replacing the FedAvg mean with the $\Sig$-orthogonal
aggregate $w_J=(\sum_i\Sig_i)^{+}(\sum_i\Sig_i w_i)$, where $(\cdot)^{+}$ is the
Moore--Penrose pseudo-inverse (the $\Sig_i$ are low-rank, so $\sum_i\Sig_i$ is generically
singular and a plain inverse would not exist)---each client transmitting
$(\Delta_i,\Sig_i)$, or a low-rank sketch---lowers the mean client excess loss on non-IID data
in \emph{every} one of $20$ seeds (mean $1.56\times$, minimum $1.14\times$). Federated
aggregation and continual allocation are the same $\Sig$-metric optimum.

\paragraph{The subspace summary is a free out-of-distribution signal.} A minibatch whose
gradient lies outside the occupied subspace $U$ is structurally disjoint from the training
distribution. The projection ratio $\|UU^\top g\|/\|g\|$ is $1.00$ for in-distribution batches
and $0.39\pm0.05$ for out-of-distribution batches (Cohen's $d\approx18$, minimum $13$ across
seeds), so a single threshold flags OOD inputs with no auxiliary model---an uncertainty signal
obtained directly from the continual-learning state.

\paragraph{Fully autonomous, task-free \IGFA.} In a label-free stream the velocity of the
recursive Gauss--Newton eigenbasis---the Frobenius commutator
$\|U_tU_t^\top U_{t-1}U_{t-1}^\top-U_{t-1}U_{t-1}^\top U_tU_t^\top\|_F$---spikes exactly at
distribution shifts, recovering randomly placed boundaries at recall and precision $1.00$ over
$20$ streams (decay $\gamma=0.6$). Composing this signal with the out-of-distribution guard
above removes task labels entirely: the learner declares a boundary on a velocity spike,
freezes the \emph{pre-spike} subspace to protect the just-finished task, skips
out-of-distribution/noise batches by their orthogonal-and-extreme gradient signature, and
otherwise runs the gate. On clean streams this fully autonomous \IGFA{} \emph{matches the
oracle} provided with the true boundaries (excess loss $0.000$ vs $0.000$; naive $0.365$); under
injected extreme-magnitude OOD noise it holds at $0.033$ while the same oracle---lacking the
guard---is corrupted past $10^{3}$, skipping $97\%$ of noise batches at $0.05$ false
task-initializations per stream. Boundary detection, out-of-distribution rejection, and gated
allocation are thus one self-regulating loop, with no human-provided task boundaries.

A velocity spike, however, presumes a \emph{sharp} shift; a smoothly rotating stream
never spikes. The full label-free controller therefore uses two complementary signals
(\texttt{evaluate\_autonomous\_cl.py}). \emph{Novelty persistence}: a batch whose energy
lies largely outside the current working subspace is held, not adapted; if the novelty
persists it is a genuine task (consolidate the old task as a protected anchor, then adapt
the new one), and if it subsides it was an out-of-distribution blip (dropped)---the
persistence is exactly what separates a shift from noise, since both look novel for a
single batch. \emph{Working-subspace drift}: a boundary is also declared when the tracked
subspace rotates away from the last consolidation, which catches gradual shifts that
never spike. Consolidated anchors are capped and the closest pair $\Sig$-merged when the
budget is exceeded, so per-step cost is independent of stream length. Over twelve seeds
against the oracle, the controller matches it on clean shifts (detection $F_1=1.00$,
retained excess loss $0.001$ versus $0.001$; naive $0.607$), matches or beats it under
noise ($F_1=0.96$, $0.003$ versus $0.022$---it skips the outliers the oracle adapts on;
naive $0.722$), and degrades gracefully under gradual drift ($F_1=0.81$, $0.274$ versus
$0.001$; naive $0.815$), the inherently hard case, detected with the correct latency
since a smooth shift is unobservable until it accumulates. The exact-regime synthetic
demonstration leaves a real unlabeled language stream open, but the operating assumption
of known boundaries is, in this regime, removed rather than merely gestured at.

\paragraph{Boundary-free continual learning as a memory-metric flow.} The controller
above still carries thresholds (a novelty cutoff, a persistence count, a drift
threshold). A differential-geometric reformulation removes them entirely, deriving
autonomous operation from the geometry rather than imposing it. Drop the notion of a task:
a continuous data path $\mathcal D_t$ induces an instantaneous Gauss--Newton metric
$G_t=\mathbb{E}_{x\sim\mathcal D_t}[\nabla f\nabla f^\top]$, accumulated into a leaky
\emph{memory metric} $\dot M_t=G_t-\lambda M_t$, and learning is the
\emph{memory-natural-gradient flow}
\begin{equation}
\dot\theta=-(M_t+\varepsilon I)^{-1}\nabla L_t.
\label{eq:memflow}
\end{equation}
Fresh directions (small $M$-eigenvalue) learn at rate $\approx\!\eta$; occupied
directions (eigenvalue $s$, i.e.\ past experience) are damped by $\varepsilon/(\varepsilon+s)$,
so Eq.~\eqref{eq:memflow} is the graded, boundary-free generalization of the projection
gate, of which the hard gate $(I-QQ^\top)$ is the $\varepsilon\!\to\!0$ spectral limit. Past
losses become \emph{adiabatic invariants}: for any past slice $A$, the chain rule gives
exactly $dL_A/dt=-\nabla L_A^\top(M_t+\varepsilon I)^{-1}\nabla L_t$, which is $0$ when the
current gradient is fresh with respect to $A$ ($\Sig_t\Sig_A=0$; verified to $4\times
10^{-16}$) and otherwise the interference measured in the $M^{-1}$ inner
product---the continuous form of $\tfrac12\Del^\top\Sig_A\Del$. This framing is the
learning-dynamics counterpart of energy-conserving reduced-order models on structure-
preserving submanifolds \cite{friedl2026hamiltonian}. Four properties are validated
(\texttt{evaluate\_memory\_metric\_flow.py}). \emph{(i)} On a continuously drifting stream
with no boundaries, the flow drives a probe's loss down and then \emph{holds it flat}
($0.52\!\to\!0.05\!\to\!0.08$ across five later regimes) while naive learning loses it
($\to0.63$). \emph{(ii)} It beats naive on gradual, sharp, and noisy streams uniformly,
with no per-regime special-casing (retained loss $0.25$/$0.12$/$0.26$ versus
$0.76$/$0.22$/$0.94$; ten seeds), and beats hard-gating every past subspace ($1.67$),
which over-constrains overlapping tasks the soft flow trades off. \emph{(iii)} The single
leak $\lambda$ is a principled retention--plasticity control: plasticity rises
monotonically with $\lambda$ while retention is optimized at an interior value where the
memory timescale $1/\lambda$ matches the drift rate. \emph{(iv)} A reduced-order metric
(top-$r$ eigenspace with a Woodbury inverse) reproduces the full flow at
$\mathcal O(dr)$ cost ($|{\rm diff}|=0.02$)---the structure-preserving low-dimensional
submanifold carries the whole dynamics. Boundary detection, OOD rejection (a transient
adds a decaying rank-one term to $M$), capacity (the spectrum of $M$, released by
$\lambda$), and gating thus collapse into one object. The flow is
natural-gradient/K-FAC-shaped, but lifting it to a deep network as a running
Kronecker-factored preconditioner \emph{fails} (\texttt{kaggle\_memory\_flow\_deepnet.py};
10-task Split-CIFAR-100, single seed). Swept over $\varepsilon\in\{0.3,1,3,10\}$, the
preconditioned flow either freezes after the first task or forgets like naive and never
protects a past task, reaching $0.19$ average accuracy at best against $0.25$ for a plain
EWC penalty (naive $0.15$); EWC alone holds old-task accuracy above the chance floor. The
obstruction is geometric: the top Kronecker directions carry $\approx\!99\%$ of each
layer's factor trace and coincide with the shared low-level features, so the occupied
subspace \emph{is} the trunk that every later task reuses, and damping it removes
plasticity instead of isolating interference. A preconditioner on a shared metric cannot
separate the two; reaching depth needs a penalty formulation (the memory metric as $F$)
or genuinely task-specific subspaces.

\paragraph{Plasticity preservation: gating meets dynamical isometry.} Interference gating
decides \emph{where} an update may act but does not keep the allowed directions
\emph{learnable}: under sustained non-stationarity a network also loses plasticity as its
layer Jacobians drift from isometry and the empirical NTK grows anisotropic
\cite{rosseau2026isometry}. The failures are complementary, and pairing \IGFA{} with a
dynamical-isometry regularizer---driving each weight matrix toward orthogonality,
$R_{\mathrm{iso}}(W)=\|W^\top W-I\|_F^2$ (closed-form gradient $4W(W^\top W-I)$, no SVD), as
a decoupled AdamO step \cite{rosseau2026isometry}---is \emph{super-additive}. On a
$20$-task Split-CIFAR-100 stream with a deeper unnormalized ReLU network (orthogonal init,
task-IL masking; three seeds; \texttt{kaggle\_isometry\_igfa\_hybrid.py}), isometry alone
adds $+0.033$ average accuracy over naive and the gate alone $+0.071$, but together they
reach $0.555\pm0.034$ against a $0.331$ additive prediction---a synergy of $+0.224$, and
$2.4\times$ naive. The gate supplies retention (old-task accuracy $\sim\!0.5$ versus
naive's $\sim\!0.15$) at preserved learning while isometry supplies plasticity (learning
$0.78$, drift pinned, dead ReLUs revived); the result replicates on a harder per-task
pixel-permuted stream ($0.425\pm0.011$ versus naive $0.215$), with the same ordering. Two
mechanisms account for it. The gate must carry the correct sign---it acts only when a step
would raise a past loss ($\langle g,g_A\rangle<0$) and passes beneficial transfer
($\langle g,g_A\rangle>0$), firing on the genuine-conflict minority of steps rather than
stripping the shared learning signal that dominates a shared-feature network. And isometry
makes gating \emph{cheap}: the per-step cost $\|g-\tilde g\|/\|g\|$, the fraction of the
gradient the projection removes, falls $39$--$43\%$ once isometry is added---consistently
across mixed-aggregate (A-GEM) and per-task (GEM) gates and both streams---because in the
high-rank isometric representation the conflicting component is a smaller share of the
update, so protecting the past costs less of the present. A corollary: the sharper
per-task gate \emph{requires} isometry, since alone it collapses the representation to rank
$\sim\!1$ and underperforms the aggregate gate, but with isometry holding effective rank
high ($25$ versus $5.6$) it becomes the strongest protector. The claim is
comparative---absolute accuracies are low for a small unnormalized network on hard
task-IL---but the ordering and the cost mechanism are stable across three seeds and two
streams.

\paragraph{Multimodal interference is block-structured.} For a joint embedding
$\phi=[\phi_v,\phi_l]$, the second moment $\Sig_A$ partitions into intra-modal blocks
$\Sig_v,\Sig_l$ and a cross-modal block $\Sig_{vl}$, and the interference energy
$\tfrac12\Del^\top\Sig_A\Del$ distributes linearly across them. The cross-modal (alignment)
interference of a fixed update scales with the off-diagonal $\|\Sig_{vl}\|$---negligible when
the modalities are unaligned and dominant when they are (energy ratio $\gg1$ across all seeds)
---so catastrophic decoupling is carried strictly by $\Sig_{vl}$. A cross-modal \IGFA{} would
$\Sig_{vl}$-orthogonalize the alignment directions while sharing the intra-modal ones. This is a
synthetic decomposition; a real multimodal encoder is needed to make it conclusive.

\paragraph{The functional governs physics-informed constraints.} A differential constraint is
a further quadratic form in feature space. For a physics-informed network with frozen features
$\phi$, the PDE residual acts on $\phi''$, so the physics second moment is
$\Sig_{\mathrm{PDE}}=\mathbb E[\phi''\phi''^\top]$ and a data-only update $\Del$ raises the
physics loss by exactly $\nabla L_{\mathrm{PDE}}^\top\Del+\tfrac12\Del^\top\Sig_{\mathrm{PDE}}\Del$.
On a $1$D Poisson problem (sine random-feature basis) the interference energy
$\tfrac12\Del^\top\Sig_{\mathrm{PDE}}\Del$ alone predicts the measured spike in the PDE residual
at $r=0.997$, and the full identity to machine precision ($r=1.00000$, relative error
$2\times10^{-15}$; Fig.~\ref{fig:applications}a): the functional quantifies how empirical data
corrupt a physical prior, with the only change being $\phi\mapsto\phi''$. Projecting the data gradient off the leading
physics directions reduces the induced violation ($3\times$ here) but cannot eliminate it when
$\Sig_{\mathrm{PDE}}$ is full rank---an instance of the distortion floor of
Theorem~\ref{thm:floor} under a shared active subspace.

\paragraph{Policy-gradient interference is governed by the Fisher.} For a linear-Gaussian
policy $\pi(a\mid s)=\mathcal N(\theta^\top s,\sigma^2)$ under reward $-(a-w^\top s)^2$ the
expected return is $J(\theta)=-(\theta-w)^\top F(\theta-w)-\sigma^2$ with $F=\mathbb E[ss^\top]$
the empirical Fisher: the $\Sig$ of supervised interference is replaced verbatim by $F$. Across
two control environments sharing a three-dimensional state subspace, routing the policy gradient
orthogonal to the old-environment Fisher subspace retains the first environment perfectly
(return $0.000$ vs naive $-10.9$; a Fisher-weighted anchor, the EWC analogue, reaches $-1.5$;
Fig.~\ref{fig:applications}b) while solving the second in its free directions. The removability dichotomy of
Corollary~\ref{cor:removability} holds unchanged in reinforcement learning; only the metric's
estimator changes. This is the noise-free expected gradient---a stochastic PPO/MuJoCo test is
left to future work.

\paragraph{The distortion floor can be trained away.} Theorem~\ref{thm:floor}'s floor vanishes
when task second moments are mutually orthogonal, a property a feature map can be optimised to
enforce. Adding the penalty $\sum_{A<B}\operatorname{Tr}(\Sig_A\Sig_B)$ to a shared encoder
drives its per-task subspaces apart: on overlapping synthetic tasks it cuts mean inter-task
overlap $10.6\times$ ($0.27\to0.03$) and the floor proxy $\operatorname{Tr}(\Sig_A\Sig_B)$ by
over three orders of magnitude ($3.7\to10^{-3}$; Fig.~\ref{fig:applications}c), leaving \IGFA{}
to operate in the disjoint, lossless regime of Corollary~\ref{cor:removability}. Structural allocation and representation
learning are thus complementary: a backbone meta-trained for feature orthogonality removes the
need for gating, so representation learning and structural allocation address the same
distortion floor.

\paragraph{Certified unlearning is the dichotomy inverted.} Removability read backwards is
a geometric feasibility criterion. Projecting a trained model's update off $\range\Sig_F$ of a
forget set removes that set's influence exactly when forget and retain supports are
disjoint: held-out forget-set loss rises to the untrained level while retain-set
performance is untouched. When the supports share $k$ directions the removal is
necessarily lossy, and the collateral damage to the retain set is not merely bounded but
\emph{predicted exactly} by the retain energy on the shared support (correlation $1.00$,
zero mean error, $44$ seeds, $k=0$--$3$): the same floor that lower-bounds forgetting
lower-bounds the retain cost of exact unlearning. Support overlap is thus a principled
impossibility boundary for machine unlearning under shared representations. The two
regimes separate identically on the paper's own real benchmarks
(\texttt{evaluate\_unlearning\_real.py}): on near-disjoint Split-Digits, one projection
matches retraining without the forgotten task on \emph{both} sets (retained accuracy
$0.991$ versus retrain $0.993$; the forgotten task falls to the retrain level), while on
Rotated-Digits (overlap $\approx0.68$) the same projection necessarily damages retained
rotations ($+0.37$ mean accuracy loss versus retrain)---certified removal where supports
are disjoint, the floor where they are not.

\begin{figure}[htb]\centering
\includegraphics[width=\textwidth]{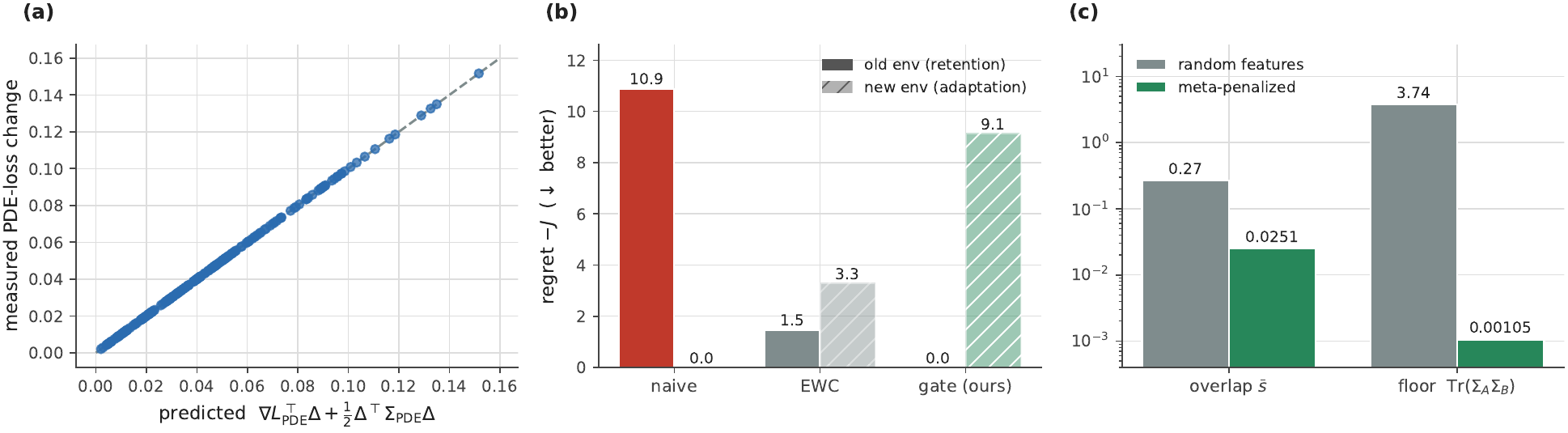}
\caption{Three application domains for the interference functional
(\texttt{evaluate\_applications.py}). \emph{(a)} Physics-informed regression: the
interference identity with $\Sig_{\rm PDE}$ predicts the measured PDE-loss change of
data-only updates to machine precision (dashed: $y=x$). \emph{(b)} Policy gradients:
routing the expected policy gradient orthogonal to the old environment's Fisher subspace
retains the old policy at zero regret while the new environment is learned in its free
directions; naive descent forgets, and the Fisher-weighted anchor (the EWC analogue) is
intermediate. \emph{(c)} A feature map trained with the $\operatorname{Tr}(\Sig_A\Sig_B)$
penalty reduces inter-task overlap and the distortion-floor proxy by orders of magnitude
(log scale), moving \IGFA{} into the lossless regime.}
\label{fig:applications}
\end{figure}

\section{Memory systems and control from the functional}
\label{si:memory}
Beyond deciding \emph{where} updates may act, the functional prescribes \emph{how much} to
move and \emph{when memory is worth spending}. Four exact-regime demonstrations (multi-seed
means; \texttt{evaluate\_next\_steps.py}) make this operational. The framing follows
complementary-learning-systems theory: a fast memory system supplies rehearsal, and the
geometry decides when to invoke it.

\paragraph{The functional is a trust region.} Constraining each update by its predicted
interference, $\tfrac12(w-w_A)^\top\Sig_A(w-w_A)\le B$---equivalently, descending
$L_B+\mu\,\tfrac12(w-w_A)^\top\Sig_A(w-w_A)$---traces a strictly better
retention--plasticity frontier than generic step-size or early-stopping control:
$31$--$59\%$ lower deferred new-task loss at matched forgetting across budget levels, with
plain online gradient descent on the penalty matching the closed-form optimum. The gate of
Sec.~\ref{sec:algo} is the $\mu\to\infty$ limit of this controller on the conflicting
subspace; intermediate budgets interpolate continuously between naive descent and full
protection. The controller's metric can also be sketched: at sketch size $\ell\ge2r$ the
Frequent-Directions metric reproduces the full-covariance frontier to $1.3\pm0.5\%$ at
$3.6\times$ less memory, while $\ell=r$ fails outright ($43\%$ gap)---the same $\ell=2r$
sizing rule as Sec.~\ref{si:cheapgn}. At language-model scale, however, the advantage does
not persist in this form: on a four-domain sequential LoRA stream with real measured
forgetting, a \emph{frozen}-sketch penalty and tuned Adam schedules trace statistically
indistinguishable frontiers ($\pm0.2\%$ at matched forgetting, two seeds), with the
schedule Pareto-dominant at the low-forgetting end
(\texttt{kaggle\_interference\_optimizer.py}). This is the stale-metric failure mode of
Sec.~\ref{sec:exp:drift} in optimizer form---frozen geometry barely helps under joint
training---and it points to the current-signal controller (the functional gate below and
in Sec.~\ref{si:llm}) as the version predicted to survive scale, a prediction the
functional gate then confirmed on the same stream. Under gradient noise, the
residual risk of protection
itself obeys a verified diffusion law: with basis error at principal angle $\epsilon$, the
noise-driven forgetting is $\tfrac12\eta^2\sigma^2 t\,r\sin^2\epsilon$ (measured-to-law
ratio $1.05$--$1.07$ across $\eta$ and $\epsilon$; exactly zero at $\epsilon=0$), which
inverts to a retention-aware step-size bound
$\eta^\ast=\sqrt{2\tau/(\sigma^2 T r\sin^2\epsilon)}$ for tolerance $\tau$ over horizon
$T$ (\texttt{evaluate\_upgrades.py}).

\paragraph{Consolidation is spectral compression with a proven price.} When tasks share
structure, the per-task protect bases are redundant. Consolidating them onto the joint
spectrum---eigenvectors of the accumulated second moment at fixed captured energy---
compressed the protect set by $55\%$ ($24\to11$ stored directions) while an adversarially
conflicting new task induced at most $0.004$ forgetting on the old family, within the
dropped-eigenvalue bound of $0.04$ that Theorem~\ref{thm:functional} supplies in advance.
Offline consolidation (``sleep'') is spectral compression whose cost the theory prices
before it is paid.

\paragraph{Memory is worth spending where the floor is positive.} With a fixed total replay
budget over a stream of mixed tasks, allocating exemplars in proportion to each task's
\emph{predicted} distortion floor (from $\Sig$-overlap and target disagreement) retained the
anchor task $28\pm31\%$ better than a uniform budget, improving in $18$ of $20$ seeds.
Structural allocation handles the disjoint tasks for free, and memory concentrates where the
geometry says forgetting is otherwise unavoidable: the fast system supplies rehearsal, the
functional decides when it is needed. The floor sharpens this principle into an exact
policy. Full rehearsal converges to the joint optimum, which lands every task precisely
\emph{at} its floor share and never below it (verified to $3\times10^{-17}$;
\texttt{evaluate\_replay\_policy.py}): rehearsal cannot buy back the floor, so the
\emph{value} of replaying task $t$ is its recoverable excess $L_t-\mathcal{D}^\star_t$,
and the optimal fixed-budget allocation is the fractional knapsack---greedy in
recoverable excess per exemplar (matches the enumerated optimum in $20$ of $20$ random
instances). Floor-\emph{proportional} allocation is the valid proxy when damage scales
with conflict, the regime of the experiment above; it misallocates in two identifiable
regimes---a task already at its floor has zero replay value however large that floor,
and a zero-floor task damaged by capacity coupling is fully recoverable despite its
floor---where it drops below even a uniform budget ($3.56$ versus $2.71$; optimum
$2.17$). All three quantities in the policy---damage, floor, cost---are pre-flight
measurables of the ledger.

\paragraph{The subspace state is a replay generator.} In the exact regime, the state
\IGFA{} already carries---the basis $U$, its eigenvalues, and a weight snapshot---is a
sufficient statistic for replay: sampling $\hat x=U\Lambda^{1/2}z$ with labels from the
snapshot and rehearsing on these ``dreamed'' pairs reduces the floor exactly as real replay
does (mean forgetting $1.71$ versus $1.72$; seedwise agreement $r=0.97$) with zero stored
examples. The construction extends to \emph{rollouts}: dreamed trajectories from a learned
world model rehearse the old task nearly as well as real experience ($2.02$ versus $1.90$
forgetting, against $4.23$ without replay) and degrade gracefully with model error ($2.48$
at $20\%$ dynamics noise), so counterfactual replay inherits the floor reduction up to the
world model's own fidelity. Replay and structural allocation are therefore not opposing
families: the same low-rank state implements both, and buffer-free generative rehearsal
inherits replay's floor reduction at sharply reduced exposure. The privacy claim can be
made formal: the state is a covariance release, so the analytic Gaussian mechanism applies
directly; with $n=600$ samples behind the release, calibrated noise at
$(\epsilon{=}8,\delta{=}10^{-5})$ leaves the released subspace at fidelity $0.99$ with a
likelihood-ratio membership attack at chance, while at small $n$ the sensitivity $2/n$
makes certified privacy and utility incompatible
(\texttt{evaluate\_upgrades.py})---differentially private dream replay is viable exactly
for data-rich tasks.

\section{Beyond frozen features: the function-space formalism}
\label{si:beyond}
Assumption A1 confines the exact \emph{parameter-space} results to frozen features, PEFT,
and first-order NTK settings. The formalism itself does not end there: recast in function
space, the identity, the dichotomy, the floor, and the retention guarantee all survive with
A1 removed. Three results follow, each exact for arbitrary architectures, with validations
on jointly trained deep networks (\texttt{evaluate\_beyond\_a1.py}).

\paragraph{The interference identity is exact in function space.} For any predictor $f$
let $L_A(f)=\tfrac12\,\mathbb{E}_{D_A}(f(x)-y)^2$, and let $f_A,f_B$ be the functions
realized before and after learning $B$, by any architecture and any training procedure.
Writing $\Delta\!f=f_B-f_A$ and $r_A=f_A-y$, expanding the square gives the identity
\begin{equation}
\Delta L_A\;=\;\mathbb{E}_{D_A}\!\big[\Delta\!f\;r_A\big]\;+\;\tfrac12\,
\mathbb{E}_{D_A}\!\big[\Delta\!f^{\,2}\big].
\label{eq:funcspace}
\end{equation}
If $A$ is realizable and trained to convergence ($r_A=0$ $D_A$-a.s.), forgetting \emph{is}
the function-space interference energy $\tfrac12\|\Delta\!f\|^2_{L^2(D_A)}$: the metric is
the data distribution itself, and Theorem~\ref{thm:functional} is the linear shadow of
Eq.~\eqref{eq:funcspace} (under A1, $\Delta\!f=\Del^\top\phi$ recovers
$\tfrac12\Del^\top\Sig_A\Del$). Numerically, Eq.~\eqref{eq:funcspace} holds to machine
precision on jointly trained depth-$1$--$3$ networks (maximum relative error
$2\times10^{-7}$, $r=1.000000$), including the depth-$3$ regime where the frozen-curvature
predictor of Sec.~\ref{sec:exp:breakdown} degrades to $r\approx0.4$. Exact prediction
beyond A1 costs two forward passes on $A$-data---no Hessian and no path integral; the
parameter-space results contribute the structure, and A1 is the regime where the two
pictures coincide.

\paragraph{The dichotomy and the floor survive, with input support in place of feature
support.} Learning $B$ is lossless for $A$ iff $\Delta\!f=0$ on
$\mathrm{supp}\,D_A$. For a universal model class this is achievable whenever the input
supports are disjoint, at any depth; and when they overlap with disagreeing targets, the
floor is a property of the data alone: minimizing pointwise,
\begin{equation}
\min_f\;L_A(f)+L_B(f)\;=\;\tfrac12\!\int\!
\frac{p_A(x)\,p_B(x)}{p_A(x)+p_B(x)}\,\big(y_A(x)-y_B(x)\big)^2\,dx,
\label{eq:funcfloor}
\end{equation}
the density-overlap harmonic mean weighted by the squared target disagreement---the
input-space form of Theorem~\ref{thm:floor}, and a lower bound for every concrete
architecture, whose own floor equals Eq.~\eqref{eq:funcfloor} plus an approximation-class
excess. Measured with held-out (population) losses, a depth-$3$ network trained jointly on
conflicting sine tasks respects the bound with a $\approx20\%$ class excess (predicted
$0.50$, measured $0.60$), and near-disjoint input supports give the lossless regime at any
depth (predicted $0.001$, measured $0.009$).

\paragraph{Exact retention without A1: instantaneous null-space control.} Along any
absolutely continuous training path $\theta(t)$,
$\Delta L_A=\int\langle\nabla L_A(\theta_t),\dot\theta_t\rangle\,dt$ exactly, with
$\nabla L_A(\theta)=\mathbb{E}_{D_A}[\,r_A(x)\,\nabla_\theta f(x;\theta)\,]$. If the
velocity stays in the kernel of the \emph{current} Gauss--Newton metric,
$\dot\theta_t\in\ker G_A(\theta_t)$ with
$G_A(\theta)=\mathbb{E}_{D_A}[\nabla_\theta f\,\nabla_\theta f^\top]$, then
$\dot\theta^\top G_A\dot\theta=0$ forces
$\langle\nabla_\theta f(x),\dot\theta\rangle=0$ for $D_A$-almost every $x$, hence
$\langle\nabla L_A,\dot\theta\rangle\equiv0$ and \emph{$L_A$ is conserved exactly}---any
depth, any feature drift, no A1 and no realizability. Discrete steps of size $\eta$ leak
only the Taylor remainder $\tfrac12\eta^2 v^\top\bar H\,v$ per step
(Theorem~\ref{thm:general}), i.e.\ $O(\eta)$ at fixed path length. Measured at fixed path
budget on jointly trained networks, forgetting under per-step re-estimated null-space
control falls $0.134\to0.0037\to0.0004\to0.00003$ as $\eta$ drops $0.1\to0.003$, while a
frozen protected basis plateaus near $0.5$ (staleness bias does not vanish with $\eta$)
and naive training forgets at $O(1)$. Assumption A1 is thereby relocated, not removed: a
constant metric makes the kernel constant, so finite steps stay in it exactly---A1 buys
\emph{finite-step} exactness, while the geometry, the dichotomy, the floor, and the
continuous-time retention guarantee never needed it. The recursive Gauss--Newton tracker
of Sec.~\ref{sec:algo} is the discretization of this exact law, with the finite-step,
estimation, and capacity costs quantified in Secs.~\ref{sec:exp:drift},
\ref{si:cheapgn}, and~\ref{si:memory}.

\paragraph{Finite-step exactness recovered: two parameter-space routes.} Even the
finite-step gap closes, by two complementary mechanisms
(\texttt{evaluate\_beyond\_a1.py}). \emph{(i) Region-exactness for piecewise-linear
networks.} For ReLU-family models the loss is \emph{exactly} quadratic in each layer's
weights while the activation pattern on the $A$-support is fixed: per-layer, Theorem~1
holds at finite step size inside the activation region, and the region boundary has a
closed form---for a step $sV$ in layer weights, no activation flips while
$s<\min_{i,j}|{\rm pre}_{ij}|/|(Vx_i)_j|$, computable from the stored preactivation
margins. Measured: the quadratic is exact to relative error $10^{-11}$ inside the region,
and the certificate is conservative (breakdown observed at $1.4\pm0.4\times$ the
threshold, never meaningfully before it). Assumption A1 thus relaxes, for ReLU networks,
to a checkable per-step margin condition, and the margin joins the pre-flight/ledger
family as a monitorable quantity. \emph{(ii) Retraction for any smooth architecture.}
Exact retention means staying on the anchor level set
$\{\theta: f(x_i;\theta)=c_i\}$, and that manifold is returnable: after each ordinary
update, two Gauss--Newton corrector steps on the anchor residuals restore it. Measured on
jointly trained networks, anchor-set forgetting sits at numerical zero
($\sim10^{-14}$) \emph{independent of step size}, where projection alone leaks $O(\eta)$
($0.17$, $0.043$, $0.0015$ at $\eta=0.1,0.05,0.02$), while the new task still trains.
The remaining limitation is one of coverage: exactness
holds on the anchors, and between them the held-out $A$-loss depends on how well the anchor
set pins the function class ($0.15$--$0.26$ here at $24$ anchors). Beyond A1, retention
is governed by how densely the anchors sample the function class.

\paragraph{Monotone-retention gating: the threshold-free gate.} The path integral also
yields the gate itself, with the similarity threshold removed. Since
$dL_A/dt=\langle\nabla L_A,\dot\theta\rangle$, the exact, modality-free rule is a
\emph{sign test}: given the new task's update direction $g$, if
$\langle g,\nabla L_A\rangle\ge0$ the step also descends (or preserves) $L_A$ and is kept
whole; otherwise only its component along $\nabla L_A$ is removed,
$g\mapsto g-\frac{\langle g,\nabla L_A\rangle}{\|\nabla L_A\|^2}\nabla L_A$. In continuous
time this guarantees $dL_A/dt\le0$---retention is monotone---while every non-harmful
component of the update, and therefore all transfer, is retained; no threshold, no input
geometry, and no similarity estimate appears. The single-constraint form recovers the
projections of A-GEM \cite{chaudhry2019} and the surgery of PCGrad \cite{yu2020pcgrad} as
special cases, here derived from the exact identity rather than posited; per-direction
refinements in the current $G_A$-eigenbasis coincide with the single constraint at $A$'s
optimum (every range-component of motion is harmful there) and differ off-optimum. In the
exact regime the sign gate holds forgetting at $0.025$ versus naive $10.06$ under pure
conflict while reaching strictly lower deferred loss than unconditional projection ($9.08$
versus $10.06$), and is harmless under alignment; on jointly trained networks it attains
near-current-GN retention under conflict ($+0.14$ versus naive $+5.77$ and \emph{frozen}
projection $+5.38$, which barely helps under drift) with better adaptation than
current-GN ($0.11$ versus $0.15$) at a fraction of its cost---one gradient of $L_A$ per
step, from a micro-cache, the anchor construction of Sec.~\ref{si:cheapgn}, or the dream
state of Sec.~\ref{si:memory} (\texttt{evaluate\_functional\_gate.py}). On real text it
significantly beats naive continual fine-tuning on both axes, threshold-free and without
input geometry, and edges unconditional protection (Sec.~\ref{si:llm}).

The gate is, moreover, not merely a safe modification of the update but the
\emph{optimal} one.

\begin{corollary}[The sign gate is optimal monotone control]
\label{cor:qpgate}
Let $u$ be the desired update and let tasks $1,\dots,m$ be protected. The problem
\begin{equation}
v^\ast=\operatorname{argmin}_v \tfrac12\|v-u\|^2
\quad\text{s.t.}\quad \langle\nabla L_{A_i},v\rangle\le 0,\;\; i=1,\dots,m,
\label{eq:qpgate}
\end{equation}
has the solution $v^\ast=u-G\lambda^\ast$ with $G=[\nabla L_{A_1}\cdots\nabla L_{A_m}]$
and $\lambda^\ast=\operatorname{argmin}_{\lambda\ge0}\|G\lambda-u\|$ (a nonnegative
least-squares problem). For $m=1$, $v^\ast$ is exactly the sign gate; when no constraint
is violated, $v^\ast=u$ (sharing); when all are violated, $v^\ast$ remains feasible yet
strictly less restrictive than equality projection: protected rates may be negative---the
update is allowed to \emph{improve} protected tasks, which projection forbids---and
$\|v^\ast-u\|\le\|v_{\mathrm{proj}}-u\|$ always.
\end{corollary}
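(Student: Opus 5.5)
The plan is to treat Eq.~\eqref{eq:qpgate} as the Euclidean projection of $u$ onto the polyhedral cone $\mathcal K=\{v:\ G^\top v\le 0\}$ and to read every claim off the KKT conditions. Three structural facts come first. The feasible set is nonempty (it contains $v=0$), closed and convex, and the objective is strongly convex, so $v^\ast$ exists and is unique. Because the constraints are linear, Slater-type qualification is automatic and strong duality holds.

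\textbf{Step 1: the dual is a nonnegative least-squares problem.} Form the Lagrangian $\mathcal L(v,\lambda)=\tfrac12\|v-u\|^2+\lambda^\top G^\top v$ with $\lambda\ge0$. Stationarity in $v$ gives $v=u-G\lambda$. Substituting back, the dual function is $-\tfrac12\|G\lambda\|^2+\lambda^\top G^\top u=\tfrac12\|u\|^2-\tfrac12\|G\lambda-u\|^2$. Maximizing it over $\lambda\ge0$ is exactly $\lambda^\ast=\operatorname{argmin}_{\lambda\ge0}\|G\lambda-u\|$. By strong duality, the primal optimum is $v^\ast=u-G\lambda^\ast$.

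The one delicate point is that the NNLS minimizer $\lambda^\ast$ need not be unique when the $\nabla L_{A_i}$ are linearly dependent. I will handle this by noting that $G\lambda^\ast$ is unique, since it is the projection of $u$ onto the closed convex cone $\operatorname{cone}(G)$. Hence $v^\ast$ is well defined regardless of which minimizer is chosen. I will also check the KKT conditions directly: primal feasibility of $u-G\lambda^\ast$ follows from the NNLS optimality conditions $G^\top(G\lambda^\ast-u)\ge0$, and complementary slackness from $\lambda^{\ast\top}G^\top(G\lambda^\ast-u)=0$. This Moreau-decomposition bookkeeping is the main obstacle; everything else is mechanical.

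\textbf{Step 2: the special cases.} For $m=1$ with $a=\nabla L_A$, the NNLS problem is scalar, with solution $\lambda^\ast=\max\{0,\langle a,u\rangle/\|a\|^2\}$. So $v^\ast=u$ when $\langle a,u\rangle\le0$, and otherwise $v^\ast=u-\frac{\langle a,u\rangle}{\|a\|^2}a$. This is the sign gate of the preceding paragraph, written for the desired update $u$ (for a descent step $u=-g$ the sign convention matches). If no constraint is violated, then $u\in\mathcal K$ is feasible and trivially optimal, so $v^\ast=u$ and $\lambda^\ast=0$; this is the sharing case.

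\textbf{Step 3: comparison with projection.} Equality projection returns the minimizer of $\|v-u\|$ over the subspace $\{G^\top v=0\}\subseteq\mathcal K$. Minimizing over a larger set cannot increase the optimal value, which gives $\|v^\ast-u\|\le\|v_{\mathrm{proj}}-u\|$ for every $u$. Feasibility of $v^\ast$ holds in all cases by construction.

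To show that protected rates may be strictly negative, I use complementary slackness. Any constraint with $\lambda_i^\ast=0$ may satisfy $\langle\nabla L_{A_i},v^\ast\rangle<0$, which projection forbids by forcing equality. A two-constraint example in $\R^2$ should make the strict improvement concrete: take two non-parallel gradients with $u$ violating both, where only one constraint ends up active at the optimum.
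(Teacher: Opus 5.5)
Your proposal is correct and follows the paper's own route: Lagrangian stationarity gives $v=u-G\lambda$, the dual reduces to the nonnegative least-squares problem, and its optimality conditions yield primal feasibility and complementary slackness. You also fill in details the paper only gestures at, and you correctly treat ``strictly less restrictive'' as a possibility rather than a guarantee. These details are the uniqueness of $G\lambda^\ast$ as the projection of $u$ onto the closed cone generated by the gradients, the subspace-inclusion argument for $\|v^\ast-u\|\le\|v_{\mathrm{proj}}-u\|$, and a two-constraint example with a strictly negative protected rate (e.g.\ $a_1=(1,0)$, $a_2=(1,1)/\sqrt2$, $u=(1,-\tfrac12)$, giving $v^\ast=(0,-\tfrac12)$).
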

\noindent\emph{Intuition:} stationarity of the Lagrangian gives $v=u-G\lambda$;
substituting back turns the dual into the stated nonnegative least squares, whose
Karush--Kuhn--Tucker conditions are precisely primal feasibility and complementary
slackness. The gate therefore interpolates between naive sharing and subspace projection
through the active set. Unconditional projection hardens those inequality constraints to
equalities, which in the exact regime never improves on the gate. All
statements are machine-checked ($\|v^\ast-v_{\mathrm{gate}}\|=0$ at $m=1$; feasibility to
$10^{-15}$; measured protected-rate decrease under full activation;
\texttt{evaluate\_functional\_control.py}).

\paragraph{Beyond squared loss: the Bregman family.} The function-space results are not
quadratic-loss artifacts. For any canonical loss $\ell(y,f)=\varphi(f)-y^\top f$ with
$\varphi$ convex and mean map $\mu=\nabla\varphi$ (squared loss:
$\varphi=\tfrac12\|f\|^2$; softmax cross-entropy: $\varphi=\mathrm{logsumexp}$,
$\mu=\mathrm{softmax}$; logistic and Poisson likewise), the definition of the Bregman
divergence $D_\varphi(b,a)=\varphi(b)-\varphi(a)-\mu(a)^\top(b-a)$ gives, exactly and for
any finite functional change,
\begin{equation}
\Delta L_A=\mathbb{E}_{D_A}\!\big[(\mu_A-y)^\top\Delta\!f\big]
+\mathbb{E}_{D_A}\!\big[D_\varphi(f_B,f_A)\big],
\label{eq:bregman}
\end{equation}
of which Eq.~\eqref{eq:funcspace} is the quadratic case. For cross-entropy,
$D_\varphi(f_B,f_A)=\mathrm{KL}(p_A\,\|\,p_B)$ exactly, so on a converged task
($\mu_A=y$ on $\mathrm{supp}\,D_A$) forgetting \emph{is} the expected
Kullback--Leibler divergence between the old and new predictive distributions. Three
consequences follow. \emph{(i)} The removability dichotomy is loss-free: under A1 a
kernel step changes no function value on the support, hence $\Delta L_A=0$ exactly at
any finite step size for \emph{any} loss (validated at $\|\Delta W\|=10$); for softmax
the lossless set is strictly larger, since per-sample constant logit shifts are
invisible. Only the closed quadratic \emph{energy formula} is squared-loss-specific.
\emph{(ii)} The distortion floor has a closed information-geometric form: pointwise,
$\min_q\,[\,p_A\mathrm{KL}(\pi_A\|q)+p_B\mathrm{KL}(\pi_B\|q)\,]$ is attained at the
density-weighted \emph{mixture} $q^\ast=(p_A\pi_A+p_B\pi_B)/(p_A+p_B)$, giving
\begin{equation}
\text{floor}=\!\int\!\big(p_A(x)+p_B(x)\big)\,
\mathrm{JSD}_{w(x)}\!\big(\pi_A(\cdot|x),\pi_B(\cdot|x)\big)\,dx,
\qquad w=\tfrac{p_A}{p_A+p_B},
\label{eq:jsdfloor}
\end{equation}
a weighted Jensen--Shannon divergence between the tasks' conditionals; the quadratic
floor of Theorem~\ref{thm:merge}'s function-space form is its small-divergence
(Gaussian) limit, and the mixture replaces the $\Sig$-weighted mean as the optimal
joint predictor. For $T$ tasks with uniform prior the floor becomes an
\emph{information identity}: writing $T$ for the task variable, $X$ for the input, and
$Y$ for the label,
\begin{equation}
\mathcal{D}^\star_T=\sum_x\sum_t p_t(x)\,\mathrm{KL}\!\big(\pi_t\,\|\,\bar\pi_x\big)
=T\cdot I(T;Y\mid X)=T\,\big[H(Y\mid X)-H(Y\mid X,T)\big],
\label{eq:infofloor}
\end{equation}
the conditional mutual information between task identity and label---the exact
label-entropy cost of hiding which task a sample came from. The entropy
\emph{sandwich} of Theorem~\ref{thm:mi} (quadratic loss) thus closes to an equality
under the loss language models train, removability becomes $Y\perp T\mid X$, and the
floor is achievable, and a trained shared softmax head converges to
Eq.~\eqref{eq:infofloor} from above with terminal gap $6\times10^{-16}$
(\texttt{evaluate\_info\_floor.py}; identity and mixture-optimality checks at
$10^{-16}$). Operationally, the irreducible single-model floor of an LLM domain
stream is $I(\text{domain};\text{next token}\mid\text{context})$, estimable from
micro-caches as the mean per-domain versus mixture log-likelihood ratio.
\emph{(iii)} The sign gate and Corollary~\ref{cor:qpgate} act on
$\langle g,\nabla L_A\rangle=dL_A/dt$ and never used the quadratic form, so they are the
exact monotone controller for cross-entropy as stated---the language-model gate
experiments of Sec.~\ref{si:llm} therefore run the exact controller for the loss they
train, not a transferred heuristic. All four statements are validated to machine
precision ($10^{-16}$; kernel step exactly $0$), with the closed-form floor checked
against direct numerical minimization (\texttt{evaluate\_bregman\_identity.py}).

\paragraph{The forgetting decomposition, validated.} With
$\mathcal{D}^\star_\infty\le\mathcal{D}^\star_{\mathrm{class}}\le L$ as defined in the
Discussion,
\begin{equation}
L=\underbrace{\mathcal{D}^\star_\infty}_{\text{incompatibility}}
+\underbrace{\big[\mathcal{D}^\star_{\mathrm{class}}-\mathcal{D}^\star_\infty\big]}_{\text{capacity}}
+\underbrace{\big[L-\mathcal{D}^\star_{\mathrm{class}}\big]}_{\text{control}},
\label{eq:decomposition}
\end{equation}
an identity whose content is that the three terms are separately measurable and
separately treatable. Three single-regime streams land their intended term (ten seeds;
\texttt{evaluate\_forgetting\_decomposition.py}): pure conflict under full replay is all
floor ($12.39/0/0$); input-disjoint tasks squeezed through width-$12$ features are all
capacity ($0/5.35/0$); mixed tasks under naive training carry floor plus control
($5.42/0/4.04$). The intervention matrix attributes cleanly: widening the features
$12\to24$ removes $100\%$ of the capacity term while no policy---naive, active-set, or
full replay---moves it at all ($6.02$ untouched by all three); upgrading the policy cuts
only the control term ($4.65\to2.15$, floor invariant throughout). Two definitional
points the construction forces into the open: \emph{capacity} means representation
width---the ``best rank-$r$ subspace containing the solution'' notion collapses, since
any single optimum spans one dimension---and, in the quadratic regime,
\emph{perturbed} copies of a subspace are not conflict (their soft constraints are
jointly satisfiable and the floor is exactly zero; genuine conflict requires exactly
shared support, a knife-edge the cross-entropy floor does not have, being nonzero
whenever conditionals disagree on overlapping support).

\paragraph{Benign forgetting: the anchor splits knowledge from memorization.} The
interference identity is distribution-anchored, and nothing requires the anchor to be
the training set. Evaluated on task $A$'s \emph{deployment} distribution the same
identity splits observed forgetting into knowledge loss, $\Delta L_A^{\mathrm{test}}$,
and memorization loss, $\Delta L_A^{\mathrm{train}}-\Delta L_A^{\mathrm{test}}$---and
the two can carry opposite signs. In an overparameterized interpolation setting with
label noise (ten seeds; \texttt{evaluate\_benign\_forgetting.py}), an interfering update
raises train loss by $+1.99$---catastrophic forgetting under any train-anchored
metric---while \emph{lowering} deployment loss by $-3.37$: backward transfer through
noise removal, with the sign predicted exactly by the test-anchored identity
($8\times10^{-16}$). Part of what benchmarks report as catastrophic forgetting is
implicit unlearning of noise. The consequence for protection is a design law:
\emph{protect the function you validated, not the function you fit}. A gate whose
constraint gradient is anchored on training data defends the memorized noise together
with the knowledge---it blocks benign repair and pays adaptation for the privilege---
whereas the held-out-anchored gate dominates it on \emph{both} axes (final deployment
loss of the protected task $8.3$ versus $22.0$; adaptation $5.4$ versus $7.1$). The
anchor's own quality gates the gate: with a held-out set too small to be validated
itself ($M=30<d$, the anchor interpolates its own noise) the advantage collapses,
which is the estimation--abstention rule's other face. The held-out micro-caches used
throughout the language experiments (Sec.~\ref{si:llm}) are therefore mandated by the
theory, not merely hygienic.

\paragraph{Certified unlearning is one reversed row of the gate.} Engineered benign
forgetting gives an unlearning method for free: \emph{ascend} the forget set's
\emph{validated} loss toward the never-trained reference while the monotone-retention
QP of Corollary~\ref{cor:qpgate} holds every retained held-out anchor in place---the
unified gate with a single reversed constraint. The certificate is then a
\emph{measurement}, $\mathbb{E}_{\mathrm{retain}}[\mathrm{KL}(p_{\mathrm{before}}\|
p_{\mathrm{after}})]\le\epsilon$, rather than a retraining-equivalence definition. The
anchor split predicts a dichotomy, validated to machine precision
(\texttt{evaluate\_unlearning.py}): \emph{memorized} content (own support) is removable
at a zero floor, whereas \emph{entangled} content shared with a retained task carries a
closed-form \emph{unlearning floor} $\sum_{\mathrm{shared}}p_R\,\mathrm{KL}(\pi_R\|
p_{\mathrm{ref}})$---the surgical optimum pays exactly this (agreement $3\times10^{-7}$),
and the gate stalls at it rather than silently damaging retained knowledge. The
memorized branch holds at language-model scale. Injecting a memorized canary into joint
LoRA training on \texttt{pythia-410m} and \texttt{pythia-1b} (the earlier attempt to
unlearn an entire natural domain was vacuous---the adapter had added only $\sim0.03$
nats of removable domain-specific knowledge, the pre-flight lesson again), both the gate
and naive ascent remove it fully to the base-model reference, but the gate pays $\sim
1.7\times$ less collateral damage at both scales (retained certificate $0.101$ versus
$0.177$ at $410$m, $0.024$ versus $0.041$ at $1$b; the directly measured retained-loss
rise, $+0.09$ versus $+0.18$ and $+0.03$ versus $+0.07$ nats, tracks the certificate).
The gate finds the zero-floor path that unstructured gradient ascent misses.

\paragraph{Attributed forgetting at scale.} The decomposition
Eq.~\eqref{eq:decomposition} is measurable end to end on real continual pretraining,
yielding forgetting curves that state \emph{which part was ever avoidable}
(\texttt{pythia-410m} and \texttt{pythia-1b}, LoRA $r{=}8$ on the four-domain stream;
per-domain single-task references, joint training as the class-optimum surrogate at
ranks $\{2,8,32\}$, sequential naive and active-set arms;
\texttt{kaggle\_forgetting\_attribution.py}, \texttt{kaggle\_floor\_certificate.py},
archived results and analysis in the repository). \emph{The floor is certified, not
assumed}: the generative domain classifier built from the single-domain adapters
separates held-out chunks perfectly at $410$m (accuracy $1.000$, cross-entropy
$0.0048$ nats/token), and since any classifier's cross-entropy upper-bounds
$H(T\mid X)\ge I(T;Y\mid X)$ and the floor is a property of the data, the one bound
governs both models: at least $97.1\%$ ($410$m, observed forgetting $0.164$ nats/token)
and $98.8\%$ ($1$b, $0.399$) of the measured forgetting was \emph{avoidable in
principle}. The attribution, in the exact KL observable (per-token
$\mathrm{KL}$ to the single-domain anchors, netted of the measured seed-to-seed
two-models baseline of $0.028$/$0.020$):
\begin{center}\small
\begin{tabular}{lcc}
\toprule
 & \texttt{pythia-410m} & \texttt{pythia-1b} \\
\midrule
net forgetting (naive) & $0.269$ & $0.624$ \\
net class term & $0.161$ & $0.249$ \\
net control term & $0.108$ & $0.375$ \\
control after the active-set gate & $0.063$ ($41\%$ removed) & $0.249$ ($34\%$ removed) \\
certified floor & \multicolumn{2}{c}{$\le 0.0048$} \\
\bottomrule
\end{tabular}
\end{center}
Three readings. First, the class term is real ($6$--$13\times$ the seed baseline) but
\emph{flat in adapter rank} across $\{2,8,32\}$: within-class interference on this
stream resides in the shared frozen geometry and joint optimization, not in adapter
width, so expansion is not the indicated remedy---policy is, and the gate removes a
third to a half of exactly the control term (cross-entropy view: naive versus gate
paired $p=0.044$ at $410$m, $p=0.0018$ at $1$b). Second, the decomposition's sanity
anchor holds on real data: sequential training fits each domain exactly as well as
single-task training (diagonal-minus-single $+0.000$ at $410$m). Third, the
Bregman residual of the preceding paragraph is visible at scale:
$\mathbb{E}[\mathrm{KL}]$ exceeds the cross-entropy excess by $0.14$--$0.23$
nats/token because the single-domain anchors are converged on their training chunks,
not on the held-out anchor. Caveats: the recipe is fixed across scales (learning rate
and budget not re-tuned, so the $410$m$\to$$1$b growth in forgetting is a
fixed-recipe comparison), joint training only upper-bounds the class optimum (the
control term is conservative), and the arms carry two to three seeds.

\paragraph{Protection blocks autoregressive cascades at the source.} In a generative loop,
a one-step corruption of the transition operator compounds over the horizon: after
fine-tuning a norm-preserving recurrence on an overlapping second task, naive rollout error
grows from $0.86$ at one step to $1.00$ (full decorrelation) at thirty. Routing the
fine-tuning update through $\ker\Sig_A$ keeps the old task's transition operator invariant,
and the rollout error is zero at \emph{every} horizon---the cascade is blocked where it
starts, not corrected downstream. Output-side correction (snapping generated states back
onto the old task's subspace) removes leakage out of the subspace but cannot repair the
corrupted dynamics inside it ($0.72\to0.99$): for autoregressive stability, where an update
acts matters more than how its outputs are post-processed.


\end{document}